\newcommand{\cmark}{\ding{51}}%
\newcommand{\xmark}{\ding{55}}%
\newtheorem{Lemma}{Lemma}
\newtheorem{Theorem}{Theorem}
\newtheorem{Assumption}{Assumption}
\newtheorem{Remark}{Remark}
\newcommand{\E}{\mathbb{E}}
\newcommand{\red}[1]{{\color{red} #1}}
\newcommand{\blue}[1]{{\color{blue} #1}}
\newcommand{\gray}[1]{{\color{gray} #1}}
\newcommand{\yellow}[1]{{\color{yellow} #1}}
\titlespacing\subsection{0pt}{12pt plus 4pt minus 2pt}{0pt plus 2pt minus 2pt}
\renewcommand\footnotemark{}
\renewenvironment{abstract}%
{%
  \centerline%
  {\large\bf Abstract}%
  \vspace{0.5ex}%
  \begin{quote}%
}
{
  \par%
  \end{quote}%
  \vskip 1ex%
}
\title{Mitigating Gradient Bias in Multi-objective Learning:\\ A Provably Convergent Stochastic Approach}
\author[*]{Heshan Fernnado}
\author[*]{Han Shen}
\author[$\dag$]{Miao Liu}
\author[$\dag$]{Subhajit Chaudhury}
\author[$\dag$]{Keerthiram Murugesan}
\author[*]{Tianyi Chen\thanks{This work was supported by the Rensselaer-IBM AI Research Collaboration (\url{http://airc.rpi.edu}), part of the IBM AI Horizons Network (\url{http://ibm.biz/AIHorizons}). Correspondence to: Heshan Fernnado (\href{mailto:fernah@rpi.edu}{fernah@rpi.edu})}}
\affil[*]{Rensselaer Polytechnic Institute, New York, United States}
\affil[$\dag$]{IBM Research, New York, United States}
\date{}
\begin{document}

\maketitle

\begin{abstract}
Machine learning problems with multiple objective functions appear either in learning with multiple criteria where learning has to make a trade-off between multiple performance metrics such as fairness, safety and accuracy; or, in multi-task learning where multiple tasks are optimized jointly, sharing inductive bias between them. This problems are often tackled by the multi-objective optimization framework. However, existing stochastic multi-objective gradient methods and its variants (e.g., MGDA, PCGrad, CAGrad, etc.) all adopt a biased noisy gradient direction, which leads to degraded empirical performance. 
To this end, we develop a stochastic {\sf M}ulti-objective gradient {\sf C}orrection ({\sf MoCo}) method for multi-objective optimization. The unique feature of our method is that it can guarantee convergence without increasing the batch size even in the nonconvex setting. Simulations on multi-task supervised and reinforcement learning demonstrate the effectiveness of our method relative to   state-of-the-art methods. 
\end{abstract}

\section{Introduction}\label{sec:introduction}
 
        Multi-objective optimization (MOO) involves optimizing multiple, potentially conflicting objectives simultaneously. Recently, MOO has gained attention in various application settings such as optimizing hydrocarbon production \citep{you2020development}, tissue engineering \citep{shi2019multi}, safe reinforecement learning \citep{thomas2021multi}, and training neural networks for multiple tasks \citep{sener2018multi}. We consider the stochastic MOO problem as 
            \begin{equation}
                \min\limits_{x\in \mathbb{R}^d} F(x) \coloneqq \left(\E_\xi[f_1(x, \xi)], \E_\xi[f_2(x, \xi)], \dots, \E_\xi[f_M(x, \xi)] \right) \label{eq:prob-form}
            \end{equation} 
        where $d$ is the dimension of the parameter $x$, and $f_m:\mathbb{R}^d\mapsto\mathbb{R}$ with $f_m(x) := \E_\xi[f_m(x, \xi)]$ for $m\in[M]$.  Here we denote $[M] := \{1, 2, \dots, M\}$ and denote $\xi$ as a random variable. In this setting, we are interested in optimizing all of the objective functions simultaneously without sacrificing any individual objective. Since we cannot always hope to find a common variable $x$ that achieves optima for all functions simultaneously, a natural solution instead is to find the so-termed \emph{Pareto stationary point} $x$ that cannot be further improved for all objectives without sacrificing some objectives. In this context, multiple gradient descent algorithm (MGDA) has been developed for achieving this goal \citep{Desideri2012mgda}. The idea of MGDA is to iteratively update the variable $x$ via a common descent direction for all the objectives through a time-varying convex combination of gradients from individual objectives. 
        Recently, different MGDA-based MOO algorithms have been proposed, especially for solving multi-task learning (MTL) problems \citep{sener2018multi, chen2018gradnorm, yu2020gradient, liu2021conflict}. 
        
        While the deterministic MGDA algorithm and its variants are well understood in literature, only little theoretical study has been taken on its stochastic counterpart. Recently, \citep{liu2021stochastic} has introduced the stochastic multi-gradient (SMG) method as a stochastic counterpart of MGDA (see Section \ref{sec:smg} for details). 
        To establish convergence however, \citep{liu2021stochastic} requires the strong assumption on the fast decaying first moment of the gradient, which was enforced by linearly growing the batch size. 
        While this allows for analysis of multi-objective optimization in stochastic setting, it may not be true for many MTL tasks in practice. Furthermore, the analysis in \citep{liu2021stochastic} cannot cover the important setting with non-convex multiple objectives, which is prevalent in challenging MTL tasks. This leads us to a natural question: 
        \begin{center}
                {\em Can we design a stochastic MOO algorithm that provably converges to a Pareto stationary point without growing batch size and also in the nonconvex setting?}
        \end{center}

        \begin{figure}[t]
             \centering
             \begin{subfigure}[b]{0.27\textwidth}
                 \centering
                 \includegraphics[width=\textwidth]{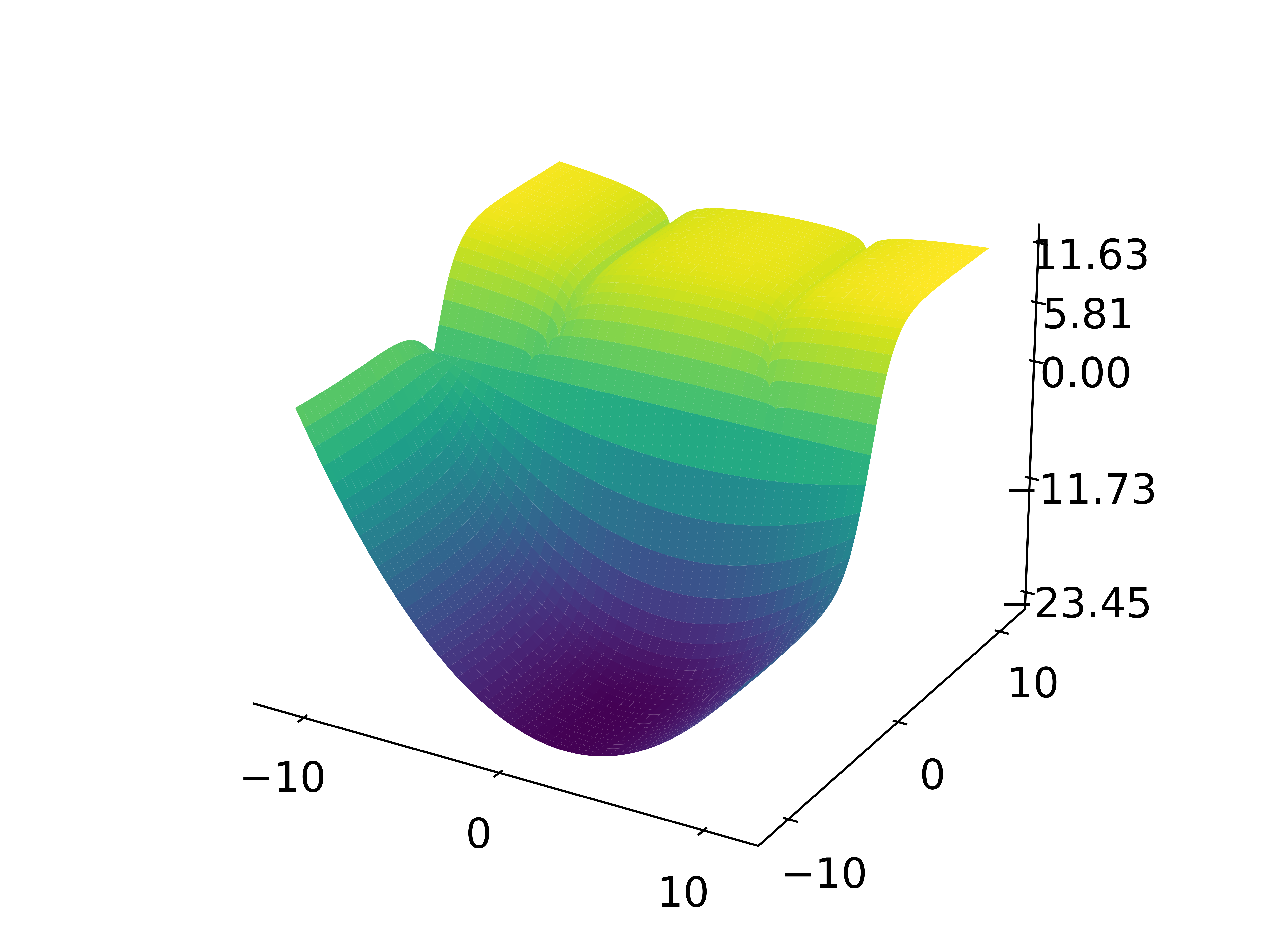}
                 \caption{Mean objective}
                 \label{fig:grad-3d-obj}
             \end{subfigure}
             \centering
             \begin{subfigure}[b]{0.25\textwidth}
                 \centering
                 \includegraphics[width=\textwidth]{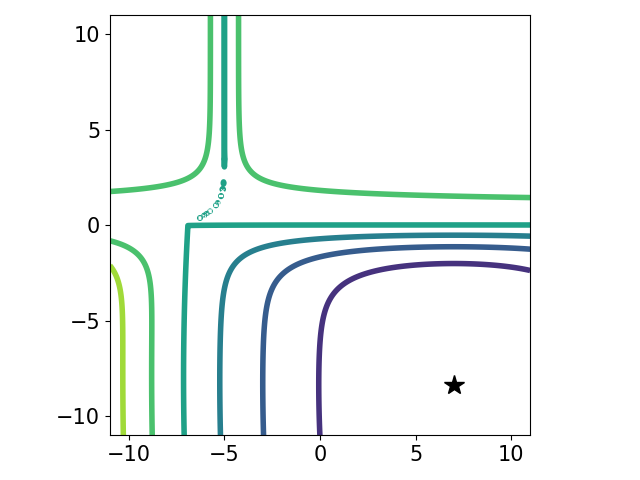}
                 \caption{Objective 1}
                 \label{fig:grad-task-1}
             \end{subfigure}
             \centering
             \begin{subfigure}[b]{0.25\textwidth}
                 \centering
                 \includegraphics[width=\textwidth]{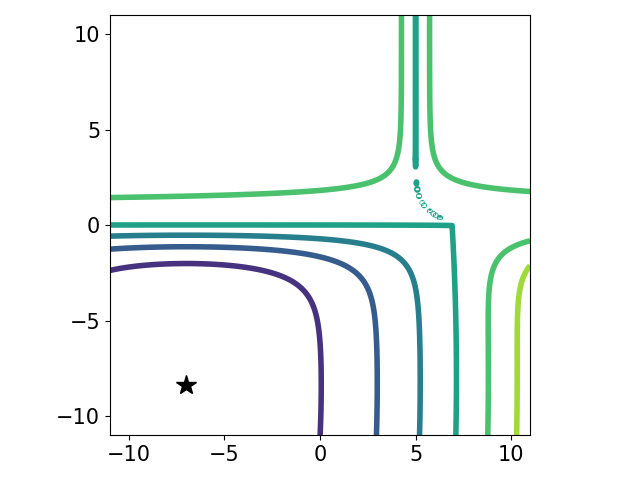}
                 \caption{Objective 2}
                 \label{fig:grad-task-2}
             \end{subfigure}\\[5pt]
             \centering
             \begin{subfigure}[b]{0.19\textwidth}
                 \centering
                 \includegraphics[width=\textwidth]{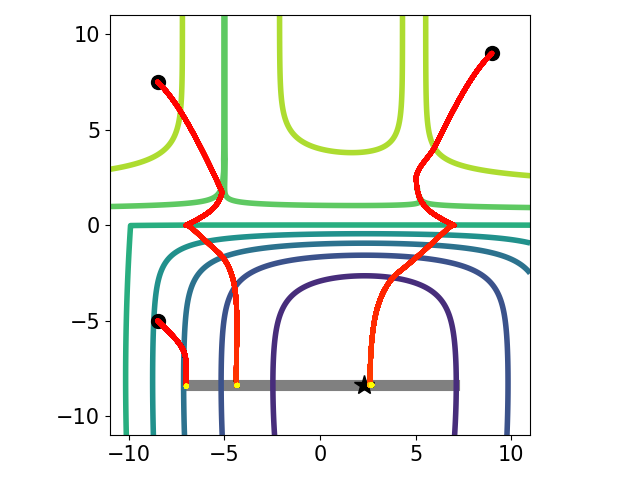}
                 \caption{MGDA}
                 \label{fig:toy-mgda}
             \end{subfigure}
             \hfill
             \begin{subfigure}[b]{0.19\textwidth}
                 \centering
                 \includegraphics[width=\textwidth]{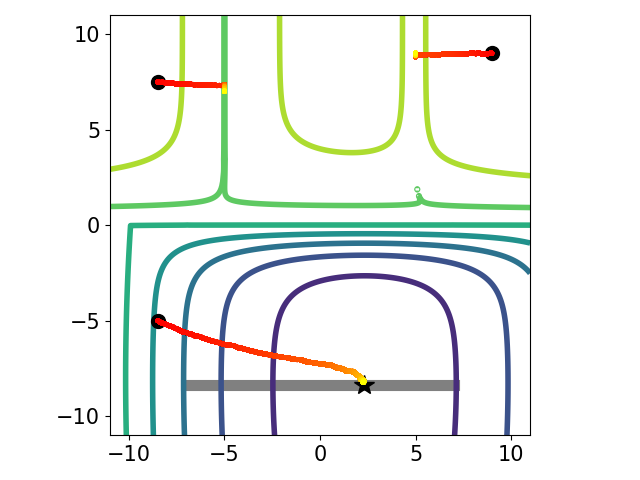}
                 \caption{SMG}
                 \label{fig:toy-smgd}
             \end{subfigure}
             \hfill
             \begin{subfigure}[b]{0.19\textwidth}
                 \centering
                 \includegraphics[width=\textwidth]{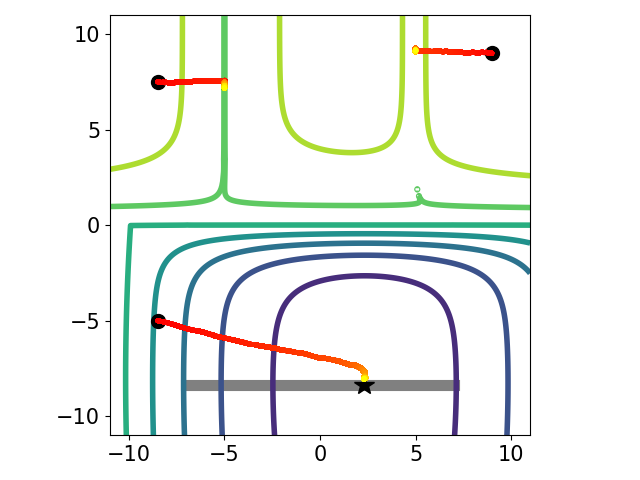}
                 \caption{PCGrad}
                 \label{fig:toy-pcgrad}
             \end{subfigure}
             \hfill
             \begin{subfigure}[b]{0.19\textwidth}
                 \centering
                 \includegraphics[width=\textwidth]{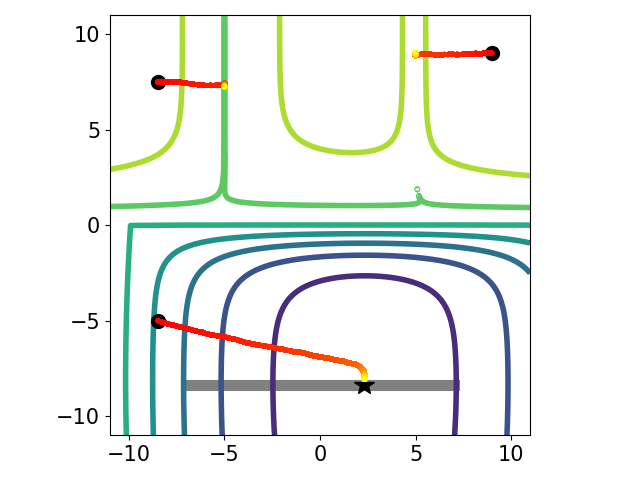}
                 \caption{CAGrad}
                 \label{fig:toy-cagrad}
             \end{subfigure}
             \hfill
             \begin{subfigure}[b]{0.19\textwidth}
                 \centering
                 \includegraphics[width=\textwidth]{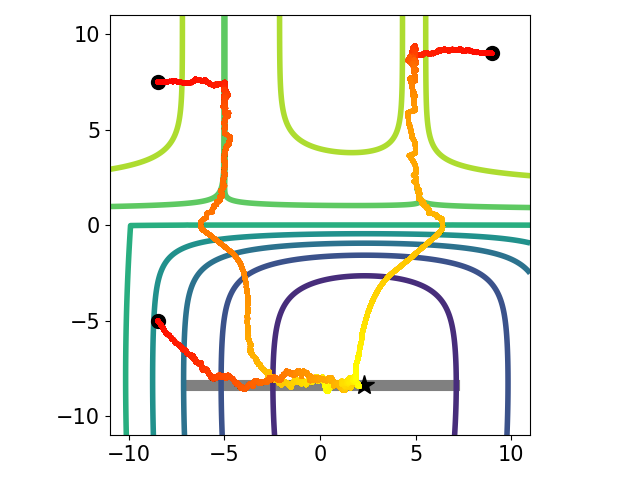}
                 \caption{MoCo (ours)}
                 \label{fig:toy-tracking}
             \end{subfigure}
                \caption{A toy example from \citep{liu2021conflict} with two objective (Figures \ref{fig:grad-task-1} and \ref{fig:grad-task-2}) to show the impact of gradient bias. We use the mean objective as a reference when plotting the trajectories corresponding to each initialization (3 initializations in total). The starting points of the trajectories are denoted by a {\bf black} $\bullet$, and the trajectories are shown fading from \red{\bf red} (start) to \yellow{\bf yellow} (end). The Pareto front is given by the \gray{\bf gray} bar, and the {\bf black} $\star$ denotes the point in the Pareto front corresponding to equal weights to each objective. We implement recent MOO algorithms such as SMG \citep{liu2021stochastic}, PCGrad \citep{yu2020gradient}, and CAGrad \citep{liu2021conflict}, and MGDA \citep{Desideri2012mgda} alongside our method. Except for MGDA (Figure \ref{fig:toy-mgda}) all the other algorithms only have access to gradients of each objective with added zero mean Gaussian noise. It can be observed that SMG, CAGrad, and PCGrad fail to find the Pareto front in some initializations.}
                \label{fig:toy-comp}
        \end{figure}

\begin{table}[tb]
    \small
    \centering
    \begin{tabular}{c| c  |c |c |c|c|c}
      \hline\hline
         Method & \makecell{Batch size} & \makecell{Non-convex}   & \makecell{Assuming Lipschitz \\ continuity of  $\lambda^*(x)$} & \makecell{Bounded \\ functions} & \makecell{Allow biased \\  gradient} & \makecell{Sample \\ complexity}\\
         \hline\hline
         \makecell{SMG \\ \!\!\!\citep{liu2021stochastic}\!\!\!} & $\mathcal{O}\left(\epsilon^{-2}\right)$ & \xmark &   \cmark & \xmark & \xmark & $\mathcal{O}\left(\epsilon^{-4}\right)$\\
         \hline
         \makecell{MoCo \\ (Theorem \ref{theorem:upper})} & $\mathcal{O}(1)$ &   \cmark & \xmark & \xmark & \cmark & $\mathcal{O}\left(\epsilon^{-12}\right)$\\
         \hline
         \makecell{MoCo \\ (Theorem \ref{theorem:upper-fbound})} & $\mathcal{O}(1)$ &  \cmark & \xmark & \cmark & \cmark & $\mathcal{O}\left(\epsilon^{-2.5}\right)$\\
         \hline
         \makecell{MoCo \\ (Theorem \ref{theorem:upper-fbound-stronger})} & $\mathcal{O}(1)$ &   \cmark & \xmark & \cmark & \cmark & $\mathcal{O}\left(\epsilon^{-2}\right)$ \\
        \hline\hline
    \end{tabular}
    \caption{Comparison of MoCo with prior work on gradient based stochastic MOO, stochastic multi-gradient method (SMG). Here the ``Batch size'' column represents the number of samples used at each (outer level) iteration, ``Non-convex'' column denotes whether the analysis is valid for non-convex functions, ``Lipschitz of $\lambda^*(x)$'' column denotes whether Lipschitz continuity of $\lambda^*(x)$ (see Remark \ref{remark:lambda}) with respect to $x$ was assumed,  ``Bounded Function'' column denotes whether boundedness of function values was assumed, ``Biased stochastic gradient'' column denote whether the analysis allows for bias in the stochastic gradient of the functions, and ``Sample complexity'' column provides the (outer level) sample complexity of the corresponding method.}
    \label{tab:result-comp-summ}
\end{table}

        \textbf{Our contributions.} 
        In this paper, we answer this question affirmatively by providing the first stochastic MOO algorithm that provably converges to a Pareto stationary point without growing batch size. Specifically, we make the following major contributions:
        \begin{enumerate}
            \item [\bf C1)] \textbf{(Asymptotically unbiased multi-gradient).} We introduce a new method for MOO that we call stochastic Multi-objective gradient with Correction (MoCo) method. MoCo is a simple algorithm that addresses the convergence issues of stochastic MGDA and provably converges to a Pareto stationary point under several stochastic MOO settings. We use a toy example in Figure \ref{fig:toy-comp} to demonstrate the empirical benefit of our method. In this example, 
           MoCo is able to reach the Pareto front from all initializations, while other MOO algorithms such as SMG, CAGrad, and PCGrad fail to find the Pareto front due to using biased multi-gradient.
       
            \item [\bf C2)] \textbf{(Unified non-asymptotic analysis).} We generalize our MoCo method to the case where the individual objective function has a nested structure and thus obtaining unbiased stochastic gradients is costly. We provide a unified convergence analysis of the nested MoCo algorithm in smooth non-convex MOO setting. To our best knowledge, this is the first analysis of smooth non-convex stochastic gradient based MOO. A comparison of our results with the closest prior work can be found in Table \ref{tab:result-comp-summ}. 
            \item [\bf C3)] \textbf{(Experiments on MTL applications).} We provide an empirical evaluation of our method with existing state of the art MTL algorithms in supervised learning and reinforcement learning settings, and show that our method can outperform prior methods such as stochastic MGDA, PCGrad, CAGrad, GradDrop. For evaluation  in the  multi-task supervised learning setting we use cityscapes and NYU-v2 datasets.
        \end{enumerate}
        
        \textbf{Organization.} 
        In Section \ref{sec:background}, we provide the concepts and background related to our proposed method. In Section \ref{sec:method}, we introduce our algorithm and provide the convergence analysis. In section \ref{sec:related-work}, we discuss the prior work. In Section \ref{sec:experiments} we provide an empirical evaluation of our method using multi-task learning benchmarks, followed by the conclusions in Section \ref{sec:conclusion}.

\section{Background}\label{sec:background}

    In this section, we introduce the concepts of Pareto optimality and Pareto stationarity, and then discuss the MGDA and its existing stochastic counterpart. We then motivate our proposed method by elaborating the issues in the SMG method for stochastic MOO.

        \subsection{Pareto optimality and Pareto stationarity}
        We are interested in finding the points which can not be improved simultaneously for the objective functions, leading to the notion of {\em Pareto optimality}. 
         Consider two points $x, x'\in\mathbb{R}^d$. We say that $x$ dominates $x'$ if $f_m(x)\leq f_m(x') $ for all $ m \in [M] $, and $ F(x) \neq F(x') $. If a point $x^*\in\mathbb{R}^d$ is not dominated by any $x\in\mathbb{R}^d$, we say $x^*$ is Pareto optimal. 
         The collection of all the Pareto optimal points is called as the Pareto set. The collection of vector objective values $F(x^*)$ for all   Pareto optimal points $x^*$ is called as the Pareto front.
         
         Akin to the single objective case, a necessary condition for Pareto optimality is \emph{Pareto stationarity}. If $x$ is a Pareto stationary point, then there is no common descent direction for all $f_m(x)$ at $x$. Formally, $x$ is a called a Pareto stationary point if $\text{range}( \nabla F(x)^\top ) \cap (-\mathbb{R}^M_{++}) = \emptyset $ where $\nabla F(x)\in \mathbb{R}^{d \times M}$ is the Jacobian of $F(x)$, i.e. $\nabla F(x) \coloneqq \left( \nabla f_1(x), \nabla f_2(x), \dots, \nabla f_M(x) \right)$, and $\mathbb{R}^M_{++}$ is the positive orthant cone. When all $f_m(x)$ are strongly convex, a Pareto stationary point is also Pareto optimal.  

        \subsection{Multiple Gradient Descent Algorithm} \label{sec:mgda}
    The MGDA algorithm has been proposed in \citep{Desideri2012mgda} that can converge to a Pareto stationary point of $F(x)$.
       MGDA achieves this by seeking a convex combination of individual gradients $\nabla f_m(x)$ (also known as the multi-gradient), given by $d(x) = \sum_{m=1}^M \lambda_m^*(x) \nabla f_m(x)$ where the weights $\lambda^*(x) \coloneqq (\lambda_1^*(x),...,\lambda_M^*(x))^\top$ are found by solving the following sub-problem:
        \begin{align}
            \lambda^*(x) \in \arg\min_{\lambda} ~ \|\nabla F(x) \lambda\|^2~~~{\rm s.~t.}~~~\lambda\in\Delta^M \coloneqq \{\lambda \in \mathbb{R}^M~|~ \mathbf{1}^\top \lambda = 1,~\lambda \geq 0\}. \label{eq:lambd-det-prob}
        \end{align}
        With this multi-gradient $d(x)$, the $k$th iteration of MGDA is given by 
\begin{equation}\label{eq:mgda}
    x_{k+1} = x_k - \alpha_k d(x_k)~~~{\rm with}~~~d(x_k) = \sum_{m=1}^M \lambda_m^*(x_k) \nabla f_m(x_k)
\end{equation}
        where $\alpha_k$ is the learning rate. It can be shown that the MGDA algorithm optimizes all  objectives simultaneously following the  direction $-d(x)$ whenever $x$ is not a Pareto stationary point, and will terminate once it reaches a Pareto stationary point \citep{fliege2019complexity}.
        
        However, in many real world applications we either do not have access to the true gradient of functions $f_m$ or obtaining the true gradients is prohibitively expensive in terms of computation. This leads us to a possible stochastic counterpart of MGDA, which is discussed next.

\subsection{Stochastic multi-objective gradient and its brittleness} \label{sec:smg} 

        The stochastic counterpart of MGDA, referred to as SMG algorithm, has been studied in \citep{liu2021stochastic}. In SMG algorithm, the stochastic multi-gradient is obtained by replacing the true gradients $\nabla f_m(x)$ in (\ref{eq:lambd-det-prob}) with their stochastic approximations $\nabla f_m(x, \xi)$, where $\nabla f_m(x) = \mathbb{E}_\xi \left[\nabla f_m(x, \xi)\right]$. Specifically, the stochastic multi-gradient is given by 
        \begin{equation}
     g(x, \xi) = \sum_{m=1}^{M} \lambda^g_m(x, \xi) \nabla f_m(x, \xi)~~~{\rm with}~~~        \lambda^g(x, \xi) \in \arg\!\min\limits_{ \lambda\in\Delta^M} \left\Vert \sum_{m=1}^{M} \lambda_m\nabla f_m(x, \xi) \right\Vert^2. \label{eq:lambd-stoch-prob}
        \end{equation}
        
        While this change of the subproblem facilitates use of stochastic gradients in place of deterministic gradients, it raise issues in the biasedness in the stochastic multi-gradient calculated in this method. 

{\em The bias of SMG.}
To better understand the cause of this bias, consider the case ($M=2$) of \eqref{eq:lambd-stoch-prob} for simplicity. We can rewrite the problem for solving for convex combination weights as
\begin{equation}
    \arg\!\min\limits_{\lambda\in[0, 1]}\left\Vert \lambda\nabla f_1(x, \xi) +(1-\lambda)\nabla f_2(x, \xi) \right\Vert^2
\end{equation}
which admits the closed-form solution for $\lambda$  as
\begin{equation}
   \lambda^g(x, \xi)= \left[ \frac{\left(\nabla f_2(x, \xi) - \nabla f_1(x, \xi)\right)^\top \nabla f_2(x, \xi)}{\left\Vert \nabla f_1(x, \xi) - \nabla f_2(x, \xi) \right\Vert^2} \right]_{+, ^1_{\intercal}} \label{eq:stoch-lambda-bias}
\end{equation}
where $[x]_{+, ^1_{\intercal}} = \max(\min(x, 1), 0)$. It can be seen that the solution for $\lambda$ is non-linear in $\nabla f_1(x, \xi)$ and $\nabla f_2(x, \xi)$, which suggests that $\mathbb{E}[\lambda^g(x, \xi)]\neq \lambda^*(x)$ and thus $\mathbb{E}[g(x, \xi)]\neq d(x)$.

To ensure convergence, a recent approach proposed to replace the stochastic gradient $\nabla f_m(x, \xi)$ with its mini-batch version with the batch size growing with the number of iterations \citep{liu2021stochastic}. 
However, this may not be desirable in practice and often leads to sample inefficiency. In multi-objective reinforcement learning settings, this means running increasingly many number of roll-outs for policy gradient calculation, which may be infeasible. 


\section{Stochastic Multi-objective Gradient Descent With Correction}\label{sec:method}

In this section, we will first propose a new stochastic update that addresses the biased multi-gradient in MOO, extend it to the generic MOO setting, and then establish its convergence result.

\subsection{A basic algorithmic framework}\label{sec.basic-moco}

We start by discussing how to obtain $\nabla f_m(x)$ without incurring the bias issue. The key idea is to approximate true gradients of each objective using a `tracking variable', and use these approximations in finding optimal convex combination coefficients, similar to MGDA and SMG. 
At each iteration $k$, assuming we have access to $h_{k,m}$ which is a stochastic estimator of $\nabla f_m(x_k)$ (e.g., $h_{k,m}=\nabla f_m(x_k,\xi_k)$).
We obtain $\nabla f_m(x_k)$ by iteratively updating the `tracking' variable $y_{k,m} \in \mathbb{R}^{d}$ by
\begin{equation}\label{eq:y update}
    y_{k+1,m} = \Pi_{\mathcal{Y}_m}\Big( y_{k,m} - \beta_k \big( y_{k,m} -  h_{k,m} \big) \Big), ~ ~~m=1,2,\cdots,M, 
\end{equation}
where $k\in 0, 1, 2 \dots$ is the iteration index, $\beta_k$ is the step size and $\Pi_{\mathcal{Y}_m}$ denotes the projection to set $\mathcal{Y}_m = \{y\in\mathbb{R}^d~|~\|y\|\leq C_{y,m}\}$, for some constant $C_{y,m} > 0$. When the function $f_m$ is Lipschitz continuous with $L_m$, we can choose $C_{y,m} = L_m$ so that $C_{y,m} = \sup \|y_{k, m}\|=\sup \| \nabla f_m(x_k) \|$, for all $k$.

Under some assumptions on the stochastic gradients $h_{k, m}$ that will be specified in Section \ref{sec:convg-analysis}, we can show that for a given $x_k$, the recursion in \eqref{eq:y update} admits a unique fixed-point $y_m^*(x_k)$ that satisfies 
\begin{equation}
    y_m^*(x_k) = \E[ h_{k,m}]=\nabla f_m(x_k).
\end{equation}
In this subsection we will first assume that $h_{k,m}$ is an unbiased estimator of $\nabla f_m(x_k)$, and will generalize to the biased estimator in the next subsection. In this case, with only one sample needed at each iteration, the distance between $y_{m,k}$ and $\nabla f_m(x_k)$ is expected to diminish as the number of iteration increases. 

\begin{algorithm}[t]
	\caption{MoCo: Stochastic Multi-objective gradient with Correction }\label{alg:MoCo} 
	\begin{algorithmic}[1]
	    \State \textbf{Input} Initial model parameter $x_0$, tracking parameters $\{y_{0, i}\}_{m=1}^M$, convex combination coefficient parameter $\lambda_0$, and their respective learning rates $\{\alpha_k\}_{k=0}^{K}$,  and $\{\beta_k\}_{k=0}^{K}$, $\{\gamma_k\}_{k=0}^{K}$.
	    \For {$k=0, \dots, K-1$}
	        \For {objective $m=1, \dots, M$}
	            \State Obtain gradient estimator $h_{m,k}$ \Comment{either $h_{m,k}=\nabla f_m(x_k,\xi_k)$ or $h_{m,k}$ in \eqref{eq:z update-new}-\eqref{eq:z update2}}
	            \State Update $y_{k+1,m}$ following \eqref{eq:y update}
	        \EndFor
	    \State Update $\lambda_{k+1}$ and $x_{k+1}$ following  \eqref{eq:lambda update}-\eqref{eq:x update}
	    \EndFor
	    \State \textbf{Output} $x_K$
	\end{algorithmic} 
\end{algorithm}

Even with an accurate estimate of $\nabla f_m(x)$, solving \eqref{eq:prob-form} is still not easy since these gradients could conflict with each other. As described in Section \ref{sec:mgda}, given $x \in \mathbb{R}^d$, the MGDA algorithm finds the optimal scalars, denoted as $\{\lambda_m^*(x)\}_{m=1}^M$, to scale each gradient $\nabla f_m(x)$ such that $d(x) = \sum_{m=1}^M \lambda_m^*(x) \nabla f_m(x)$, and $-d(x)$ is a common descent direction for every $f_m(x)$. 
For obtaining the corresponding convex combinations when we do not have access to the true gradient, we propose to use $Y_k \coloneqq (y_{k,1},...,y_{k,M}) \in \mathbb{R}^{d \times M}$ as an approximation of $\nabla F(x_k)$. In general, the solution for \eqref{eq:lambd-det-prob} is not necessarily unique. Thus, we overcome this issue by adding $\ell_2$ regularization. Specifically, the new subproblem is given by
\begin{align}
    \lambda_\rho^*(x) = \arg\min_{\lambda} ~ \|\nabla F(x) \lambda\|^2 + \frac{\rho}{2}\|\lambda\|^2~~~{\rm s.~t.}~~~\lambda\in\Delta^M \coloneqq \{\lambda \in \mathbb{R}^M~|~ \mathbf{1}^\top \lambda = 1,~\lambda \geq 0\}, \label{eq:lambd-det-prob-rho}
\end{align}
where $\rho>0$ is a regularization constant. 
\begin{Remark}[On the Lipschitz continuity of $\lambda^*_\rho(x)$]\label{remark:lambda}
    Since the subproblems \eqref{eq:lambd-det-prob} and \eqref{eq:lambd-stoch-prob} depend on $x$, the subproblems change at each iteration. To analyse the convergence of the algorithm, it is important to quantify the change of solutions $\lambda^*(x), \lambda^g(x, \xi)$. One natural way of ensuring this is assuming the aforementioned solutions are Lipschitz continuous in $x$ (or equivalently in $\nabla F(x)$, if $~\nabla F(x)$ is Lipschitz in $x$); see \citep{liu2021stochastic}. However, this condition does not hold since  $\nabla F(x)$ is not positive definite and thus the solution to the subproblems \eqref{eq:lambd-det-prob} and \eqref{eq:lambd-stoch-prob} are not unique. We overcome this issue by adding the regularization   $\rho$ to ensure uniqueness of the solution of subproblem \eqref{eq:lambd-det-prob-rho} and the Lipschitz continuity of $\lambda_{\rho}^*(x)$ in $x$.
\end{Remark}

With this regularized reformulation, we find $\lambda^*_\rho(x)$ by the stochastic projected gradient descent of (\ref{eq:lambd-det-prob-rho}), given by
\begin{equation}\label{eq:lambda update}
    \lambda_{k+1} = \Pi_{\Delta^M}\left(\lambda_k - \gamma_k \left(Y_k^\top Y_k + \rho I \right) \lambda_k\right), 
\end{equation}
where $\gamma_k$ is the step size, $I\in\mathbb{R}^{M\times M}$ is the identity matrix, and  $\Pi_{\Delta^M}$ denotes the projection to the probability simplex $\Delta^M$. With $\lambda_k$ as an approximation of $\lambda^*_\rho(x_k)$ and $Y_k$ as an approximation of $\nabla F(x_k)$, we then update
\begin{equation}\label{eq:x update}
    x_{k+1} = x_k - \alpha_k Y_k \lambda_k,
\end{equation}
where $a_k$ is the stepsize. We have summarized the basic MoCo algorithm in Algorithm \ref{alg:MoCo}. 

\subsection{Generalization to nested MOO setting}\label{sec:moco-inexact}

In the previous section, we have introduced the gradient estimator $h_{k,m}$. In the simple case where $\nabla f_m(x,\xi)$ is obtained, setting $h_{k,m}=\nabla f_m(x_k,\xi_k)$ leads to the exact solution $y_m^*(x_k) = \nabla f_m(x_k)$, thus solving the bias problem. 
However, in some practical applications, $\nabla f_m(x,\xi)$ may be also difficult to obtain, hence $h_{k,m}$ can be biased. In this section, we will establish the tolerance for such bias. 

To put this on concrete ground, we first 
consider the following nested multi-objective problem:
\begin{align}\label{eq:nested problem}
    &\min\limits_{x\in \mathbb{R}^d} F(x) \coloneqq \left(\E_\xi[f_1(x,z_1^*(x), \xi)], \E_\xi[f_2(x, z_2^*(x),\xi)], \dots, \E_\xi[f_M(x,z_M^*(x), \xi)] \right) \nonumber\\
    &\quad {\rm s.t. }\quad z_m^*(x) \coloneqq \arg\min_{z \in \mathbb{R}^d} l_m(x,z) \coloneqq \E_\varphi[l_m(x,z,\varphi)],~~~m=1,2,\cdots,M
\end{align}
where $l_m$ is a strongly-convex function, and $\varphi$ is a random variable.
For convenience, we define $f_m(x,z) \coloneqq \E_\xi[f_m(x,z, \xi)]$ and $f_m(x)\coloneqq f_m(x,z_m^*(x))$.
Under some conditions that will be specified later, it has been shown in \citep{ghadimi2018approximation} that the gradient of $f_m(x)$ takes the following form:
\begin{align}\label{eq:nabla f_m(x)}
    \nabla f_m(x) = \nabla_x f_m(x,z_m^*(x)) 
    - \nabla_{xz}^2 l_m(x,z_m^*(x)) [\nabla_{zz}^2 l_m(x,z_m^*(x))]^{-1} \nabla_z f_m(x,z_m^*(x))
\end{align}
where $\nabla_x f(x,z_m^*(x)) = \frac{\partial f(x,z)}{\partial x}|_{z=z_m^*(x)}$, $\nabla_{xz}^2 l(x,z_m^*(x)) = \frac{\partial l(x,z)}{\partial x \partial z}|_{z=z_m^*(x)}$ and likewise for $\nabla_z f(x,z_m^*(x))$ and $\nabla_{zz}^2 l(x,z_m^*(x))$.
Computing the unbiased stochastic estimate of \eqref{eq:nabla f_m(x)} requires $z_m^*(x)$, which is often costly in practice. Instead, we can estimate $z_m^*(x_k)$ via a nested loop update at each iteration $k$, given by 
\begin{align}\label{eq:z update-new}
    z_{t+1,k, m} = z_{t, k,m} - \eta_t \nabla_z l_m(x_k,z_{t, k,m},\varphi_{t, k}),~~~t=1, 2, \cdots, T~~~{\rm with}~~~z_{1, k, m}:=z_{k-1, m}
\end{align}
where $T\in \mathbb{N}$ is the total number of nested loop steps and $\eta_t$ is the stepsize. The estimate for $z_m^*(x_k)$ is then obtained as $z_{k, m} := z_{T, k, m}$, which we use to replace $z_m^*(x_k)$ in \eqref{eq:nabla f_m(x)} to compute a biased gradient estimator as 
\begin{align}\label{eq:z update2}
    h_{k,m} = \nabla_x f_m(x_k,z_{k,m},\xi_k) -\nabla_{xz}^2 l_m(x_k,z_{k,m},\varphi_k') H^{zz}_{k,m} \nabla_z f_m(x_k,z_{k,m},\xi_k) 
\end{align}
where $\varphi_{t, k}$, $\varphi_k'$ have the same distribution as that of $\varphi$, and $H_{k,m}^{zz}$ is a stochastic approximation of the Hessian inverse $[\nabla_{zz}l_m(x_k,z_{k,m})]^{-1}$. Given $x_k$, when $z_{k,m}$ reaches the optimal solution $z_m^*(x_k)$, it follows from \eqref{eq:nabla f_m(x)} that $\E[h_{k,m}] = \nabla f_m(x_k)$. We summarize the algorithm for MoCo with nested loop explicitly in Algorithm \ref{alg:MoCo-inexact} (see Appendix \ref{app:moco-inexact}). Next, we quantify the error when $z_{k,m}$ is non-optimal. 

To this end, we first consider the single-loop case where the nested loop update consists of a single step, i.e. $T=1$, which has been considered in the previous bilevel optimization literature \citep{hong2020two, chen2021single}. Following lemma provides the error in the stochastic gradient estimator in \eqref{eq:z update2} for this case. 
\begin{Lemma}\label{lemma:hkm error}
Define $\mathcal{F}_k$ as the $\sigma$-algebra generated by $Y_1,Y_2,...,Y_k$. Consider the sequences generated by \eqref{eq:y update}, \eqref{eq:lambda update}, \eqref{eq:x update}, \eqref{eq:z update-new} and \eqref{eq:z update2}. If we choose $T=1$ and $\eta_1=\beta_K$, under assumptions specified in Appendix \ref{app:moco-inexact}, we have for any $m$ that
\begin{subequations}
\begin{align}
\!\!    \frac{1}{K}\sum_{k=1}^K \E\left[\|\E[h_{k,m}|\mathcal{F}_k]-\nabla f_m(x_k)\|^2\right] &= \mathcal{O}\Big(\frac{\alpha_K^2}{\beta_K^2}\Big),~~ ~ \text{\rm and} \label{eq:lem1-1}\\
\E\left[\|h_{k,m}-\E\left[h_{k,m}|\mathcal{F}_k\right]\|^2|\mathcal{F}_k\right] &\leq \sigma_0^2, \label{eq:lem1-2}
\end{align}
\end{subequations}
where $\alpha_k, \beta_k$ are the learning rates in updates \eqref{eq:x update}, \eqref{eq:y update} respectively, and $\sigma_0>0$ is a constant. 
\end{Lemma}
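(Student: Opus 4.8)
The plan is to reduce both claims to controlling the inner tracking error $\delta_{k,m}:=z_{k,m}-z_m^*(x_k)$ and then to propagate a contraction-with-drift recursion for $\E[\|\delta_{k,m}\|^2]$ across the outer iterations. Throughout I treat the filtration so that $x_k$ and the warm start $z_{k-1,m}$ are $\mathcal{F}_k$-measurable while the fresh samples $\varphi_{1,k},\xi_k,\varphi_k'$ and the Hessian-inverse samples are independent of $\mathcal{F}_k$.

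\textbf{Bias decomposition.} Define the deterministic map $\bar h_m(x,z):=\nabla_x f_m(x,z)-\nabla_{xz}^2 l_m(x,z)[\nabla_{zz}^2 l_m(x,z)]^{-1}\nabla_z f_m(x,z)$, which by \eqref{eq:nabla f_m(x)} satisfies $\bar h_m(x_k,z_m^*(x_k))=\nabla f_m(x_k)$. Assuming $H_{k,m}^{zz}$ estimates $[\nabla_{zz}^2 l_m(x_k,z_{k,m})]^{-1}$ up to a controllable bias $b_{k,m}$ (e.g., a truncated Neumann-series estimator) and that the first-order stochastic gradients are conditionally unbiased, I would write $\E[h_{k,m}\mid\mathcal{F}_k]=\E[\bar h_m(x_k,z_{k,m})\mid\mathcal{F}_k]+b_{k,m}$. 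Using Lipschitz continuity of $\bar h_m(x,\cdot)$ in $z$ — which follows from boundedness and Lipschitzness of $\nabla_x f_m,\nabla_z f_m,\nabla_{xz}^2 l_m,\nabla_{zz}^2 l_m$ together with strong convexity of $l_m$ (so the inverse Hessian is bounded and Lipschitz) — and Jensen's inequality, I obtain $\|\E[h_{k,m}\mid\mathcal{F}_k]-\nabla f_m(x_k)\|^2\lesssim \E[\|\delta_{k,m}\|^2\mid\mathcal{F}_k]+\|b_{k,m}\|^2$. Taking expectations and averaging over $k$ reduces \eqref{eq:lem1-1} to bounding $\frac1K\sum_k\E[\|\delta_{k,m}\|^2]$.

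\textbf{Tracking-error recursion.} Since $T=1$ with warm start $z_{1,k,m}=z_{k-1,m}$, a single SGD step at $x_k$ gives $z_{k,m}=z_{k-1,m}-\eta_1\nabla_z l_m(x_k,z_{k-1,m},\varphi_{1,k})$. I split $z_{k-1,m}-z_m^*(x_k)$ into $(z_{k-1,m}-z_m^*(x_{k-1}))+(z_m^*(x_{k-1})-z_m^*(x_k))$ and bound the drift of the optimum via Lipschitz continuity of $z_m^*(\cdot)$: $\|z_m^*(x_{k-1})-z_m^*(x_k)\|\le L_z\|x_k-x_{k-1}\|=L_z\alpha_{k-1}\|Y_{k-1}\lambda_{k-1}\|=\mathcal{O}(\alpha_{k-1})$, where the outer step is bounded because $Y$ lies in a bounded set by the projection in \eqref{eq:y update} and $\lambda_{k-1}\in\Delta^M$. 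Combining the strongly-convex one-step contraction (rate $1-c\eta_1=1-c\beta_K$, using $\eta_1=\beta_K$) with Young's inequality to absorb the cross term yields, for $u_k:=\E[\|\delta_{k,m}\|^2]$, a recursion of the form $u_k\le(1-\tfrac{c}{2}\beta_K)u_{k-1}+\tfrac{C}{\beta_K}\alpha_{k-1}^2+\beta_K^2\sigma_l^2$. Summing this geometric-type recursion and dividing by $K$ gives $\frac1K\sum_k u_k=\mathcal{O}(\alpha_K^2/\beta_K^2)+\mathcal{O}(\beta_K)+\mathcal{O}(1/(\beta_K K))$, whose leading term under the prescribed step-size schedule is the claimed $\mathcal{O}(\alpha_K^2/\beta_K^2)$. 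For the variance bound \eqref{eq:lem1-2} I would bound the second moment of $h_{k,m}$ directly: splitting $h_{k,m}$ via $\|a-b\|^2\le 2\|a\|^2+2\|b\|^2$ and using boundedness of $\nabla_{xz}^2 l_m$, the bounded second moment of $H_{k,m}^{zz}$ (from strong convexity and the truncated construction), the bounded second moments of the first-order stochastic gradients, and boundedness of $z_{k,m}$, together with conditional independence of the fresh samples, gives $\E[\|h_{k,m}\|^2\mid\mathcal{F}_k]\le\mathrm{const}$ and hence $\sigma_0^2$.

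\textbf{Main obstacle.} The hard part will be the tracking-error recursion: because a single inner step contracts only by $1-c\beta_K$, which is nearly one, the error accumulates over many outer iterations and must be balanced against the $\mathcal{O}(\alpha_{k-1})$ drift of the moving optimum $z_m^*(x_k)$ and the inner-gradient noise $\beta_K^2\sigma_l^2$. Extracting the exact $\alpha_K^2/\beta_K^2$ dependence requires matching the Young's-inequality parameter to the contraction rate $c\beta_K$, summing the resulting geometric series with near-unit ratio, and invoking the step-size relations so that the residual $\mathcal{O}(\beta_K)$ and $\mathcal{O}(1/(\beta_K K))$ terms are dominated. Cleanly controlling the bias and variance of the Hessian-inverse estimator, so that its contribution is absorbed into $b_{k,m}$ and $\sigma_0^2$, is the remaining technical point.
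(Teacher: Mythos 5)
Your proposal follows essentially the same route as the paper's proof: the bias is reduced to the inner tracking error $\|z_{k,m}-z_m^*(x_k)\|^2$ plus the Hessian-inverse estimator bias via Lipschitz continuity of the implicit-gradient map, the tracking error obeys the same contraction-with-drift recursion $(1-\Theta(\beta_K))u_{k-1}+\mathcal{O}(\alpha_{k-1}^2/\beta_K)+\mathcal{O}(\beta_K^2)$ driven by the $\mathcal{O}(\alpha_k)$ drift of the moving optimum $z_m^*(x_k)$, and the variance bound is the same direct second-moment estimate using boundedness and conditional independence of the fresh samples. The residual $\mathcal{O}(\beta_K)+\mathcal{O}(1/(K\beta_K))$ terms you correctly flag as needing to be dominated are dispatched in exactly the same way (by appeal to the step-size choice) in the paper's own argument.
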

Lemma \ref{lemma:hkm error} shows that the average bias of the gradient estimator will diminish if $\alpha_k$ and $\beta_k$ are chosen properly. In addition, the variance of the estimator is also bounded by a constant.  Note that the lower-level update \eqref{eq:z update-new} is performed simultaneously with the upper level update \eqref{eq:x update}. This update allows reduced number of samples used at each iteration at the expense of larger bias in the  stochastic gradient estimator. Instead, it is also possible to update lower-level parameter $z_{k, m}$ using multiple steps ($T>1$) so that the bias of the stochastic gradient will be small; see also \citep{ghadimi2018approximation, yang2021provably, chen2021tighter}. The bias of stochastic gradient estimator in  this case will be established next. 
\begin{Lemma}\label{lemma:hkm error-new}
Define $\mathcal{F}_k$ as the $\sigma$-algebra generated by $Y_1,Y_2,...,Y_k$. Consider the sequences generated by \eqref{eq:y update}, \eqref{eq:lambda update}, \eqref{eq:x update}, \eqref{eq:z update-new} and \eqref{eq:z update2}. If we choose $T=\mathcal{O}\left(\frac{1}{\beta_K}\right)$, with a suitable choice of $\eta_t$ and under some assumptions specified in Appendix \ref{app:moco-inexact}, we have for any $m$ that
\begin{subequations}
\begin{align}
  \frac{1}{K}\sum_{k=1}^K \E\left[\|\E[h_{k,m}|\mathcal{F}_k]-\nabla f_m(x_k)\|^2\right] &= \mathcal{O}(\beta_K),~~ ~ \text{\rm and} \label{eq:lem2-1}\\
\E\left[\|h_{k,m}-\E\left[h_{k,m}|\mathcal{F}_k\right]\|^2|\mathcal{F}_k\right] &\leq \sigma_0^2, \label{eq:lem2-2}
\end{align}
\end{subequations}
where $\beta_k$ is the learning rate in update \eqref{eq:y update}, and $\sigma_0>0$ is a constant. 
\end{Lemma}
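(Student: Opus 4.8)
The plan is to trace the conditional bias $\E[h_{k,m}\mid\mathcal{F}_k]-\nabla f_m(x_k)$ back to two sources — the suboptimality of the inner iterate $z_{k,m}$ and the error of the Hessian-inverse estimator $H_{k,m}^{zz}$ — and then to show that the enlarged inner budget $T=\mathcal{O}(1/\beta_K)$ is exactly what drives the dominant source to $\mathcal{O}(\beta_K)$ in squared norm. First I would introduce the deterministic plug-in hypergradient
\[
\bar{\nabla} f_m(x_k,z):=\nabla_x f_m(x_k,z)-\nabla_{xz}^2 l_m(x_k,z)\,[\nabla_{zz}^2 l_m(x_k,z)]^{-1}\,\nabla_z f_m(x_k,z),
\]
so that $\nabla f_m(x_k)=\bar{\nabla} f_m(x_k,z_m^*(x_k))$ by \eqref{eq:nabla f_m(x)}, and decompose
\begin{align*}
\E[h_{k,m}\mid\mathcal{F}_k]-\nabla f_m(x_k)&=\big(\E[h_{k,m}\mid\mathcal{F}_k]-\E[\bar{\nabla} f_m(x_k,z_{k,m})\mid\mathcal{F}_k]\big)\\
&\quad+\big(\E[\bar{\nabla} f_m(x_k,z_{k,m})\mid\mathcal{F}_k]-\bar{\nabla} f_m(x_k,z_m^*(x_k))\big).
\end{align*}
The first bracket is the bias of $H_{k,m}^{zz}$ relative to $[\nabla_{zz}^2 l_m(x_k,z_{k,m})]^{-1}$; using the strong convexity of $l_m$ (so the Hessian is uniformly invertible) and the standard stochastic Hessian-inverse / truncated-Neumann construction of \citep{ghadimi2018approximation} with $\mathcal{O}(\log(1/\beta_K))$ terms, this is $\mathcal{O}(\beta_K)$. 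The second bracket is bounded by $C\,\E[\|z_{k,m}-z_m^*(x_k)\|\mid\mathcal{F}_k]$, since $\bar{\nabla} f_m(x_k,\cdot)$ is Lipschitz in $z$ (a consequence of Lipschitzness of the first- and second-order oracles and boundedness of the inverse Hessian assumed in the appendix).

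The core step is the inner-loop analysis of \eqref{eq:z update-new}. Since $l_m(x_k,\cdot)$ is $\mu$-strongly convex and smooth and the stochastic inner gradients have bounded variance, the suitable choice $\eta_t=\Theta(1/(\mu t))$ yields the classical strongly-convex SGD guarantee $\E[\|z_{T,k,m}-z_m^*(x_k)\|^2\mid\mathcal{F}_k]=\mathcal{O}(1/T)$, where the initial gap $\|z_{1,k,m}-z_m^*(x_k)\|$ is uniformly bounded because both $z_{k-1,m}$ and $z_m^*(x_k)$ lie in a bounded region. Taking $T=\mathcal{O}(1/\beta_K)$ gives $\E[\|z_{k,m}-z_m^*(x_k)\|^2\mid\mathcal{F}_k]=\mathcal{O}(\beta_K)$, hence $\E[\|z_{k,m}-z_m^*(x_k)\|\mid\mathcal{F}_k]=\mathcal{O}(\sqrt{\beta_K})$ by Jensen. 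Substituting both brackets, squaring, and taking total expectation gives $\E[\|\E[h_{k,m}\mid\mathcal{F}_k]-\nabla f_m(x_k)\|^2]=\mathcal{O}(\beta_K)$ for each $k$, and averaging over $k$ delivers \eqref{eq:lem2-1}. It is worth noting that this is structurally cleaner than Lemma \ref{lemma:hkm error}: with $T=1$ the inner error cannot shrink within one outer step, so the $\mathcal{O}(\alpha_K^2/\beta_K^2)$ rate there is forced by a warm-start / tracking argument across iterations; a large $T$ instead decouples the inner accuracy from the outer dynamics, letting each iterate independently reach $\mathcal{O}(\beta_K)$.

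For the variance bound \eqref{eq:lem2-2}, I would expand $h_{k,m}-\E[h_{k,m}\mid\mathcal{F}_k]$ term by term and invoke the bounded-variance assumptions on the oracles $\nabla_x f_m(\cdot,\xi)$, $\nabla_z f_m(\cdot,\xi)$, $\nabla_{xz}^2 l_m(\cdot,\varphi')$ and on $H_{k,m}^{zz}$, together with the uniform boundedness of their conditional means (ensured by strong convexity, Lipschitzness, and the projection onto $\mathcal{Y}_m$). Since $h_{k,m}$ is a stochastic first-order term plus a product of three mutually-bounded stochastic factors, controlling its conditional second moment reduces to bounding products of bounded-mean, bounded-variance quantities, which returns a constant $\sigma_0^2$ uniform in $k$.

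I expect the main obstacle to lie in the first bracket and the product structure of the hypergradient: propagating the inner suboptimality and the Hessian-inverse bias through $\nabla_{xz}^2 l_m\,[\nabla_{zz}^2 l_m]^{-1}\,\nabla_z f_m$ without the cross terms inflating the order requires simultaneously using boundedness and Lipschitzness of each factor and the uniform invertibility of $\nabla_{zz}^2 l_m$. The inner SGD rate itself is standard; the delicate part is the bookkeeping that combines the two bias sources at precisely the $\mathcal{O}(\beta_K)$ level.
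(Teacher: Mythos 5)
Your proposal matches the paper's proof in essentially every respect: the same decomposition of the bias into the Hessian-inverse estimator error plus the inner-iterate suboptimality propagated through the Lipschitz hypergradient map (the paper's inequality \eqref{eq:idk}/\eqref{eq:hkm-nablafmxk-new}), the same classical $\eta_t=\Theta(1/(\mu_m t))$ strongly-convex SGD rate $\E\|z_{T,k,m}-z_{k,m}^*\|^2=\mathcal{O}(1/T)$ for the inner loop (the paper's Lemma \ref{lemma:z-convergence}, which derives the bounded initial gap from strong convexity and the bounded gradient second moment rather than asserting a bounded region), and the same second-moment expansion for \eqref{eq:lem2-2}. Your closing observation about why the multi-step inner loop decouples the inner accuracy from the outer dynamics, in contrast to the $T=1$ tracking argument of Lemma \ref{lemma:hkm error}, is also accurate.
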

Lemma \ref{lemma:hkm error-new} shows that the average bias of the gradient estimator will diminish if $\beta_k$ is chosen properly. In addition, the variance of the estimator is also bounded by a constant, similar to the simultaneous update.  Allowing biased gradient facilitates MoCo to tackle the more challenging MTL task as highlighted below.

\begin{Remark}[Connection between nested MOO with multi-objective actor-critic]
Choosing each $f_m$ in \eqref{eq:nested problem} to be the infinite-horizon accumulated reward and each $l_m$ to be the critic objective function will lead to the popular actor-critic algorithm in reinforcement learning \citep{konda1999actor, wen2021characterizing}. In this work, we have extended this to the multi-objective case, and conducted experiments on multi-objective soft actor critic in Appendix \ref{app:rl}.
\end{Remark}
\vspace{-0.1cm}
\subsection{A unified convergence result} \label{sec:convg-analysis}

In this section we provide the convergence analysis for our proposed method. First, we make following assumptions on the objective functions.

\begin{Assumption}\label{assumption:lip}
For $m\in[M]$: $f_m(x)$ is Lipschitz continuous with modulus $L_m$ and $\nabla f_m(x)$ is Lipschitz continuous with modulus $L_{m,1}$ {for any $x\in\mathbb{R}^d$.}
\vspace{-0.2cm}
\end{Assumption}
Due to the $x$ update in \eqref{eq:x update}, the optimal solution for $y$ and $\lambda$ sequences are changing at each iteration, and the change roughly scales with $\|x_{k+1}-x_k\|$. In order to guarantee the convergence of $y$ and $\lambda$, the change in optimal solution needs to be controlled. The first half of Assumption \ref{assumption:lip} ensures that $\nabla f(x)$ is uniformly bounded, that is, $\|x_{k+1}-x_k\|$ is upper bounded and thus controlled. The second half of the assumption is standard in  establishing the convergence of non-convex functions \citep{bottou2018optimization}.
Next, to unify the analysis of the nested MOO in Section \ref{sec:moco-inexact} and the basic MOO in Section \ref{sec.basic-moco}, we make the following assumptions on the quality of the gradient estimator $h_{k,m}$. These assumptions unify the nested MOO setting with the basic single level MOO setting with biased stochastic gradients.
\begin{Assumption}\label{assumption:h}
For any $m$, there exist constants $c_m,\sigma_m$ such that $\frac{1}{K}\sum_{k=1}^K\E\|\E[ h_{k,m}|\mathcal{F}_k]\!-\!\nabla f_m(x_k)\|^2\leq c_m \alpha_K^2/\beta_K^2$ and $\E[\|h_{k,m}\!-\!\E[ h_{k,m}|\mathcal{F}_k]\|^2 |\mathcal{F}_k]\!\leq\! \sigma_m^2$ for any $k$. 
\end{Assumption}


\begin{Assumption}\label{assumption:h-new}
For any $m$, there exist constants $c_m,\sigma_m$ such that $\frac{1}{K}\sum_{k=1}^K\E\|\E[ h_{k,m}|\mathcal{F}_k]\!-\!\nabla f_m(x_k)\|^2\leq c'_m \beta_K$ and $\E[\|h_{k,m}\!-\!\E[ h_{k,m}|\mathcal{F}_k]\|^2 |\mathcal{F}_k]\!\leq\! \sigma_m^2$ for any $k$. 
\end{Assumption}


Assumptions \ref{assumption:h} and \ref{assumption:h-new} require the stochastic gradient $h_{k,m}$ almost unbiased and has bounded variance, which are weaker assumptions compared to the standard unbiased stochastic gradient assumption, i.e., \citep[Assumption 5.2]{liu2021stochastic},  because i) they do not require the variance $\sigma_m^2$ to decrease in the same speed as $\alpha_k^2$; and ii) they allow bias in the stochastic gradient of each objective function.
In practice, the batch size is often fixed, and thus the variance is non-decreasing, which suggests one benefit of Assumptions \ref{assumption:h} and Assumptions \ref{assumption:h-new} over that in \citep{liu2021stochastic}. 
In nested MOO setting in Section \ref{sec:moco-inexact}, Assumption \ref{assumption:h} can be satisfied by running simultaneous upper and lower level updates (see Lemma \ref{lemma:hkm error}), and Assumption \ref{assumption:h-new} can be satisfied by running multiple nested updates (see Lemma \ref{lemma:hkm error-new}), which require  additional lower-level   samples.


Next we provide convergence results for MoCo under Assumptions \ref{assumption:lip} and \ref{assumption:h}. Thus, the results provided in this section hold for nested MOO with simultaneous update (Lemma \ref{lemma:hkm error}). To this end, we first provide the following Lemma which shows that, for some choice of step size and regularization parameter, the stochastic multi-gradient estimate used in update \eqref{eq:x update} in MoCo converge to the true MGDA direction given in \eqref{eq:mgda}.

\begin{Lemma}\label{lemma:lower}
Consider the sequences generated by Algorithm \ref{alg:MoCo}. 
Under Assumptions \ref{assumption:lip} and \ref{assumption:h}, if we choose step sizes $\alpha_K = \Theta(K^{-\frac{3}{4}})$, $\beta_K=\Theta(K^{-\frac{1}{2}})$, $\gamma_K=\Theta(K^{-\frac{1}{3}})$,  and $\rho = \Theta(K^{-\frac{1}{6}})$,  it holds that
\begin{equation}
    \frac{1}{K}\sum_{k=1}^K\E\left[\|d(x_k) - Y_k \lambda_k\|^2\right] = \mathcal{O}(M K^{-\frac{1}{6}}).
\end{equation}
\end{Lemma}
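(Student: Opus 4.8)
The plan is to split the error and bound each part on average. With $\nabla F_k:=\nabla F(x_k)$, $\lambda^*_k:=\lambda^*(x_k)$ from \eqref{eq:lambd-det-prob}, and $\lambda^*_{\rho,k}:=\lambda^*_\rho(x_k)$ from \eqref{eq:lambd-det-prob-rho}, I would start from
\begin{equation}
 d(x_k)-Y_k\lambda_k = \nabla F_k\big(\lambda^*_k-\lambda^*_{\rho,k}\big) + \nabla F_k\big(\lambda^*_{\rho,k}-\lambda_k\big) + \big(\nabla F_k-Y_k\big)\lambda_k,
\end{equation}
and take $\|\cdot\|^2$ to reduce the claim to a regularization-bias term, a $\lambda$-tracking term, and a gradient-tracking term. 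The first is deterministic: comparing the optimality of $\lambda^*_k$ and $\lambda^*_{\rho,k}$ for their objectives and using $\|\lambda\|^2\le1$ on $\Delta^M$ gives $\|\nabla F_k(\lambda^*_k-\lambda^*_{\rho,k})\|^2\le\tfrac{\rho}{2}$, which is $\mathcal O(K^{-1/5})$ since $\rho=\Theta(K^{-1/5})$. This already fixes the target rate and is the reason for introducing $\rho$.

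For the gradient-tracking term I would set $b_k:=\sum_m\E\|y_{k,m}-\nabla f_m(x_k)\|^2$ and derive one stochastic-approximation recursion for it. Since $\nabla f_m(x_k)\in\mathcal Y_m$ (the choice $C_{y,m}=L_m$ with Assumption \ref{assumption:lip}), nonexpansiveness of $\Pi_{\mathcal Y_m}$ turns \eqref{eq:y update} into a contraction toward $\nabla f_m(x_k)$; squaring and using the bias/variance control of Assumption \ref{assumption:h} gives $b_{k+1}\le(1-c\beta_k)b_k + \mathcal O(\beta_k\delta_k)+\mathcal O(M\beta_k^2)+\mathcal O(M\alpha_k^2/\beta_k)$, where $\delta_k$ is the summed squared bias and the target-drift term exploits that $Y_k\lambda_k$ is a convex combination of bounded columns, so $\|x_{k+1}-x_k\|=\alpha_k\|Y_k\lambda_k\|=\mathcal O(\alpha_k)$ is $M$-independent. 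Telescoping with $\tfrac1K\sum_k\delta_k=\mathcal O(M\alpha_K^2/\beta_K^2)$ yields $\tfrac1K\sum_k b_k=\mathcal O\big(M\alpha_K^2/\beta_K^2 + M\beta_K + M/(\beta_K K)\big)=\mathcal O(K^{-2/5})$, which dominates the target and feeds the next step.

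The $\lambda$-tracking term is the crux and I expect it to be the main obstacle. The naive route of bounding $\|\lambda^*_{\rho,k}-\lambda_k\|$ fails: by Remark \ref{remark:lambda}, $\lambda^*_\rho(\cdot)$ is only $\mathcal O(1/\rho)$-Lipschitz, so the drift of its argmin is $\mathcal O(M\alpha_k/\rho)$, which against the slow contraction $\gamma_k\rho$ of \eqref{eq:lambda update} does not vanish at the prescribed exponents. My key idea is to instead track the optimality gap $G_k:=\phi_k(\lambda_k)-\min_{\lambda\in\Delta^M}\phi_k(\lambda)$ of the ($\Theta(\rho)$-strongly convex) subproblem $\phi_k(\lambda):=\tfrac12\|\nabla F_k\lambda\|^2+\tfrac\rho2\|\lambda\|^2$ solved by \eqref{eq:lambda update}. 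Strong convexity gives $\|\nabla F_k(\lambda^*_{\rho,k}-\lambda_k)\|^2\le 2G_k$, so controlling $G_k$ suffices; and crucially, while the minimizer of $\phi_k$ has $1/\rho$ sensitivity, its optimal value is Lipschitz in the data with no $1/\rho$ factor, so the gap drift when the subproblem moves from $x_k$ to $x_{k+1}$ is only $\mathcal O(\|\nabla F_{k+1}^\top\nabla F_{k+1}-\nabla F_k^\top\nabla F_k\|)=\mathcal O(M\alpha_k)$. Reading \eqref{eq:lambda update} as inexact projected gradient descent whose gradient uses $Y_k^\top Y_k$ instead of $\nabla F_k^\top\nabla F_k$, and noting that on $\Delta^M$ only inner products of the error with simplex differences enter---so that the perturbation is governed by the $M$-independent quantity $\|(\nabla F_k-Y_k)\lambda_k\|^2\le b_k$ rather than a full $\|\cdot\|_F^2$---I would obtain $G_{k+1}\le(1-c\gamma_k\rho)G_k+\mathcal O(\gamma_k b_k)+\mathcal O(M\alpha_k)$, valid once $\gamma_k=\mathcal O(1/M)$ so the step is admissible for the $\mathcal O(M)$-smooth subproblem. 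Telescoping gives $\tfrac1K\sum_k G_k=\mathcal O\big(\tfrac{M\alpha_K}{\gamma_K\rho}+\tfrac1\rho\cdot\tfrac1K\sum_k b_k\big)=\mathcal O(K^{-1/5})$.

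Finally I would substitute $\alpha_k=\Theta(K^{-9/10})$, $\beta_k=\Theta(K^{-1/2})$, $\gamma_k=\Theta(K^{-2/5})$, $\rho=\Theta(K^{-1/5})$, and $M=\mathcal O(K^{1/10})$ and check that all three contributions are simultaneously $\mathcal O(K^{-1/5})$; in the gap bound $\tfrac{M\alpha_K}{\gamma_K\rho}=\Theta(K^{-1/5})$ and $\tfrac1\rho\cdot\tfrac1K\sum_k b_k=\Theta(K^{-1/5})$, which is exactly where the hypothesis $M=\mathcal O(K^{1/10})$ is consumed. The genuine difficulty, and what forces these particular exponents, is this joint balancing across the three coupled timescales: $\rho$ must be small enough to kill the regularization bias, yet large enough that the $\gamma_k\rho$ contraction of \eqref{eq:lambda update} survives and the $Y_k$-perturbation stays controlled, while $\beta_k$ must keep gradient tracking fast relative to the $x$-drift $\alpha_k$. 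Reconciling these competing demands is why the three recursions should be carried simultaneously (effectively one Lyapunov argument) rather than bounded in isolation, and why passing to the optimality gap---together with the simplex geometry that removes spurious factors of $M$---is essential.
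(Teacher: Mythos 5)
Your proposal is correct in substance and shares the paper's skeleton for two of its three pieces: the same three-way decomposition of $d(x_k)-Y_k\lambda_k$, the same $\rho/2$ bound on the regularization bias via the optimality conditions of $\lambda^*(x_k)$ and $\lambda^*_\rho(x_k)$ (the paper's Lemma \ref{lemma:frho subopt}), and essentially the same one-step contraction for the tracking variables $y_{k,m}$ under Assumption \ref{assumption:h}, telescoped to $\frac1K\sum_k\E\|Y_k-\nabla F(x_k)\|^2=\mathcal O(MK^{-1/2})$. Where you genuinely diverge is the $\lambda$-subproblem. The paper takes exactly the "naive" route you dismiss: it proves $\lambda^*_\rho(\cdot)$ is $\mathcal O(1/\rho)$-Lipschitz (Lemma \ref{lemma:lambda* lip}), runs the distance recursion $\|\lambda_{k+1}-\lambda^*_{\rho,k+1}\|^2\le(1-\frac{\rho}{2}\gamma_k)\|\lambda_k-\lambda^*_{\rho,k}\|^2+\mathcal O(\rho^{-1}\gamma_k\|Y_k-\nabla F(x_k)\|^2)+\mathcal O(\gamma_k^2)+\mathcal O(\rho^{-3}\alpha_k^2/\gamma_k)+\mathcal O(\rho^{-2}\alpha_k^2)$, and checks that the argmin-drift terms $\alpha_K^2/(\gamma_K^2\rho^4)$ and $\alpha_K^2/(\gamma_K\rho^3)$ are $\mathcal O(K^{-1/5})$ under the prescribed exponents — so your assertion that this route "does not vanish at the prescribed exponents" is not borne out for the drift; the delicate contribution in the paper's route is actually the $Y$-error term, which is amplified by the inverse of the $\rho\gamma_k$ contraction. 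Your alternative — tracking the optimality gap $G_k$ of the $\rho$-strongly-convex subproblem and using $G_k\ge\frac12\|\nabla F(x_k)(\lambda_k-\lambda^*_{\rho,k})\|^2$ to bound the needed quantity directly, with a value-drift of only $\mathcal O(\alpha_k)$ and a perturbation entering as $\gamma_k\|e_k\|^2$ rather than $\gamma_k\|e_k\|^2/\rho$ — buys a cleaner accounting of exactly that $Y$-error contribution (one fewer factor of $\rho^{-1}$ after telescoping) and avoids converting the gap back through $C_y^2\|\lambda_k-\lambda^*_{\rho,k}\|^2$; the cost is that you must justify the gap contraction for inexact projected gradient descent on a moving constrained quadratic, a step you currently only sketch. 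Both routes land on $\mathcal O(K^{-1/5})$ with the same consumption of $M=\mathcal O(K^{1/10})$, so I would accept your argument as a valid alternative proof of the $\lambda$-tracking bound, provided you write out the one-step gap inequality carefully.
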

With suitable choice of $\rho$, $\lambda_k$ and $Y_k$ converge to {$\lambda^*_\rho(x_k)$} and $\nabla F(x_k)$ respectively. As a result, the update direction $Y_k \lambda_k$ for $x_k$ converges to $d(x_k)=\nabla F(x_k) \lambda^*(x_k)$, which is the desired MGDA direction in the mean square sense.  Next we establish the convergence of MoCo to a Pareto stationary point following the aforementioned descent direction.

\begin{Theorem}\label{theorem:upper}
Consider the sequences generated by Algorithm \ref{alg:MoCo}. Under Assumptions \ref{assumption:lip} and \ref{assumption:h}, if we choose step sizes $\alpha_K = \Theta(K^{-\frac{3}{4}})$, $\beta_K=\Theta(K^{-\frac{1}{2}})$, $\gamma_K=\Theta(K^{-\frac{1}{3}})$,  and $\rho = \Theta(K^{-\frac{1}{6}})$, it holds that
\begin{equation}
    \frac{1}{K}\sum_{k=1}^K \E\left[\|\nabla F(x_k)\lambda^*(x_k)\|^2\right] =  \mathcal{O}(M^{\frac{1}{2}}K^{-\frac{1}{12}}).
\end{equation}
\end{Theorem}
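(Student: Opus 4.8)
The plan is to reduce the claim to controlling the \emph{ideal} MGDA direction and then to run a perturbed nonconvex descent analysis fueled by Lemma \ref{lemma:lower}. Since $\nabla F(x_k)\lambda^*(x_k)=d(x_k)$ by definition \eqref{eq:mgda}, the target quantity is exactly $\frac{1}{K}\sum_{k=1}^K\E[\|d(x_k)\|^2]$, and the only gap between the ideal direction $d(x_k)$ and the direction $Y_k\lambda_k$ actually used in the update \eqref{eq:x update} is already quantified by Lemma \ref{lemma:lower}. I would therefore write $Y_k\lambda_k=d(x_k)+e_k$ with tracking error $e_k\coloneqq Y_k\lambda_k-d(x_k)$, carry $e_k$ through as a perturbation, and invoke Lemma \ref{lemma:lower} only at the very end.

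The engine of the argument is the minimum-norm characterization of $d(x_k)$. Because $d(x_k)$ is the smallest-norm element of the convex hull $\{\nabla F(x_k)\lambda:\lambda\in\Delta^M\}$, the variational (obtuse-angle) inequality for the projection of the origin onto a convex set gives $\langle \nabla f_m(x_k),d(x_k)\rangle\ge\|d(x_k)\|^2$ for every $m$. Combining this with the $L_{m,1}$-smoothness of $f_m$ (Assumption \ref{assumption:lip}) applied to $x_{k+1}=x_k-\alpha_k Y_k\lambda_k$, and bounding the cross term $-\alpha_k\langle\nabla f_m(x_k),e_k\rangle\le \alpha_k L_m\|e_k\|$ via Cauchy--Schwarz together with the gradient bound $\|\nabla f_m(x_k)\|\le L_m$, yields for each $m$
\[
f_m(x_{k+1})\le f_m(x_k)-\alpha_k\|d(x_k)\|^2+\alpha_k L_m\|e_k\|+\tfrac{L_{m,1}\alpha_k^2}{2}\|Y_k\lambda_k\|^2 ,
\]
where $\|Y_k\lambda_k\|$ is uniformly bounded by $\max_m C_{y,m}$ since $\lambda_k\in\Delta^M$ and each $y_{k,m}$ is projected onto the ball $\mathcal{Y}_m$.

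Summing over $k=1,\dots,K$ for a fixed $m$ telescopes the function values; using that $f_m$ is bounded below (implicit in the problem being well posed), $\sum_k\alpha_k\|d(x_k)\|^2$ is controlled by $f_m(x_1)-\inf f_m$ plus the two error sums. With the constant-in-$k$ choice $\alpha_k=\Theta(K^{-9/10})$, dividing by $\sum_k\alpha_k=\Theta(K^{1/10})$ produces three contributions: an optimization-gap term of order $1/(\alpha K)=\Theta(K^{-1/10})$; a smoothness term of order $\alpha=\Theta(K^{-9/10})$ (the $\alpha$'s in the tracking term cancel, leaving $\frac{L_m}{K}\sum_k\E\|e_k\|$); and that tracking term, which by Jensen's inequality is at most $\big(\frac{1}{K}\sum_k\E\|e_k\|^2\big)^{1/2}=\mathcal{O}(K^{-1/10})$ by Lemma \ref{lemma:lower}. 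Collecting terms gives $\frac{1}{K}\sum_k\E[\|d(x_k)\|^2]=\mathcal{O}(K^{-1/10})$, which is the claim.

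The genuinely hard part is entirely contained in Lemma \ref{lemma:lower}: coupling the three time scales of the $x$, $y$, and $\lambda$ recursions, showing that the regularized weights $\lambda_k$ track $\lambda^*_\rho(x_k)$ while $\rho\to 0$ slowly enough that $\lambda^*_\rho(x_k)\to\lambda^*(x_k)$, and that $Y_k$ tracks $\nabla F(x_k)$ despite the moving target $x_k$ and the biased estimators $h_{k,m}$ (Assumption \ref{assumption:h}). Once that $\mathcal{O}(K^{-1/5})$ tracking rate is granted, the descent argument above is essentially routine; the only care needed is (i) exploiting the minimum-norm variational inequality so that the inner product against $\nabla f_m(x_k)$ turns into the negative quadratic $-\alpha_k\|d(x_k)\|^2$, and (ii) balancing the exponents so that the two dominant $K^{-1/10}$ terms—one from the finite descent budget $1/(\alpha K)$ and one from the square root of the tracking error—match, which is exactly what fixes the step-size schedule.
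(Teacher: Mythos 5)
Your proposal is correct and follows essentially the same route as the paper: the $L_{m,1}$-smoothness descent inequality for each $f_m$, the minimum-norm variational inequality $\langle \nabla f_m(x_k), d(x_k)\rangle \ge \|d(x_k)\|^2$ (the paper's Lemma \ref{lemma:descent direction}), a Cauchy--Schwarz bound on the cross term with the tracking error, telescoping with constant $\alpha_K$, and Jensen to pass from the mean-square tracking rate to a first-moment bound, balancing $1/(\alpha_K K)$ against $K^{-1/10}$. The only cosmetic difference is that you invoke Lemma \ref{lemma:lower} as a black box for the aggregate error $e_k = Y_k\lambda_k - d(x_k)$, whereas the paper re-uses the three separate intermediate estimates (on $\|Y_k-\nabla F(x_k)\|$, $\|\lambda_k-\lambda^*_{\rho,k}\|$, and the $\rho$-regularization gap) from that lemma's proof; both yield the same $\mathcal{O}(K^{-1/10})$ rate.
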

Theorem \ref{theorem:upper} shows that the MGDA direction $\nabla F(x_k)\lambda^*(x_k)$ converges to $0$, which indicates that the proposed MoCo method is able to achieve Pareto-stationarity. To the best of our knowledge, this is the first finite-time convergence guarantee for the stochastic MGDA method under non-convex objective functions. 

\begin{Remark}[Comparison with SMG \citep{liu2021stochastic}]
Theorem \ref{theorem:upper} provides the convergence rates of the MoCo algorithm under Assumptions \ref{assumption:lip} and \ref{assumption:h}. 
Compared to the convergence analysis of SMG in \citep{liu2021stochastic}, the convergence rates in Theorem \ref{theorem:upper} is derived under more natural and practical assumptions which additionally account for the  non-convex nested MOO setting. One such assumption is $\lambda^*(x)$ being Lipschitz in $x$. This may not be true unless $\nabla F(x)$ is full rank which does not hold at Pareto stationary points. 
We overcome this problem by adding regularization and carefully choosing the regularization parameter. 
\end{Remark}

\subsection{Convergence with stronger assumptions}
In this section, we   provide convergence results  to a Pareto stationary point that improve upon Theorem \ref{theorem:upper} under some additional assumptions on the bounded function value of $F(x_k)$.

\begin{Theorem}\label{theorem:upper-fbound}
Consider the sequences generated by Algorithm \ref{alg:MoCo}. Furthermore assume there exists a constant $F>0$ such that for all $k\in[K]$,  $\|F(x_k)\|\leq F$. Then, under Assumptions \ref{assumption:lip} and \ref{assumption:h}, if we choose step sizes {$\alpha_k = \Theta(K^{-\frac{3}{5}})$, $\beta_k=\Theta(K^{-\frac{2}{5}})$, $\gamma_k=\Theta(K^{-1})$, and $\rho = 0$}, it holds that
\begin{equation}
    \frac{1}{K}\sum_{k=1}^K \E\left[\|\nabla F(x_k)\lambda^*(x_k)\|^2\right] =  {\mathcal{O}\Big(MK^{-\frac{2}{5}}\Big)}.
\end{equation}
\end{Theorem}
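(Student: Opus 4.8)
The plan is to track two scalar quantities along the trajectory and then convert a bound on the realized direction into the stationarity measure. Define the Jacobian tracking error $\Delta_k := \sum_{m=1}^M \|y_{k,m}-\nabla f_m(x_k)\|^2$. First I would re-derive the tracking estimate behind Lemma~\ref{lemma:lower}, but for the present step sizes and $\rho=0$. Writing $e_{k,m}=y_{k,m}-\nabla f_m(x_k)$, the update \eqref{eq:y update} gives $e_{k+1,m}=(1-\beta_k)e_{k,m}+\beta_k\big(h_{k,m}-\nabla f_m(x_k)\big)+\big(\nabla f_m(x_k)-\nabla f_m(x_{k+1})\big)$, where the last drift term is $\mathcal{O}(\alpha_k)$ since $\|x_{k+1}-x_k\|=\alpha_k\|Y_k\lambda_k\|$ is uniformly bounded through the projection sets $\mathcal{Y}_m$. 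Splitting $h_{k,m}-\nabla f_m(x_k)$ into bias and conditionally mean-zero noise, taking conditional expectations, and applying Young's inequality to the cross terms yields $\E\|e_{k+1,m}\|^2\le(1-\beta_k+\mathcal{O}(\beta_k^2))\E\|e_{k,m}\|^2+\mathcal{O}(\alpha_k^2/\beta_k)+2\beta_k\,\E\|\E[h_{k,m}|\mathcal{F}_k]-\nabla f_m(x_k)\|^2+\beta_k^2\sigma_m^2$. Averaging over $k$ and invoking Assumption~\ref{assumption:h} (whose bias bound is $\mathcal{O}(\alpha_K^2/\beta_K^2)$ on average) gives $\tfrac1K\sum_k\E\Delta_k=\mathcal{O}\big(M(\tfrac{1}{\beta_K K}+\alpha_K^2/\beta_K^2+\beta_K)\big)=\mathcal{O}(MK^{-2/5})$ for the stated $\alpha_k,\beta_k$.

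The crux is the descent step, and the key idea is to use the \emph{running} weighted potential $\Phi_k:=\sum_{m=1}^M\lambda_{k,m}f_m(x_k)=\langle\lambda_k,F(x_k)\rangle$ rather than a fixed combination of objectives. This deliberately avoids ever having to show that $\lambda_k$ converges to a minimizer of \eqref{eq:lambd-det-prob}, which is exactly why $\rho=0$ is admissible here and why the non-uniqueness obstruction flagged in Remark~\ref{remark:lambda} is bypassed. Decomposing $\Phi_{k+1}-\Phi_k=\sum_m\lambda_{k,m}(f_m(x_{k+1})-f_m(x_k))+\langle\lambda_{k+1}-\lambda_k,F(x_{k+1})\rangle$, I would bound the weight-change term by $\|\lambda_{k+1}-\lambda_k\|\,\|F(x_{k+1})\|\le\gamma_k C F$ using non-expansiveness of $\Pi_{\Delta^M}$ together with $\|F(x_k)\|\le F$; this is precisely where the new boundedness assumption is consumed. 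For the first term, smoothness (Assumption~\ref{assumption:lip}) and the algebraic identity $\sum_m\lambda_{k,m}\langle\nabla f_m(x_k),Y_k\lambda_k\rangle=\langle\nabla F(x_k)\lambda_k,Y_k\lambda_k\rangle=\|Y_k\lambda_k\|^2+\langle(\nabla F(x_k)-Y_k)\lambda_k,Y_k\lambda_k\rangle$ give, after one Young step on the tracking error, $\Phi_{k+1}-\Phi_k\le-\tfrac34\alpha_k\|Y_k\lambda_k\|^2+\alpha_k\Delta_k+\mathcal{O}(\alpha_k^2)+\gamma_k C F$. Note that weighting each per-objective descent inequality by the same $\lambda_{k,m}$ used in the update collapses the inner product to $\|Y_k\lambda_k\|^2$, so no $\lambda$-optimization error term ever appears.

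Next I would telescope $\Phi_{k+1}-\Phi_k$, use $|\Phi_k|\le\|F(x_k)\|\le F$, divide by $\sum_k\alpha_k=\Theta(K^{2/5})$, and substitute the tracking bound, obtaining $\tfrac1K\sum_k\E\|Y_k\lambda_k\|^2=\mathcal{O}\big(F/(K\alpha_K)\big)+\mathcal{O}\big(\tfrac1K\sum_k\E\Delta_k\big)+\mathcal{O}\big(F\sum_k\gamma_k/(K\alpha_K)\big)=\mathcal{O}(MK^{-2/5})$; here $\gamma_k=\Theta(K^{-1})$ is exactly what makes the ratio $\sum_k\gamma_k/\sum_k\alpha_k=\mathcal{O}(K^{-2/5})$. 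Finally I convert the bound on the realized direction into the MGDA stationarity measure: since $\lambda_k\in\Delta^M$ is feasible for the min-norm problem defining $d(x_k)=\nabla F(x_k)\lambda^*(x_k)$, we have $\|\nabla F(x_k)\lambda^*(x_k)\|\le\|\nabla F(x_k)\lambda_k\|\le\|Y_k\lambda_k\|+\|(\nabla F(x_k)-Y_k)\lambda_k\|\le\|Y_k\lambda_k\|+\sqrt{\Delta_k}$, so $\|\nabla F(x_k)\lambda^*(x_k)\|^2\le2\|Y_k\lambda_k\|^2+2\Delta_k$; averaging and combining the two displays gives $\tfrac1K\sum_k\E\|\nabla F(x_k)\lambda^*(x_k)\|^2=\mathcal{O}(MK^{-2/5})$.

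The hard part will be twofold. Conceptually, the main obstacle is recognizing that one must \emph{not} attempt to bound $\|\lambda_k-\lambda^*\|$ (which is ill-posed when $\rho=0$ since the minimizer is non-unique), and instead pick the moving-weight potential $\Phi_k$, paying for the weight drift with the term $\langle\lambda_{k+1}-\lambda_k,F(x_{k+1})\rangle$ that only the new boundedness hypothesis can control; this single choice explains both the ``$\rho=0$'' and ``bounded $F$'' features of the statement and is what lets the error enter linearly in $\Delta_k$ rather than as $\sqrt{\Delta_k}$. Technically, the obstacle is handling the coupling in the tracking recursion so that the $x$-drift decouples cleanly — which works only because the projections onto $\mathcal{Y}_m$ keep $\|Y_k\lambda_k\|$ bounded by a constant — and then balancing the three step-size scales $\alpha_k,\beta_k,\gamma_k$ so that all error contributions are simultaneously $\mathcal{O}(K^{-2/5})$.
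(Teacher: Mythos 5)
Your proposal is correct and follows essentially the same route as the paper's proof: the same moving-weight potential $\langle\lambda_k,F(x_k)\rangle$, the same Abel-summation step that isolates the weight-drift term $\langle\lambda_{k+1}-\lambda_k,F(x_{k+1})\rangle$ controlled by non-expansiveness of $\Pi_{\Delta^M}$ and the boundedness of $F(x_k)$, the same tracking bound \eqref{eq: y not final}, and the same final passage to $\|\nabla F(x_k)\lambda^*(x_k)\|^2\le\|\nabla F(x_k)\lambda_k\|^2$ via feasibility of $\lambda_k$. The only cosmetic difference is that you keep $\|Y_k\lambda_k\|^2$ as the main descent quantity and convert at the end, whereas the paper extracts $\|\nabla F(x_k)\lambda_k\|^2$ directly in the descent inequality; the structure and rate are identical.
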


Theorem \ref{theorem:upper-fbound} shows Algorithm \ref{alg:MoCo} will converge to a Pareto stationary point with an improved convergence rate, if the sequence of functions $F(x_1), F(x_2), \dots, F(x_k)$ are bounded. Note that since this result holds for Assumption \ref{assumption:h}, this result also covers the nested MOO with simultaneous update (Lemma \ref{lemma:hkm error}). Next, we provide an improved convergence rate for MoCo under Assumption \ref{assumption:h-new}.

\begin{Theorem}\label{theorem:upper-fbound-stronger}
Consider the sequences generated by Algorithm \ref{alg:MoCo}. Furthermore assume there exists a constant $F>0$ such that for all $k\in[K]$,  $\|F(x_k)\|\leq F$. Then, under Assumptions \ref{assumption:lip} and \ref{assumption:h-new}, if we choose step sizes {$\alpha_k = \Theta(K^{-\frac{1}{2}})$, $\beta_k=\Theta(K^{-\frac{1}{2}})$, $\gamma_k=\Theta(K^{-\frac{3}{4}})$, and $\rho = 0$}, it holds that
\begin{equation}
    \frac{1}{K}\sum_{k=1}^K \E\left[\|\nabla F(x_k)\lambda^*(x_k)\|^2\right] =  {\mathcal{O}\Big(MK^{-\frac{1}{2}}\Big)}.
\end{equation}    
\end{Theorem}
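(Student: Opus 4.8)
The plan is to run a descent argument on the bounded merit function $\Phi_k := \langle \lambda_k, F(x_k)\rangle = \sum_m \lambda_{k,m} f_m(x_k)$, and to bypass the non-uniqueness and non-Lipschitzness of $\lambda^*(x)$ that forced the regularization $\rho$ in Theorem \ref{theorem:upper} by exploiting minimality: since $\lambda_k \in \Delta^M$ by construction of \eqref{eq:lambda update}, the defining property of $\lambda^*(x_k)$ as the minimizer of $\|\nabla F(x_k)\lambda\|^2$ over $\Delta^M$ gives $\|\nabla F(x_k)\lambda^*(x_k)\|^2 \le \|\nabla F(x_k)\lambda_k\|^2$ for every $k$. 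Hence it suffices to bound $\frac1K\sum_k \E\|\nabla F(x_k)\lambda_k\|^2$, and we never need $\lambda_k$ to track $\lambda^*(x_k)$, which is exactly why $\rho=0$ is admissible here. First I would write the $\lambda_k$-weighted smoothness descent along \eqref{eq:x update}, $\langle\lambda_k, F(x_{k+1})-F(x_k)\rangle \le -\alpha_k\langle\nabla F(x_k)\lambda_k, Y_k\lambda_k\rangle + \tfrac12(\max_m L_{m,1})\alpha_k^2\|Y_k\lambda_k\|^2$, and add the weight increment to obtain $\Phi_{k+1}-\Phi_k \le -\alpha_k\langle\nabla F(x_k)\lambda_k, Y_k\lambda_k\rangle + \tfrac12(\max_m L_{m,1})\alpha_k^2\|Y_k\lambda_k\|^2 + \langle\lambda_{k+1}-\lambda_k, F(x_{k+1})\rangle$. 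A Young step on $\langle\nabla F(x_k)\lambda_k, Y_k\lambda_k\rangle \ge \tfrac12\|\nabla F(x_k)\lambda_k\|^2 - \tfrac12\|(Y_k-\nabla F(x_k))\lambda_k\|^2$ then isolates the target quantity at the cost of the gradient-tracking error $\|Y_k-\nabla F(x_k)\|^2$.

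Summing and telescoping, the hypothesis $\|F(x_k)\|\le F$ makes the potential gap harmless, $\Phi_1-\Phi_{K+1}\le 2F$, regardless of how the weights move; this is the precise role of the bounded-function assumption, standing in for Lipschitz control of $\lambda^*$. Two error sums remain. The first is $\frac1K\sum_k\E\|Y_k-\nabla F(x_k)\|^2$, which I would control through a separate Lyapunov recursion on $\E\|y_{k,m}-\nabla f_m(x_k)\|^2$ driven by \eqref{eq:y update}: the contraction factor is $1-\Theta(\beta_k)$, the stochastic fluctuation contributes $\Theta(\beta_k\sigma_m^2)$ and the bias contributes $\Theta(\beta_k)$ by Assumption \ref{assumption:h-new} (this is the sharper bias that, unlike Assumption \ref{assumption:h}, permits symmetric stepsizes), while the moving-target drift $\|\nabla f_m(x_{k+1})-\nabla f_m(x_k)\|\le L_{m,1}\alpha_k\|Y_k\lambda_k\|$ contributes a term proportional to $\tfrac{\alpha_k^2}{\beta_k^2}\|Y_k\lambda_k\|^2$. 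The second is the weight-drift sum $\sum_k\langle\lambda_{k+1}-\lambda_k, F(x_{k+1})\rangle$; the crucial observation is that non-expansiveness of $\Pi_{\Delta^M}$ together with \eqref{eq:lambda update} at $\rho=0$ yields $\|\lambda_{k+1}-\lambda_k\|\le\gamma_k\|Y_k\|_{\mathrm{op}}\|Y_k\lambda_k\|\le\gamma_k C_Y\|Y_k\lambda_k\|$ with $C_Y:=(\sum_m C_{y,m}^2)^{1/2}$, so this term is \emph{self-scaling} in $\|Y_k\lambda_k\|$ rather than the naive $O(\gamma_k)$, and with $\|F(x_{k+1})\|\le F$ it is at most $\gamma_k C_Y F\|Y_k\lambda_k\|$.

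The final step is to close the resulting coupled system. Writing $a:=\frac1K\sum_k\E\|\nabla F(x_k)\lambda_k\|^2$ and $b:=\frac1K\sum_k\E\|Y_k-\nabla F(x_k)\|^2$ and using $\frac1K\sum_k\E\|Y_k\lambda_k\|^2 = O(a+b)$, the descent inequality gives $a \lesssim \tfrac{F}{K\alpha} + b + \alpha + \tfrac{\gamma^2}{\alpha^2}$, where the last term comes from Young-absorbing the self-scaling weight drift, and the tracking recursion gives $b \lesssim \tfrac{M}{\beta K} + M\beta\sigma^2 + (\max_m L_{m,1}^2)\,M\,\tfrac{\alpha^2}{\beta^2}\,(a+b)$. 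With $\alpha_k,\beta_k=\Theta(K^{-1/2})$ and $\gamma_k=\Theta(K^{-3/4})$, every explicit term is $O(MK^{-1/2})$: in particular $\tfrac{\gamma^2}{\alpha^2}=\Theta(K^{-1/2})$, $\tfrac{F}{K\alpha}=\Theta(K^{-1/2})$, and $\tfrac{M}{\beta K}=\Theta(MK^{-1/2})$. I expect the main obstacle to be closing this coupling: the self-scaling drift injects a feedback term into $b$ with coefficient proportional to $\alpha^2/\beta^2$, which is $\Theta(1)$ in $K$ precisely because $\alpha_k$ and $\beta_k$ share the same order, so the two inequalities decouple only if the constant ratio $\alpha_k/\beta_k$ is chosen small enough (depending on $\max_m L_{m,1}$ and $M$) that $(\max_m L_{m,1}^2)M\,\alpha^2/\beta^2<1$. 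This constant-level tuning is what keeps both stepsizes at $\Theta(K^{-1/2})$ while preserving contractivity, and is the delicate point distinguishing this analysis from the $\rho>0$ regime of Theorem \ref{theorem:upper}. Solving the closed system yields $a=O(MK^{-1/2})$, and the minimality inequality $\|\nabla F(x_k)\lambda^*(x_k)\|^2\le\|\nabla F(x_k)\lambda_k\|^2$ delivers the claimed bound.
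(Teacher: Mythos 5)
Your proposal is correct and follows essentially the same route as the paper's proof: the same $\lambda_k$-weighted smoothness descent with summation by parts on the weight increments, the same self-scaling bound $\|\lambda_{k+1}-\lambda_k\|\le \gamma_k C_y\|Y_k\lambda_k\|$ absorbed via Young's inequality into a $\gamma_K^2/\alpha_K^2$ term, the same $Y_k$-tracking recursion whose drift feeds back a $\Theta(\alpha_K^2/\beta_K^2)\,\E\|Y_k\lambda_k\|^2$ term, and the same closure of the coupled system by taking the constant in $\alpha_K/\beta_K$ small enough (the paper uses $\alpha_K\le \beta_K/(4\bar L_1\sqrt{M})$), finishing with the minimality inequality $\|\nabla F(x_k)\lambda^*(x_k)\|^2\le\|\nabla F(x_k)\lambda_k\|^2$. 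You have correctly identified all the load-bearing steps, including the role of the bounded-function assumption and why $\rho=0$ is admissible here.
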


Theorem \ref{theorem:upper-fbound-stronger} improves upon the convergence rate presented in  Theorem \ref{theorem:upper-fbound} using Assumption \ref{assumption:h-new}. 

\begin{Remark}[On the stronger assumptions]
Both Theorems \ref{theorem:upper-fbound} and \ref{theorem:upper-fbound-stronger} provide the convergence rates of the MoCo algorithm under the stronger condition that the sequence of objective values at $x_k$ are bounded. This assumption is also used in other stochastic analysis works such as \citep{levy2021storm+}. Furthermore, this kind of condition is naturally satisfied in practical applications of stochastic non-convex optimization such as reinforecement learning \citep{wu2020finite}. In order for the stochastic gradient estimator to satisfy the Assumption \ref{assumption:h-new} used in Theorem \ref{theorem:upper-fbound-stronger} in the nested MOO setting, one need to run multiple nested loop steps to achieve the required bias. This kind of multi-step inner loop sampling is used in Actor-critic methods \citep{konda1999actor}, where multiple update steps are used for critic update to increase the stability in critic.
\end{Remark}

\section{Related work}\label{sec:related-work}

 To put our work in context, we review prior art that we group in the following two categories.

\textbf{Multi task learning.} MTL algorithms find a common model that can solve multiple possibly related tasks. MTL has shown great success in many fields such as natural language processing, computer vision and robotics \citep{hashimoto2016joint}, \citep{ruder2017overview}, \citep{zhang2021survey}, \citep{vandenhende2021multi}. One line of research involves designing machine learning models that facilitate MTL, such as architectures with task specific modules \citep{misra2016cross}, with attention based mechanisms \citep{rosenbaum2017routing}, \citep{yang2020multi}, or with different path activation corresponding to different tasks. Our method is model agnostic, and thus can be applied to these methods in a complementary manner. Another line of work focuses on decomposing a problem into multiple local tasks and learn these tasks using smaller models \citep{rusu2015policy}, \citep{parisotto2015actor}, \citep{teh2017distral}, \citep{ghosh2017divide}. These models are then aggregated into a single model using knowledge distillation \citep{hinton2015distilling}. Our method does not require multiple models in learning, and focus on learning different tasks simultaneously using a single model.  

Furthermore, recent works in MTL have analyzed MTL from different viewpoints. In  \citep{wang2021bridging}, the authors explore the connection between gradient based meta learning and MTL. In \citep{ye2021multi}, the meta learning problem with multiple objectives in the upper levels has been tackled via a gradient based MOO approach. The importance of task grouping in MTL is analysed in works such as \citep{fifty2021efficiently}. In \citep{meyerson2020traveling}, the authors show that seemingly unrelated tasks can be used for MTL. Our proposed method is orthogonal to these approaches, and can be combined with theses methods in stochastic settings to achieve better performance.  

\textbf{Gradient-based MOO.} This line of work involves optimizing multiple objectives simultaneously using gradient manipulations. A foundational algorithm in this regard is MGDA\citep{Desideri2012mgda}, which dynamically combine gradients to find a common descent direction for all objectives. A comprehensive convergence analysis for the deterministic MGDA algorithm has been provided in \citep{fliege2019complexity}. Recently, \citep{liu2021stochastic} extends this analysis to the stochastic counterpart of multi-gradient descent algorithm, for smooth convex and strongly convex functions. However, this work makes strong assumptions on the bias of the stochastic gradient and does not consider the nested MOO setting that is central to the multi-task reinforcement learning. In \cite{yang2021pareto}, the authors establish convergence of stochastic MGDA under the assumption of access to true convex combination coefficients, which may not be true in a practical stochastic optimization setting. In \citep{gu2022min}, the authors propose a bi-level multi-objective min-max optimization algorithm for minimizing worst objective at each iteration, instead of considering the worst descent direction as in MGDA. Another related line of work considers the optimization challenges related to MTL, considering task losses as objectives. One common approach is to find gradients for balancing learning of different tasks. The simplest way is to re-weight per task losses based on a specific criteria such as uncertainty \citep{kendall2017multi}, gradient norms \citep{chen2018gradnorm} or task difficulty \citep{guo2018dynamic}. These methods are often heuristics and may be unstable. More recent work \citep{sener2018multi}, \citep{yu2020gradient}, \citep{liu2021conflict}, \citep{gu2021adversarial} introduce gradient aggregation methods which mitigate conflict among tasks while preserving utility. In \citep{sener2018multi}, MTL has been first tackled through the lens of MOO techniques using MGDA. In \citep{yu2020gradient}, a new method called PCGrad has been developed to amend gradient magnitude and direction in order to avoid  conflicts among per task gradients. In \citep{liu2021conflict}, an algorithm  similar to MGDA, named CAGrad, has been developed,  which uniquely minimizes the average task loss. In \citep{liu2021towards}, an impartial objective gradient modification mechanism has been studied. A Nash bargaining solution for MTL has been proposed in \citep{navon2022multi} for weighting per objective gradients.  

All the aforementioned works on MTL use the deterministic objective gradient for analysis (if any), albeit the accompanying empirical evaluations are done in a stochastic setting. There are also gradient-based MOO algorithms that find a set of Pareto optimal points for a given problem rather than one. To this end, works such as \citep{liu2021profiling, liu2021stochastic, lin2019pareto, mahapatra2021exact, navon2020learning} develop algorithms that find multiple points in the Pareto front, ensuring some quality of the obtained Pareto points. Our work is orthogonal to this line of research, and can potentially be combined with those ideas to achieve better performance.

\textbf{Non gradient-based MOO.} In addition to gradient based MOO which is the main focus of this paper, there also exist non gradient based blackbox MOO algorithms such as \citep{deb2002fast, golovin2020random, knowles2006parego, konakovic2020diversity}, which are based on an evolutionary algorithm or Bayesian optimization. However, these methods often suffer from the curse of dimensionality, and may not be feasible in large scale MOO problems. 

  

\begin{figure}[t]
     \centering
     \begin{subfigure}[b]{0.19\textwidth}
         \centering
         \includegraphics[width=\textwidth]{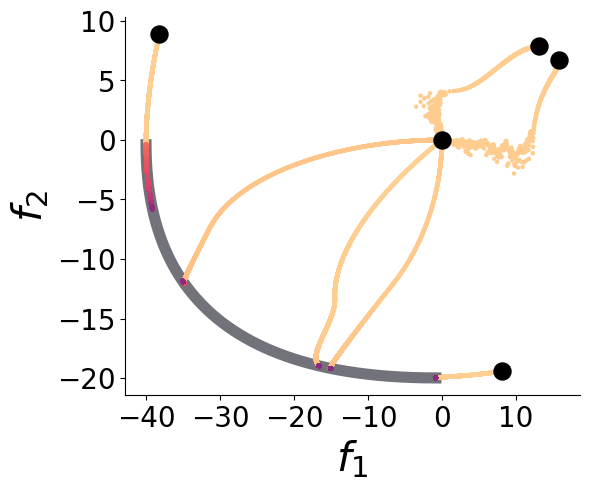}
         \caption{MGDA}
         \label{fig:front-mgda}
     \end{subfigure}
     \hfill
     \begin{subfigure}[b]{0.17\textwidth}
         \centering
         \includegraphics[width=\textwidth]{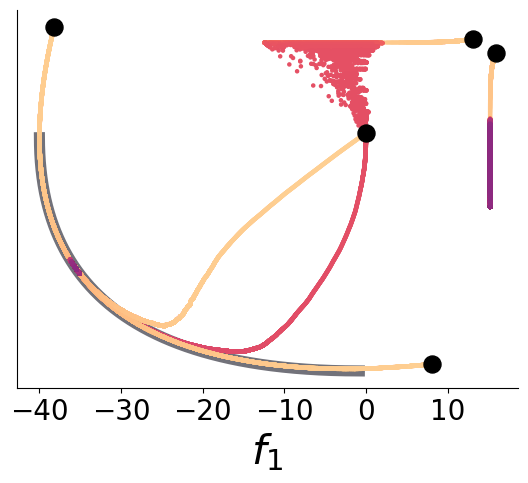}
         \caption{SMG}
         \label{fig:front-smgd}
     \end{subfigure}
     \hfill
     \begin{subfigure}[b]{0.17\textwidth}
         \centering
         \includegraphics[width=\textwidth]{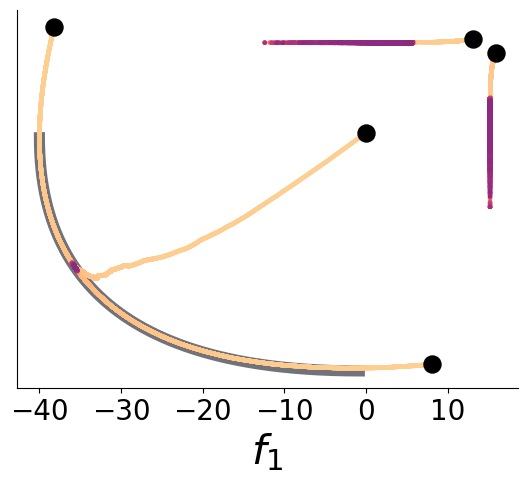}
         \caption{PCGrad}
         \label{fig:front-pcgrad}
     \end{subfigure}
     \hfill
     \begin{subfigure}[b]{0.17\textwidth}
         \centering
         \includegraphics[width=\textwidth]{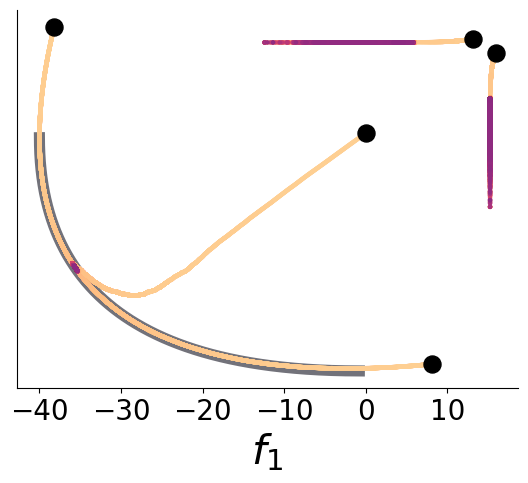}
         \caption{CAGrad}
         \label{fig:front-cagrad}
     \end{subfigure}
     \begin{subfigure}[b]{0.19\textwidth}
         \centering
         \includegraphics[width=\textwidth]{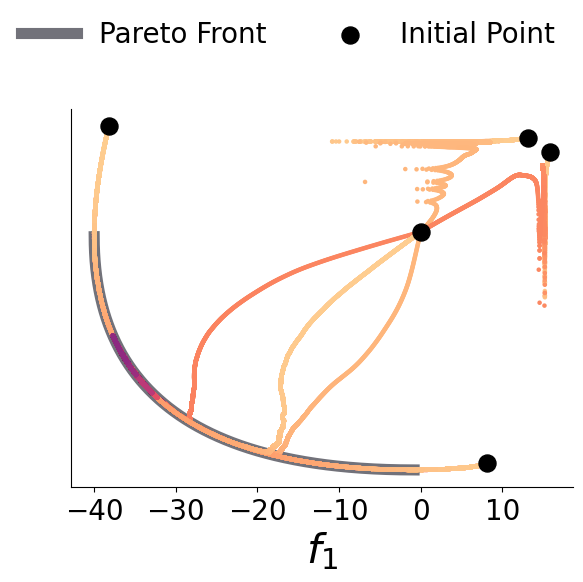}
         \caption{MoCo (ours)}
         \label{fig:front-tracking}
     \end{subfigure}
        \caption{Comparison of trajectories in the objective space. We use five initializations in the same toy example in Figure \ref{fig:toy-comp}, and plot the optimization trajectory in the objective space. MGDA converges to the Pareto front from all of the initializations. SMG, PCGrad, and CAGrad which only have access to single stochastic gradient per objective fail to converge to the Pareto front in some initializations. Our MoCo follows a similar trajectory to that of MGDA, and finds the Pareto front for each initialization.}
        \label{fig:front-comp}
\end{figure}

 \begin{table}[tb]
\small
    \centering
    \setlength{\tabcolsep}{1.0em} 
{\renewcommand{\arraystretch}{1.3}
    \begin{tabular}{c c c c c c}
    \hline
         \multirow{3}{*}{Method} & \multicolumn{2}{c}{Segmentation} & \multicolumn{2}{c}{Depth} & \multirow{3}{*}{$\Delta m\% \downarrow$}  \\ \cline{2-3}\cline{4-5}
         & \multicolumn{2}{c}{(Higher Better)} & \multicolumn{2}{c}{(Lower Better)} &\\
         & mIoU & Pix Acc & Abs Err & Rel Err & \\ \hline\hline
         Independent &            74.01       & 93.16             & 0.0125 & 27.77 & -  \\ \hline
         Cross-Stitch \citep{misra2016cross} & 73.08        & 92.79          & 0.0165 & 118.5 &  90.02\\
         MTAN \citep{liu2019end} &                75.18        & 93.49          & 0.0155 & 46.77 &  22.60\\
         MGDA \citep{sener2018multi}  &                68.84       & 91.54          & 0.0309 & \textbf{33.50} &  44.14\\
         PCGrad \citep{yu2020gradient} &              75.13       & 93.48          & 0.0154 & 42.07 &  18.29\\
         GradDrop \citep{chen2020just} &            75.27       & 93.53    & 0.0157 & 47.54 &   23.73\\
         CAGrad \citep{liu2021conflict} &             75.16       & 93.48          & \textbf{0.0141} & 37.60 &     11.64\\
         \textbf{MoCo (ours)} & \textbf{75.42} & \textbf{93.55} & 0.0149 & 34.19 &  \textbf{9.90}\\ \hline
    \end{tabular}}
    \vspace{5 pt}
    \caption{Multi-task supervised learning on CityScape dataset with the 7-class semantic segmentation and depth estimation results. Results are averaged over 3 independent runs. CAGrad, PCGrad, GradDrop and our method are applied on the MTAN backbone.}
    \label{tab:cityscape-results}
\end{table}

\section{Experiments} \label{sec:experiments}

In this section,   we first provide further illustration of  MoCo in comparison with existing gradient-based MOO algorithms in the toy example. Then we provide empirical comparison of our proposed method with the state-of-the-art MTL algorithms, using challenging and widely used real world MTL benchmarks in supervised and reinforcement learning settings. The  details of hyperparameters are provided in Appendix \ref{app:experiments}.
 
\textbf{Toy example.}
To further elaborate on how MoCo converges to a Pareto stationary point, we again optimize the two objectives given in Figure \ref{fig:toy-comp} and demonstrate the performance in the objective space (Figure \ref{fig:front-comp}). MGDA with true gradients converges to a Pareto stationary point in all initializations. However, it can be seen that SMG, PCGrad,and CAGrad methods fail to converge to a Pareto stationary point, and end up in dominated points in the objective space for some initializations. This is because these algorithms use a biased multi-gradient that does not become zero. In contrast, MoCo converges to Pareto stationary points in every initialization, and follows a similar trajectory to MGDA. 

 \vspace{-0.2cm}
\subsection{Supervised learning}
We compare MoCo with existing MTL algorithms using NYU-v2 \citep{silberman2012indoor} and CityScapes \citep{cordts2015cityscapes} datasets. 
We follow the experiment setup of \citep{liu2021conflict} and combine our method with MTL method MTAN \citep{liu2019end}, which applies an attention mechanism. We evaluate our method in comparison to CAGrad, PCGrad, vanilla MTAN and Cross-Stitch \citep{misra2016cross}. Following \citep{maninis2019attentive, liu2021conflict, navon2022multi}, we use the per-task performance drop of a metric $S_m$ for method $\mathcal{A}$ with respect to baseline $\mathcal{B}$ as a measure of the overall performance of a given method. Formally, it is given by $\Delta m = \frac{1}{M}\sum_{m=1}^{M}(-1)^{\ell_m}\left(S_{\mathcal{A},m} - S_{\mathcal{B}, m}\right)/S_{\mathcal{B}, m}$ , where $M$ is the number of tasks,  $S_{\mathcal{B}, m}$ and $S_{\mathcal{A}, m}$ are the values of metric $S_m$ obtained by the baseline and the compared method respectively. Here, $\ell_m=1$ if higher values for $S_m$ are better and $0$ otherwise.

The results of the experiments are shown in Table \ref{tab:cityscape-results} and \ref{tab:nyu-v2-results}. Our method, MoCo, outperforms all the existing MTL algorithms in terms of $\Delta m \%$ for both Cityscapes and NYU-v2 datasets. Since our method focuses on the gradient correction, our method can also be applied on top of existing gradient based MOO methods. Additional experiment results on this regard are provided in Appendix \ref{app:experiments}.

\begin{table}[tb]
\vspace{-0.2cm}
\small
    \centering
    \setlength{\tabcolsep}{0.3em} 
{\renewcommand{\arraystretch}{1.4}
    \begin{tabular}{c c c c c c c c c c c}
    \hline
         \multirow{3}{*}{Method} & \multicolumn{2}{c}{Segmetation} & \multicolumn{2}{c}{Depth} & \multicolumn{5}{c}{Surface Normal} & \multirow{3}{*}{$\Delta m \% \downarrow$} \\ \cline{2-3}\cline{4-5}\cline{6-10}
         & \multicolumn{2}{c}{(Higher Better)} & \multicolumn{2}{c}{(Lower Better)} & \multicolumn{2}{c}{\makecell{Angle Distance \\ (Lower Better)}} & \multicolumn{3}{c}{\makecell{Within $t^\circ$ \\ (Higher better)}}\\
         & mIoU & Pix Acc & Abs Err & Rel Err & Mean & Median & 11.25 & 22.5 & 30\\ \hline\hline
         Independent            & 38.30 & 63.76 & 0.6754 & 0.2780 & 25.01 & 19.21 & 30.14 & 57.20 & 69.15 & - \\ \hline
         Cross-Stitch \citep{misra2016cross} & 37.42 & 63.51 & 0.5487 & \textbf{0.2188} & 28.85 & 24.52 & 22.75 & 46.58 & 59.56 & 6.96 \\
         MTAN   \citep{liu2019end}    & 39.29 & 65.33 & 0.5493 & 0.2263 & 28.15 & 23.96 & 22.09 & 47.50 & 61.08 & 5.59 \\
         MGDA \citep{sener2018multi}               & 30.47 & 59.90 & 0.6070 & 0.2555 & \textbf{24.88} & \textbf{19.45} & \textbf{29.18} & \textbf{56.88} & \textbf{69.36} & 1.38 \\
         PCGrad \citep{yu2020gradient}           & 38.06 & 64.64 & 0.5550 & 0.2325 & 27.41 & 22.80 & 23.86 & 49.83 & 63.14 & 3.97 \\
         GradDrop \citep{chen2020just}       & 39.39 & 65.12 & \textbf{0.5455} & 0.2279 & 27.48 & 22.96 & 23.38 & 49.44 & 62.87 & 3.58 \\
         CAGrad \citep{liu2021conflict}     & 39.79 & 65.49 & 0.5486 & 0.2250 & 26.31 & 21.58 & 25.61 & 52.36 & 65.58 & 0.20\\
         \textbf{MoCo (ours)}     & \textbf{40.30}  & \textbf{66.07}  & 0.5575  & 0.2135 & 26.67  & 21.83  & 25.61  & 51.78 & 64.85 &  \textbf{0.16}\\ \hline
    \end{tabular}}
    \vspace{5 pt}
    \caption{Multi-task supervised learning on NYU-v2 dataset with 13-class semantic segmentation, depth estimation, and surface normal prediction results on NYU-v2 dataset. Results are averaged over 3 independent runs. CAGrad, PCGrad, GradDrop and our method are applied on the MTAN backbone.}
    \label{tab:nyu-v2-results}
\end{table}

\subsection{Reinforcement learning}\label{app:rl}
For the multi task reinforcement learning setting, we use the multi task reinforcement learning benchmark MT10 available in Met-world environment \citep{yu2020meta}. 
We follow the experiment setup used in \citep{liu2021conflict} and provide the empirical comparison between our MoCo method and the existing baselines.  Specifically, we use MTRL  codebase \citep{Sodhani2021MTRL} and use soft actor critic (SAC) \citep{haarnoja2018soft} as the underlying reinforcement learning algorithm. All the methods are trained for 2 million steps with a batch size of 1280. Each method is evaluated once every 10000 steps and the highest average test performance of a method over 5 random seeds over the entire training stage is reported in Table \ref{tab:mt10-results}.  
In this experiment, the vanilla MoCo outperforms PCGrad, but its performance is not as good as CAGrad that optimizes the average performance of all tasks.
We further run the gradient correction of MoCo on top of CAGrad, and the resultant algorithm outperforms the vanilla CAGrad. This suggests that incorporating the gradient correction of MoCo in existing gradient based MTL algorithms also boosts their performance. 

\begin{table}[htb]
\small
    \centering
    \setlength{\tabcolsep}{1.0em} 
{\renewcommand{\arraystretch}{1.3}
    \begin{tabular}{c c}
    \hline
         Method  & success
          (mean $\pm$ stderr)\\
         \hline\hline
         Multi-task SAC & $0.49 \pm 0.073$\\
         Multi-task SAC + Task Encoder& $0.54 \pm 0.047$ \\
         Multi-headed SAC & $0.61 \pm 0.036$\\
         PCGrad & $0.72 \pm 0.022$\\
         CAGrad & $0.83 \pm 0.045$\\ \hline
         \textbf{MoCo (ours)} & $0.75 \pm 0.050$\\ 
         \textbf{CAGrad + MoCo (ours)} & $0.86 \pm 0.022$\\ \hline
         One SAC agent per task (upper bound) & $0.90 \pm 0.032$\\ \hline
    \end{tabular}}
    \vspace{5 pt}
    \caption{Multi-task reinforcement learning results on MT10 Metworld benchmark.}
    \label{tab:mt10-results}
            \vspace{-0.4cm}
\end{table}


\section{Conclusions} \label{sec:conclusion}
We introduced a new algorithm termed MoCo for multi-objective gradient descent in the stochastic setting. This algorithm serves as an approach to addressing the known bias issue in the stochastic multi-gradient. We theoretically analysed and proved convergence of the algorithm to a Pareto stationary point, for smooth non-convex objectives. We also provided a generalization of this algorithm to the nested MOO setting. We then conducted empirical evaluation of our method and showed that MoCo can outperforms state-of-the-art MTL algorithms in challenging MTL benchmarks. One limitation of this algorithm is we cannot control the point to which MoCo will converge to. We showed that it is possible to combine MoCo with existing controlled gradient based MTL algorithms, which will further improve the performance.

\bibliography{bib}
\bibliographystyle{plainnat}



\appendix

\section{Proofs for Nested MOO}\label{app:moco-inexact}
In this section we provide the assumptions used for obtaining the results in Lemmas \ref{lemma:hkm error} and \ref{lemma:hkm error-new}, and prove the respective results. Throughout the section, we write $\E[\cdot|\mathcal{F}_k]$ as $\E_k[\cdot]$ for conciseness. Consider the following conditions
\begin{enumerate}[label=(\alph*)]
\setlength\itemsep{-0.15em}
    \item\label{item:strong convex} For any $x \in \mathbb{R}^d$, $l_m(x,z)$ is strongly convex w.r.t. $z$ with modulus $\mu_m > 0$.
    \item\label{item:g lip 2} There exist constants $L_{xz}, l_{xz}, l_{zz}$ such that $\nabla_z l_m(x,z)$ is $L_{xz}$-Lipschitz continuous w.r.t. $x$; $\nabla_z l_m(x,z)$ is $L_{zz}$-Lipschitz continuous w.r.t. $z$. $\nabla_{xz}l_m(x,z)$, $\nabla_{zz} l_m(x,z)$ are respectively $l_{xz}$-Lipschitz and $l_{zz}$-Lipschitz continuous w.r.t. $(x,z)$.
    \item\label{item:f lip} There exist constants $l_{fx}, l_{fz}, l_{fz}', l_z$ such that $\nabla_x f_m(x,z)$ and $\nabla_z f_m(x,z)$ are respectively $l_{fx}$ and $l_{fz}$ Lipschitz continuous w.r.t. $z$; $\nabla_z f_m(x,z)$ is $l_{fz}'$-Lipschitz continuous w.r.t. $x$; $f_m(x,z)$ is $l_z$-Lipschitz continuous w.r.t. $z$.
     \item\label{item:moment2} There exist constants $C_x,C_{xz},C_z,C_l$ such that $\E_k\|\nabla_x f_m(x_k,z_{k,m},\xi_k)\|^2 \leq C_x^2$, $\E_k\|\nabla_{xz} l_m(x_k,z_{k,m},\varphi_k')\|^2 \leq C_{xz}^2$, $\E_k\|\nabla_z f_m(x_k,z_{k,m},\xi_k)\|^2 \leq C_z^2$ and $\E_k\|\nabla_{z} l_m(x_k,z_{k,m},\varphi_k)\|^2 \leq C_l^2$.
     \item \label{item:H} There exist constants $C_H$ and $\sigma_H$ such that
     \begin{enumerate}[label=(\roman*)]
         \item\label{item:H-1}  $\|\E_k[H_{k,m}^{zz}]-[\nabla_{zz}l_m(x_k,z_{k,m})]^{-1}\|^2 \leq C_H \frac{\alpha_K^2}{\beta_K^2}$ and $\E_k\|H_{k,m}^{zz}\|^2 \leq \sigma_H^2$, where $\frac{\alpha_K^2}{\beta_K^2} < 1$, or
         \item\label{item:H-2} $\|\E_k[H_{k,m}^{zz}]-[\nabla_{zz}l_m(x_k,z_{k,m})]^{-1}\|^2 \leq C_H \beta_k$ and $\E_k\|H_{k,m}^{zz}\|^2 \leq \sigma_H^2$, where $\beta_K < 1$.
     \end{enumerate}
\end{enumerate}
The above conditions are standard in the literature \citep{ghadimi2018approximation}. In particular, condition \ref{item:H} on $H_{k,m}^{zz}$ can be guaranteed by \citep[Algorithm 3]{ghadimi2018approximation}.

 Before presenting the proofs for Lemmas \ref{lemma:hkm error} and \ref{lemma:hkm error-new}, we first state and prove the following helping Lemma that is useful in the proof of \ref{lemma:hkm error-new}, which is similar to \citep[Lemma 1]{rakhlin2011making}.

\begin{Lemma}[Convergence of $z_{t, k, m}$]\label{lemma:z-convergence}
Consider the sequences generated by \eqref{eq:z update-new}. Under conditions \ref{item:strong convex}--\ref{item:moment2} and \ref{item:H}\ref{item:H-2}, and with $\eta_t = \frac{1}{\mu_m t}$, we have for any $m$ that
\begin{align}\label{eq:z-convergence}
    \E\|z_{t, k,m}-z_{k,m}^*\|^2 \leq \frac{4 C_\ell^2}{\mu^2_m t}.
\end{align}
\end{Lemma}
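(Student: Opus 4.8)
The plan is to recognize this as the textbook analysis of (projection-free) stochastic gradient descent on the $\mu_m$-strongly convex inner objective $l_m(x_k,\cdot)$ with the $\eta_t = 1/(\mu_m t)$ step-size schedule, following the template of \citep[Lemma 1]{rakhlin2011making}. Fix the outer index $k$ and the objective $m$, and abbreviate $z_t := z_{t,k,m}$, $z^* := z_{k,m}^*$, and $g_t := \nabla_z l_m(x_k,z_t,\varphi_{t,k})$, whose conditional mean (given $z_t$ and the outer variables) is the true gradient $\bar g_t := \nabla_z l_m(x_k,z_t)$. Because $z^*$ minimizes $l_m(x_k,\cdot)$ we have $\nabla_z l_m(x_k,z^*)=0$. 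First I would expand the one-step squared distance as $\|z_{t+1}-z^*\|^2 = \|z_t-z^*\|^2 - 2\eta_t\langle g_t, z_t-z^*\rangle + \eta_t^2\|g_t\|^2$, take the expectation conditioned on $z_t$, and use unbiasedness together with the second-moment bound $\E\|g_t\|^2\le C_\ell^2$ from condition \ref{item:moment2} to replace $g_t$ by $\bar g_t$ in the cross term and $\|g_t\|^2$ by $C_\ell^2$ in the last term.

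The second step invokes strong convexity. Condition \ref{item:strong convex} gives the strong-monotonicity inequality $\langle \bar g_t - \nabla_z l_m(x_k,z^*),\, z_t-z^*\rangle \ge \mu_m\|z_t-z^*\|^2$, and since $\nabla_z l_m(x_k,z^*)=0$ this yields $\langle \bar g_t,\, z_t-z^*\rangle \ge \mu_m\|z_t-z^*\|^2$. Substituting this and taking full expectation, with $a_t := \E\|z_t-z^*\|^2$ and $\eta_t = 1/(\mu_m t)$, I obtain the scalar recursion $a_{t+1} \le \big(1-\tfrac{2}{t}\big)\,a_t + \tfrac{C_\ell^2}{\mu_m^2 t^2}$.

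The final step is an induction proving $a_t \le \tfrac{4 C_\ell^2}{\mu_m^2 t}$. For the base case I would use the crude bound obtained by combining strong monotonicity, $\|\bar g_t\|\ge \mu_m\|z_t-z^*\|$, with Jensen's inequality, $\|\bar g_t\|^2 = \|\E[g_t\mid z_t]\|^2 \le \E\|g_t\|^2 \le C_\ell^2$, which gives $a_t\le C_\ell^2/\mu_m^2$ for every $t$ and in particular settles $t=1$. For the inductive step, assuming $a_t\le \tfrac{4C_\ell^2}{\mu_m^2 t}$, the recursion yields $a_{t+1}\le \tfrac{4C_\ell^2}{\mu_m^2 t} - \tfrac{7C_\ell^2}{\mu_m^2 t^2}$, and the target $a_{t+1}\le \tfrac{4C_\ell^2}{\mu_m^2(t+1)}$ reduces, after using $\tfrac{4}{t}-\tfrac{4}{t+1}=\tfrac{4}{t(t+1)}$, to the elementary inequality $\tfrac{4}{t(t+1)}\le \tfrac{7}{t^2}$, i.e. $4t\le 7(t+1)$, which holds for every $t\ge 1$.

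The expansion and the resulting closed-form recursion are routine, and I expect no substantive obstacle beyond bookkeeping. The two points that require care are: (i) ensuring the second-moment control $\E\|g_t\|^2\le C_\ell^2$ is available at \emph{every} inner iterate $z_{t,k,m}$, whereas condition \ref{item:moment2} is literally stated at the returned iterate $z_{k,m}=z_{T,k,m}$, so I would note that the same bound applies along the inner trajectory; and (ii) the $t=1$ term, where the factor $1-2/t$ turns negative so that the recursion alone does not directly anchor the induction — this is precisely why the crude strong-convexity bound $a_t\le C_\ell^2/\mu_m^2$ is needed to start it.
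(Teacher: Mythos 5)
Your proposal matches the paper's own proof essentially step for step: the same one-step expansion with unbiasedness and the second-moment bound from condition \ref{item:moment2}, the same strong-convexity recursion $a_{t+1}\le(1-\tfrac{2}{t})a_t+\tfrac{C_\ell^2}{\mu_m^2 t^2}$, and the same crude bound $a_t\le \mathcal{O}(C_\ell^2/\mu_m^2)$ via Cauchy--Schwartz/Jensen to anchor the induction at small $t$. Your write-up of the inductive step (reducing to $4t\le 7(t+1)$) and your remark that condition \ref{item:moment2} must be read as holding along the whole inner trajectory are if anything slightly more explicit than the paper's, but the argument is the same.
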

\begin{proof}

We start by obtaining a bound for $\E\|z_{1, k, m}-z_{k,m}^*\|^2$. From the $\mu_m$-strong convexity of $\ell_m$, we have 
\begin{equation}
    \langle z_{1, k,m}-z_{k,m}^*,\nabla_z l_m(x_k,z_{1, k,m}) \rangle \geq \frac{\mu_m}{2} \| z_{1, k,m}-z_{k,m}^* \|^2,
\end{equation}
which along with Cauchy-Schwartz inequality, gives
\begin{equation}\label{eq:z1-1}
    \| \nabla_z l_m(x_k,z_{1, k,m}) \|^2 \geq \frac{\mu_m^2}{4} \| z_{1, k,m}-z_{k,m}^* \|^2.
\end{equation}
Then, rearranging the above inequality and taking total expectation on both sides, it follows that
\begin{align}\label{eq:z1-3}
    \E\| z_{1, k,m}-z_{k,m}^* \|^2 
    &\leq \frac{4}{\mu_m^2}\E\| \nabla_z l_m(x_k,z_{1, k,m}) \|^2 \nonumber \\
    &= \frac{4}{\mu_m^2} \E\| \E \left[ \nabla_z l_m(x_k,z_{1, k,m}, \varphi_{1,k})) ~|~  x_k,z_{1, k,m}\right] \|^2 \nonumber \\
    &\leq \frac{4}{\mu_m^2} \E \left[ \E \left[ \|  \nabla_z l_m(x_k,z_{1, k,m}, \varphi_{1,k}))  \|^2~|~  x_k,z_{1, k,m}\right] \right] \nonumber \\ 
    &= \frac{4}{\mu_m^2} \E  \|  \nabla_z l_m(x_k,z_{1, k,m}, \varphi_{1,k}))  \|^2 \nonumber\\
    &\leq \frac{4}{\mu_m^2} C^2_\ell,
\end{align}
where the second inequality is due to Jensen's inequality and the last inequality is due to condition \ref{item:H}. Furthermore, we also have
\begin{align}\label{eq:zt-1}
    & \quad \E\|z_{t+1, k, m}-z_{k,m}^*\|^2 \nonumber \\
    &= \E\|z_{t, k, m}- \eta_t \nabla_z l_m(x_k,z_{k,m},\varphi_{t, k}) - \nabla z_{k,m}^*\|^2 \nonumber \\
    &= \E\|z_{t, k,m}-z_{k,m}^*\|^2 - 2\eta_t \E \langle z_{t, k,m}-z_{k,m}^*,\nabla_z l_m(x_k,z_{t, k,m},\varphi_{t, k}) \rangle + \eta_t^2 \E\|\nabla_z l_m(x_k,z_{k,m},\varphi_{t, k})\|^2 \nonumber\\ 
    &= \E\|z_{t, k,m}-z_{k,m}^*\|^2 - 2\eta_t \E\langle z_{t, k,m}-z_{k,m}^*,\nabla_z l_m(x_k,z_{t, k,m}\rangle + \eta_t^2 \E\|\nabla_z l_m(x_k,z_{k,m},\varphi_{t, k})\|^2 \nonumber\\
    &\leq \E\|z_{t, k,m}-z_{k,m}^*\|^2 - 2\eta_t \E\langle z_{t, k,m}-z_{k,m}^*,\nabla_z l_m(x_k,z_{t, k,m}) \rangle + \eta_t^2  C_\ell^2 \nonumber\\
    &\leq \E\|z_{t, k,m}-z_{k,m}^*\|^2 - 2\eta_t  \mu_m \E\| z_{t, k,m} - z_{k,m}^* \|^2 + \eta_t^2  C_\ell^2 \nonumber\\
    &= \left( 1 - 2\eta_t \mu_m \right) \E\|z_{t, k,m}-z_{k,m}^*\|^2 + \eta_t^2  C_\ell^2,
\end{align}
where the first equality is due to the nested loop update for $z_{t, k, m}$, the second equality follows from the fact that $\nabla_z l_m(x_k,z_{t, k,m}) = \E \left[ \nabla_z l_m(x_k,z_{t, k,m},\varphi_{t, k}) ~ | ~ x_k,  z_{t, k,m} \right]$,  the first inequality is due to condition \ref{item:moment2}, and the last inequality is due to condition \ref{item:strong convex}. Now, setting $\eta_t=\frac{1}{\mu_m t}$, we have
\begin{equation}\label{eq:zt-2}
    \E\|z_{t+1, k, m}-z_{k,m}^*\|^2 \leq \left(1 - \frac{2}{t} \right) \E\|z_{t, k,m}-z_{k,m}^*\|^2 + \frac{C_\ell^2}{\mu_m^2 t^2}.
\end{equation}
Then, we have the claim \eqref{eq:z-convergence} is true for $t=1$ from \eqref{eq:z1-3}. We also have the claim true for $t=2$, by \eqref{eq:zt-2}. Then, by considering $t\geq 3$ as the base case, we can show the claim \eqref{eq:z-convergence} holds for any $t\in[T]$.
\end{proof}

With the above result, we are ready to present the proofs for Lemmas \ref{lemma:hkm error} and \ref{lemma:hkm error-new}.

\subsection{Proof of Lemma \ref{lemma:hkm error}}\label{app:pf-lemma-1}

\begin{proof}

We first prove \eqref{eq:lem1-2} with
\begin{align}
    &\E_k\|h_{k,m}-\E_k[h_{k,m}]\|^2 \nonumber\\
    &=\E_k\|h_{k,m}\|^2 - \|\E_k[h_{k,m}]\|^2\leq \E_k\|h_{k,m}\|^2 \nonumber\\
    &\leq 2\E_k\|\nabla_x f_m(x_k,z_{k,m},\xi_k)\|^2 + 2\E_k \big[\|\nabla_{xz}^2 l_m(x_k,z_{k,m},\varphi_k')\|^2 \|H^{zz}_{k,m}\|^2 \|\nabla_z f_m(x_k,z_{k,m},\xi_k)\|^2 \big] \nonumber\\
    &\leq  2 C_x^2 + 2 C_{xz}^2 C_z^2 \sigma_{H}^2,
\end{align}
where the last inequality follows from item \ref{item:moment2} along with the independence of $\nabla_{xz}^2 l_m(x_k,z_{k,m},\varphi_k')$, $H^{zz}_{k,m}$ and $\nabla_z f_m(x_k,z_{k,m},\xi_k)$ given $\mathcal{F}_k$.

Next we start to prove \eqref{eq:lem1-1}. From \eqref{eq:z update2}, we have
\begin{align}
    \E_k[h_{k,m}] = \nabla_x f_m(x_k,z_{k,m}) -\nabla_{xz}^2 l_m(x_k,z_{k,m}) \E_k[H^{zz}_{k,m}] \nabla_z f_m(x_k,z_{k,m}).
\end{align}
In the following proof, we write $z_{k,m}^*(x_k)$ as $z_{k,m}^*$.
The above inequality along with \eqref{eq:nabla f_m(x)} implies
\begin{align}\label{eq:idk}
    &\| \E_k[h_{k,m}]-\nabla f_m(x_k) \| \nonumber\\
    &\leq  \| \nabla_x f_m(x_k,z_{k,m}) - \nabla_x f_m(x_k,z_{k,m}^*) \| \nonumber\\
    &~~~~+ \|\nabla_{xz}^2 l_m(x_k,z_{k,m}) \E_k[H^{zz}_{k,m}] \nabla_z f_m(x_k,z_{k,m}) - \nabla_{xz}^2 l_m(x_k,z_{k,m}) [\nabla_{zz}l_m(x_k,z_{k,m})]^{-1}  \nabla_z f_m(x_k,z_{k,m})\| \nonumber\\
    &~~~~+ \|\nabla_{xz}^2 l_m(x_k,z_{k,m}) [\nabla_{zz}l_m(x_k,z_{k,m})]^{-1}  \nabla_z f_m(x_k,z_{k,m})\nonumber\\
    &~~~~~~~~~~~~ - \nabla_{xz}^2 l_m(x_k,z_{k,m}^*) [\nabla_{zz}l_m(x_k,z_{k,m}^*)]^{-1}  \nabla_z f_m(x_k,z_{k,m}^*)\| \nonumber\\
    &\leq l_{fx}\|z_{k,m}-z_{m}^*(x_k)\| + L_{xz} l_z \|\E_k[H_{k,m}^{zz}]-[\nabla_{zz}l_m(x_k,z_{k,m})]^{-1}\| \nonumber\\
    &~~~~+ \big(\frac{L_{xz}l_z}{\mu_m} + \frac{l_{fz}}{\mu_m}+  \frac{l_{fz}l_z l_{zz}}{\mu_m^2} \big)\|z_{k,m}-z_{m}^*(x_k)\|
\end{align}
where the last inequality follows from conditions \ref{item:strong convex}--\ref{item:f lip}.
The inequality \eqref{eq:idk} implies
\begin{align}\label{eq:hkm-nablafmxk}
    \| \E_k[h_{k,m}]-\nabla f_m(x_k) \|^2 
    &\leq  2 L_{xz}^2 l_z^2 \|\E_k[H_{k,m}^{zz}]-[\nabla_{zz}l_m(x_k,z_{k,m})]^{-1}\|^2 \nonumber\\
    &~~+ 2\big(l_{fx}+\frac{L_{xz}l_z}{\mu_m} + \frac{l_{fz}}{\mu_m}+  \frac{l_{fz}l_z l_{zz}}{\mu_m^2}\big)^2\|z_{k,m}-z_{m}^*(x_k)\|^2 \nonumber\\
    &\leq 2 L_{xz}^2 l_z^2 C_H \frac{\alpha_K^2}{\beta_K^2} + 2(l_{fx}+\frac{L_{xz}l_z}{\mu_m} + \frac{l_{fz}}{\mu_m}+  \frac{l_{fz}l_z l_{zz}}{\mu_m^2})^2\|z_{k,m}-z_{m}^*(x_k)\|^2
\end{align}
where the last inequality follows from condition \ref{item:H}\ref{item:H-1} on the quality of $H_{k,m}^{zz}$.
Thus to prove \eqref{eq:lem1-1}, it suffices to prove $\frac{1}{K}\sum_{k=1}^K\E\|z_{k,m}-z_{k,m}^*\|^2  = \mathcal{O}\big(\frac{\alpha_K^2}{\beta_K^2}\big)$ next.

\emph{The convergence of $z_{k,m}$.} 
We start with
\begin{align}\label{eq:zk+1-zk+1*}
    \|z_{k+1,m}-z_{k+1,m}^*\|^2 = \|z_{k+1,m}-z_{k,m}^*\|^2 + 2 \langle z_{k+1,m}- z_{k,m}^*, z_{k,m}^*-z_{k+1,m}^*\rangle + \|z_{k,m}^*-z_{k+1,m}^*\|^2.
\end{align}
The first term is bounded as
\begin{align}\label{eq:zk+1-zk*_}
    &\E_k\|z_{k+1,m}-z_{k,m}^*\|^2 \nonumber\\
    &= \E_k\|z_{k,m}-\beta_k\nabla_z l_m(x_k,z_{k,m},\varphi_k)-z_{k,m}^*\|^2 \nonumber\\
    &= \|z_{k,m}-z_{k,m}^*\|^2 - 2\beta_k \langle z_{k,m}-z_{k,m}^*,\nabla_z l_m(x_k,z_{k,m}) \rangle + \beta_k^2 \E_k\|\nabla_z l_m(x_k,z_{k,m},\varphi_k)\|^2 \nonumber\\
    &\leq (1-2\mu_m\beta_k)\|z_{k,m}-z_{k,m}^*\|^2 + C_l^2\beta_k^2 
\end{align}
where the last inequality follows from condition \ref{item:strong convex} and \ref{item:moment2}.

Under conditions \ref{item:strong convex}--\ref{item:f lip}, it is shown that there exists a constant $L_{z,m}$ such that $z_m^*(x)$ is $L_{z,m}$-lipschitz continuous \citep[Lemma 2.2 (b)]{ghadimi2018approximation}. Let $L_z=\max_{m}L_{z,m}$, then the second term in \eqref{eq:zk+1-zk+1*} can be bounded as
\begin{align}\label{eq:zk+1-zk+1* I2}
    \langle z_{k+1,m}- z_{k,m}^*, z_{k,m}^*-z_{k+1,m}^*\rangle
    &\leq L_z\|z_{k+1,m}- z_{k,m}^*\|\|x_k-x_{k+1}\| \nonumber\\
    &\leq L_z C_y \alpha_k \|z_{k+1,m}- z_{k,m}^*\| \nonumber\\
    &\leq  \frac{\mu_m}{2}\beta_k \|z_{k+1,m}- z_{k,m}^*\|^2 +  \frac{1}{2}L_z^2 C_y^2 \mu_m^{-1} \frac{\alpha_k^2}{\beta_k}
\end{align}
where $C_y=\sup\|Y_k\|=\sup_{x\in\mathbb{R}^d}\|\nabla F(x)\| < \infty$ (by Assumption \ref{assumption:lip}), and the last inequality uses the Young's inequality. 

The last term in \eqref{eq:zk+1-zk+1*} is bounded as
\begin{align}\label{eq:zk+1-zk+1* I3}
    \|z_{k,m}^*-z_{k+1,m}^*\|^2 \leq L_z^2 C_y^2 \alpha_k^2.
\end{align}
Substituting \eqref{eq:zk+1-zk*_}--\eqref{eq:zk+1-zk+1* I3} into \eqref{eq:zk+1-zk+1*} yields
\begin{align}\label{eq:z final}
    \E_k\|z_{k+1,m}-z_{k+1,m}^*\|^2 
    &\leq (1-\mu_m\beta_k)\|z_{k,m}-z_{k,m}^*\|^2 + C_l^2\beta_k^2 +  L_z^2 C_y^2 \mu_m^{-1} \frac{\alpha_k^2}{\beta_k} + L_z^2 C_y^2 \alpha_k^2.
\end{align}

Taking total expectation on both sides, and then telescoping implies that (set $\alpha_k=\alpha_K, \beta_k=\beta_K,\,\forall k$)
\begin{align}
    \frac{1}{K}\sum_{k=1}^K\E\|z_{k,m}-z_{k,m}^*\|^2 = \mathcal{O}\big(\frac{1}{K \beta_K}\big) + \mathcal{O}\big(\beta_K\big) + \mathcal{O}\big(\alpha_K\big) + \mathcal{O}\big(\frac{\alpha_K^2}{\beta_K^2}\big).
\end{align}
The last inequality along with the step size choice in Lemma \ref{lemma:hkm error} implies that
\begin{align}
    \frac{1}{K}\sum_{k=1}^K\E\|z_{k,m}-z_{k,m}^*\|^2  = \mathcal{O}\big(\frac{\alpha_K^2}{\beta_K^2}\big),
\end{align}
which along with \eqref{eq:hkm-nablafmxk} implies \eqref{eq:lem1-1}. This completes the proof of Lemma \ref{lemma:hkm error}.
\end{proof}

\subsection{Proof of Lemma \ref{lemma:hkm error-new}}\label{app:pf-lemma-2}

\begin{proof}
The proof of \eqref{eq:lem2-2} follows exactly the same as in the proof of \eqref{eq:lem1-2} above. For proof of \eqref{eq:lem2-2}, we restate the inequality \eqref{eq:idk} obtained in the proof of \eqref{eq:lem1-1} as
\begin{align}\label{eq:idk-re}
    \| \E_k[h_{k,m}]-\nabla f_m(x_k) \| &\leq l_{fx}\|z_{k,m}-z_{m}^*(x_k)\| + L_{xz} l_z \|\E_k[H_{k,m}^{zz}]-[\nabla_{zz}l_m(x_k,z_{k,m})]^{-1}\| \nonumber\\
    &~~~~+ \big(\frac{L_{xz}l_z}{\mu_m} + \frac{l_{fz}}{\mu_m}+  \frac{l_{fz}l_z l_{zz}}{\mu_m^2} \big)\|z_{k,m}-z_{m}^*(x_k)\|.
\end{align}

The above inequality implies that
\begin{align}\label{eq:hkm-nablafmxk-new}
    \| \E_k[h_{k,m}]-\nabla f_m(x_k) \|^2 
    &\leq  2 L_{xz}^2 l_z^2 \|\E_k[H_{k,m}^{zz}]-[\nabla_{zz}l_m(x_k,z_{k,m})]^{-1}\|^2 \nonumber\\
    &~~+ 2\big(l_{fx}+\frac{L_{xz}l_z}{\mu_m} + \frac{l_{fz}}{\mu_m}+  \frac{l_{fz}l_z l_{zz}}{\mu_m^2}\big)^2\|z_{k,m}-z_{m}^*(x_k)\|^2 \nonumber\\
    &\leq 2 L_{xz}^2 l_z^2 C_H \beta_K+ 2(l_{fx}+\frac{L_{xz}l_z}{\mu_m} + \frac{l_{fz}}{\mu_m}+  \frac{l_{fz}l_z l_{zz}}{\mu_m^2})^2\|z_{k,m}-z_{m}^*(x_k)\|^2
\end{align}
where the last inequality follows from condition \ref{item:H}\ref{item:H-2} on the quality of $H_{k,m}^{zz}$.
Thus to prove \eqref{eq:lem2-1}, it suffices to prove $\E\|z_{k,m}-z_{k,m}^*\|^2  = \mathcal{O}\big(\beta_K\big)$ next. To this end, with $T=\mathcal{O}\left(\frac{1}{\beta_K}\right)$ and using Lemma \ref{lemma:z-convergence}, we have 
\begin{align}
    \E\|z_{k,m}-z_{k,m}^*\|^2 = \E\|z_{T, k,m}-z_{k,m}^*\|^2  \leq \frac{4C^2_\ell}{\mu_m T} = \mathcal{O}(\beta_K)
\end{align}

which along with \eqref{eq:hkm-nablafmxk-new} implies \eqref{eq:lem2-1}. This completes the proof of Lemma \ref{lemma:hkm error-new}.

\end{proof}

\subsection{Algorithm for MoCo with Nested Loop}

In this section we provide the omitted pseudo-code for MoCo with nested loop update described in Section \ref{sec:moco-inexact}. As remarked in Section \ref{sec:moco-inexact}, this algorithm can relate to the actor critic setting with multiple critics, with multiple critic update steps at each iteration.

\begin{algorithm}[h!]
	\caption{MoCo with inexact gradient}\label{alg:MoCo-inexact} 
	\begin{algorithmic}[1]
	    \State \textbf{Input} Initial model parameter $x_0$, tracking parameters $\{y_{0, i}\}_{m=1}^M$, lower level parameter $\{z_{0, i}\}_{m=1}^M$,  convex combination coefficient parameter $\lambda_0$, and their respective learning rates $\{\alpha_k\}_{k=0}^{K}$,  and $\{\beta_k\}_{k=0}^{K}$, $\{\gamma_k\}_{k=0}^{K}$.
	    \For {$k=0, \dots, K-1$}
	        \For {objective $m=1, \dots, M$}
                \State Set $z_{1, k, m} \coloneqq z_{k-1, m}$
                \For {$t=1, \dots, T$}
	               \State Update $z_{t+1, k, m}$ following  \eqref{eq:z update-new}
                \EndFor
                \State Set $z_{k, m} \coloneqq z_{T, k, m}$
	            \State Obtain $h_{k, m}$ following \eqref{eq:z update2}
	            \State Update $y_{k+1,m}$ following \eqref{eq:y update}
	        \EndFor
	    \State Update $\lambda_{k+1}$ and $x_{k+1}$ following  \eqref{eq:lambda update}-\eqref{eq:x update}
	    \EndFor
	    \State \textbf{Output} $x_K$
	\end{algorithmic} 
\end{algorithm}

\section{Proof of Lemma \ref{lemma:lower}}\label{app:pf-lemma-3}
Before we present the main proof, we first introduce the Lemmas \ref{lemma:lambda* lip} and \ref{lemma:frho subopt}, which are  direct consequences of \citep[Theorem 2F.7]{dontchev2009implicit} and \citep[Lemma A.1]{koshal2011multiuser}, respectively.
\begin{Lemma}\label{lemma:lambda* lip}
Under Assumption \ref{assumption:lip} , there exists a constant $L_\lambda \coloneqq \rho^{-1} \sum_{m=1}^ML_m$ such that the following inequality holds
\begin{align}\label{eq:lambda* lip}
    \|\lambda^*_\rho(x)-\lambda^*_\rho(x')\| \leq L_\lambda \|\nabla F(x) - \nabla F(x') \|,
\end{align}
which further indicates
\begin{align}
  \|\lambda^*_\rho(x)-\lambda^*_\rho(x')\| \leq L_{\lambda,F} \|x - x' \|,
\end{align}
where $L_{\lambda,F} \coloneqq \rho^{-1} L$, with $L= \left(\sum\limits_{m=1}^M L_{m}\right)\left(\sum\limits_{m=1}^M L_{m,1}\right)$.
\end{Lemma}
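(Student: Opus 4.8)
The plan is to prove the Lipschitz continuity of $\lambda_\rho^*(x)$ by recognizing \eqref{eq:lambd-det-prob-rho} as a strongly convex quadratic program with a parameter-dependent objective, and then invoking a sensitivity/stability result for the solution of parametric optimization problems. Define $G(x) \coloneqq \nabla F(x)^\top \nabla F(x) \in \mathbb{R}^{M\times M}$, so that the objective in \eqref{eq:lambd-det-prob-rho} reads $\tfrac{1}{2}\lambda^\top(2G(x)+\rho I)\lambda$ over the fixed polyhedral constraint set $\Delta^M$. The key structural fact is that the Hessian $2G(x)+\rho I$ is uniformly positive definite with smallest eigenvalue at least $\rho>0$, independent of $x$; this is exactly what the regularization buys us, and it makes the minimizer unique and strongly stable.

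The main step is to quantify how the unique minimizer $\lambda_\rho^*(x)$ moves as the matrix $G(x)$ changes. I would appeal to \citep[Theorem 2F.7]{dontchev2009implicit}, which gives Lipschitz (in fact, calm/Lipschitzian) dependence of solutions of strongly convex parametric quadratic programs on the problem data, with a Lipschitz constant controlled by the reciprocal of the strong convexity modulus $\rho$. Concretely, treating the linear-plus-quadratic data as the varying parameter and the constraint set $\Delta^M$ as fixed, the theorem yields
\begin{align}
    \|\lambda^*_\rho(x)-\lambda^*_\rho(x')\| \leq \rho^{-1}\,\|G(x)-G(x')\|\,\|\lambda^*_\rho(x')\|
\end{align}
up to the relevant operator norms; bounding $\|\lambda_\rho^*\|\le 1$ on the simplex and controlling $\|G(x)-G(x')\|$ by $\|\nabla F(x)-\nabla F(x')\|$ times a uniform bound $\sum_m L_m$ on $\|\nabla F\|$ (from Assumption \ref{assumption:lip}) gives the first claimed inequality with $L_\lambda = \rho^{-1}\sum_{m=1}^M L_m$. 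For the second inequality, I would compose this with the gradient-Lipschitz half of Assumption \ref{assumption:lip}: since each $\nabla f_m$ is $L_{m,1}$-Lipschitz, $\|\nabla F(x)-\nabla F(x')\|\le (\sum_m L_{m,1})\|x-x'\|$, and substituting yields $L_{\lambda,F}=\rho^{-1}(\sum_m L_m)(\sum_m L_{m,1})=\rho^{-1}L$.

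The hard part will be correctly matching the abstract hypotheses of \citep[Theorem 2F.7]{dontchev2009implicit} to the present setting — in particular verifying that the strong convexity of the (regularized) objective, uniform in $x$, supplies the needed strong regularity condition so that the solution map is single-valued and Lipschitz, and tracking exactly how the Lipschitz modulus depends on $\rho$ and on the operator norm of the data perturbation $G(x)-G(x')$. The subtlety is that the perturbation enters quadratically through $G=\nabla F^\top\nabla F$, so I must use the product-rule bound $\|G(x)-G(x')\|\le \|\nabla F(x)\|\|\nabla F(x)-\nabla F(x')\|+\|\nabla F(x')\|\|\nabla F(x)-\nabla F(x')\|$ together with the uniform gradient bound to linearize the dependence; being careful here is what produces the clean constant $\rho^{-1}\sum_m L_m$ rather than a looser bound. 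Once the perturbation is linearized and the stability theorem is applied, the two displayed inequalities follow by direct substitution.
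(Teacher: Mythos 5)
Your approach matches the paper's: the paper proves this lemma simply by citing \citep[Theorem 2F.7]{dontchev2009implicit}, relying exactly as you do on the uniform $\rho$-strong convexity of the regularized subproblem over the fixed set $\Delta^M$ and on linearizing the perturbation of $G(x)=\nabla F(x)^\top\nabla F(x)$ via the product rule and the uniform gradient bound $\sum_m L_m$. The only caveat is that a careful accounting of the factor of $2$ in the gradient of the quadratic and of the two terms in the bound $\|G(x)-G(x')\|\le(\|\nabla F(x)\|+\|\nabla F(x')\|)\|\nabla F(x)-\nabla F(x')\|$ yields a constant of the form $c\,\rho^{-1}\sum_{m}L_m$ for a small absolute $c$ rather than exactly $\rho^{-1}\sum_{m}L_m$, which is immaterial since only the $\Theta(\rho^{-1})$ scaling is used downstream.
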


\begin{Lemma}\label{lemma:frho subopt}
For any $\rho>0$ and $x\in\mathbb{R}^d$, we have
\begin{align}
    0\leq \|\nabla F(x) \lambda^*_\rho(x)\|^2 - \|\nabla F(x)\lambda^*(x)\|^2 \leq \frac{\rho}{2}\left(1-\frac{1}{M}\right).
\end{align}
\end{Lemma}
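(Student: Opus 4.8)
The plan is to play the two minimizers off against each other through their respective optimality properties, using only that both feasible points lie in the simplex $\Delta^M$. Write $P(\lambda):=\|\nabla F(x)\lambda\|^2$, so that $\lambda^*(x)$ is a minimizer of $P$ over $\Delta^M$ (problem \eqref{eq:lambd-det-prob}) while $\lambda^*_\rho(x)$ is the minimizer of the regularized objective $P(\lambda)+\frac{\rho}{2}\|\lambda\|^2$ over $\Delta^M$ (problem \eqref{eq:lambd-det-prob-rho}).

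For the lower bound I would simply observe that $\lambda^*_\rho(x)$ is feasible for the unregularized problem; since $\lambda^*(x)$ globally minimizes $P$ over $\Delta^M$, this gives $P(\lambda^*_\rho(x))\ge P(\lambda^*(x))$, which is precisely the left inequality $0\le\|\nabla F(x)\lambda^*_\rho(x)\|^2-\|\nabla F(x)\lambda^*(x)\|^2$.

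For the upper bound I would invoke optimality of $\lambda^*_\rho(x)$ for the regularized problem, evaluated at the feasible competitor $\lambda^*(x)$:
\begin{equation*}
P(\lambda^*_\rho(x))+\frac{\rho}{2}\|\lambda^*_\rho(x)\|^2\le P(\lambda^*(x))+\frac{\rho}{2}\|\lambda^*(x)\|^2,
\end{equation*}
and then rearrange to obtain $P(\lambda^*_\rho(x))-P(\lambda^*(x))\le\frac{\rho}{2}\big(\|\lambda^*(x)\|^2-\|\lambda^*_\rho(x)\|^2\big)$. All that remains is to control the gap of squared norms on the simplex.

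The final step, and the only place any computation enters, is the pair of elementary bounds valid for every $\lambda\in\Delta^M$: first $\|\lambda\|^2=\sum_i\lambda_i^2\le\sum_i\lambda_i=1$ (using $0\le\lambda_i\le1$), and second $\|\lambda\|^2\ge 1/M$, the latter because $\|\cdot\|^2$ is minimized over the simplex at the uniform vector $\frac1M\mathbf 1$ (by Cauchy--Schwarz). Applying the first inequality to $\lambda^*(x)$ and the second to $\lambda^*_\rho(x)$ gives $\|\lambda^*(x)\|^2-\|\lambda^*_\rho(x)\|^2\le 1-\frac1M$, and substituting into the rearranged inequality yields the claimed bound $\frac{\rho}{2}\big(1-\frac1M\big)$. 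I do not expect a genuine obstacle here: this is a short optimality/convexity argument, and the only thing to get right is using the sharp extremal values of $\|\cdot\|^2$ on $\Delta^M$ rather than crudely bounding the norm gap by $1$, which would lose the $1/M$ improvement.
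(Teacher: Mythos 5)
Your proof is correct. The paper does not actually write out a proof of this lemma---it states the result as a direct consequence of \citep[Lemma A.1]{koshal2011multiuser}---and your argument (feasibility of $\lambda^*_\rho(x)$ for the unregularized problem for the lower bound, optimality of $\lambda^*_\rho(x)$ for the regularized problem evaluated at $\lambda^*(x)$ for the upper bound, combined with the sharp extremal values $1/M\leq\|\lambda\|^2\leq 1$ on $\Delta^M$) is exactly the standard derivation underlying that cited result.
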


Now we start to prove Lemma \ref{lemma:lower}.
\begin{proof}
Throughout the following proof, we write $\E[\cdot|\mathcal{F}_k]$ as $\E_k[\cdot]$ for conciseness.
With $d(x_k)=\nabla F(x_k) \lambda^*(x_k)$, we have
\begin{align}\label{eq:multi-grad-dp}
    &\|d(x_k) - Y_k \lambda_k\|^2 \nonumber \\ &= \|\nabla F(x_k) \lambda^*(x_k) - \nabla F(x_k)\lambda^*_\rho(x_k) + \nabla F(x_k)\lambda^*_\rho(x_k) - Y_k\lambda^*_\rho(x_k) + Y_k\lambda^*_\rho(x_k) - Y_k\lambda_k \|^2 \nonumber \\
    &\leq 3\| \nabla F(x_k)(x_k) \lambda^*(x_k) - \nabla F(x_k)\lambda^*_\rho(x_k)\|^2 + 3\|\nabla F(x_k)\lambda^*_\rho(x_k) - Y_k \lambda^*_\rho(x_k) \|^2 + 3\|Y_k\lambda^*_\rho(x_k) - Y_k\lambda_k  \|^2 \nonumber\\
    &\leq 3\| \nabla F(x_k) \lambda^*(x_k) - \nabla F(x_k)\lambda^*_\rho(x_k)\|^2 +  3\|\nabla F(x_k) - Y_k\|^2 +  3C_y\|\lambda^*_\rho(x_k) - \lambda_k \|^2.
\end{align}

From \eqref{eq:multi-grad-dp}, to prove Lemma \ref{lemma:lower}, it suffices to show $\| \nabla F(x_k) \lambda^*(x_k) - \nabla F(x_k)\lambda^*_\rho(x_k)\|$ diminishes, and establish the convergence of $Y_k$ and $\lambda_k$.

\textbf{Bounding $\| \nabla F(x_k) \lambda^*(x_k) - \nabla F\lambda^*_\rho(x_k)\|$.} Denoting $\lambda^*(x_k)$ as $\lambda^*_k$ and $\lambda^*_\rho(x_k)$ as $\lambda^*_{\rho,k}$, we first consider the following bound
\begin{align}\label{eq:flmab*-flambhat}
    \| \nabla F(x_k) \lambda^*_k - \nabla F(x_k)\lambda^*_{\rho,k} \|^2 &= \| \nabla F(x_k) \lambda^*_k \|^2 + \|\nabla F(x_k)\lambda^*_{\rho,k} \|^2 - 2\langle \nabla F(x_k) \lambda^*_k , \nabla F(x_k)\lambda^*_{\rho,k} \rangle \nonumber \\
    &\leq \|\nabla F(x_k)\lambda^*_{\rho,k} \|^2 - \| \nabla F(x_k) \lambda^*_k \|^2   \nonumber \\
    &\leq \frac{\rho}{2},
\end{align}
where the first inequality is due to the optimality condition 
$$\langle\lambda, \nabla F(x_k)^\top \nabla F(x_k) \lambda_k^* \rangle \geq \langle \lambda_k^* , \nabla F(x_k)^\top \nabla F(x_k) \lambda_k^* \rangle = \| \nabla F(x_k) \lambda_k^*\|^2 $$
for any $\lambda \in \Delta^M$, and the last inequality is due to Lemma \ref{lemma:frho subopt}. With the choice of $\rho = \Theta(K^{-\frac{1}{6}})$ as required by Theorem \ref{theorem:upper}, we have 
\begin{equation}\label{eq:rho final}
    \| \nabla F(x_k) \lambda^*_k - \nabla F(x_k)\lambda^*_{\rho,k} \|^2 = \mathcal{O}(K^{-\frac{1}{6}}).
\end{equation}

\textbf{Convergence of $Y_k$.} With $\E_k[h_{k,m}] = \E[h_{k,m}|\mathcal{F}_k]$, we start by
\begin{align}\label{eq:yk+1-yk*^2}
    \E_k\|y_{k+1,m}\!-\!\nabla f_m (x_{k+1})\|^2 
    &= \E_k\|y_{k+1,m}\!-\!\nabla f_m (x_k)\|^2 + 2 \E_k\langle y_{k+1,m}\!-\! \nabla f_m (x_k), \nabla f_m (x_k)\!-\!\nabla f_m (x_{k+1})\rangle \nonumber\\
    &+ \E_k\|\nabla f_m (x_k)-\nabla f_m (x_{k+1})\|^2.
\end{align}
 We bound the first term in \eqref{eq:yk+1-yk*^2} as
\begin{align}\label{eq:yk+1-yk*^2 I1-1}
&\E_k\|y_{k+1,m}-\nabla f_m (x_k)\|^2 \nonumber\\
&\leq  \E_k\|y_{k,m}-\beta_k\big(y_{k,m}-h_{k,m}\big)-\nabla f_m (x_k)\|^2 \nonumber\\
&=\|y_{k,m}-\nabla f_m (x_k)\|^2 -2\beta_k\langle y_{k,m}-\nabla f_m (x_k), y_{k,m}-\E_k[h_{k,m}]\rangle + \beta_k^2 \E_k\|y_{k,m}-h_{k,m}\|^2 \nonumber\\
&=\|y_{k,m}-\nabla f_m (x_k)\|^2 -2\beta_k\langle y_{k,m}-\nabla f_m (x_k), y_{k,m}-\nabla f_m (x_k)\rangle \nonumber\\
&~~~~-2\beta_k\langle y_{k,m}-\nabla f_m (x_k), \nabla f_m (x_k)-\E_k[h_{k,m}]\rangle + \beta_k^2 \E_k\|y_{k,m}-h_{k,m}\|^2 \nonumber\\
&\leq (1-2\beta_k)\|y_{k,m}-\nabla f_m (x_k)\|^2 + 2\beta_k \| y_{k,m}-\nabla f_m (x_k)\|\|\nabla f_m (x_k)-\E_k[h_{k,m}]\| + \beta_k^2 \E_k\|y_{k,m}-h_{k,m}\|^2 \nonumber\\
&\leq (1-\beta_k)\|y_{k,m}-\nabla f_m (x_k)\|^2 +  \beta_k\|\nabla f_m (x_k)-\E_k[h_{k,m}]\|^2 +\beta_k^2 \E_k\|y_{k,m}-h_{k,m}\|^2
\end{align}
where the last inequality follows from young's inequality. Now, consider the last term of \eqref{eq:yk+1-yk*^2 I1-1}. Selecting $\beta_k$ such that $3\beta_k^2 \leq \beta_k/2$, and Assumption \ref{assumption:h}, we have
\begin{align}\label{eq:yk+1-yk*^2 I1-2}
     &\beta_k^2\E_k\|y_{k,m}-h_{k,m}\|^2 \nonumber\\
     &=\beta_k^2\E_k\|y_{k,m}-\nabla f_m (x_k)+\nabla f_m (x_k)-\E_k[h_{k,m}]+\E_k[h_{k,m}]-h_{k,m}\|^2 \nonumber\\
     &\leq 3\beta_k^2\|y_{k,m}-\nabla f_m (x_k)\|^2 +3\beta_k^2\|\nabla f_m (x_k)-\E_k[h_{k,m}]\|^2 + 3 \beta_k^2 \E_k\|\E_k[h_{k,m}]-h_{k,m}\|^2\nonumber\\
     &\leq \frac{\beta_k}{2}\|y_{k,m}-\nabla f_m (x_k)\|^2 +3\beta_k^2\|\nabla f_m (x_k)-\E_k[h_{k,m}]\|^2+3\sigma_m^2\beta_k^2 .
\end{align}
Then, plugging in \eqref{eq:yk+1-yk*^2 I1-2} in \eqref{eq:yk+1-yk*^2 I1-1}, we obtain 
\begin{align}\label{eq:yk+1-yk*^2 I1}
    \E_k\|y_{k+1,m}-\nabla f_m (x_k)\|^2 &\leq (1-\frac{1}{2}\beta_k)\|y_{k,m}-\nabla f_m (x_k)\|^2 +\beta_k\|\nabla f_m (x_k)-\E_k[h_{k,m}]\|^2\nonumber\\
&~~~~+3\beta_k^2\|\nabla f_m (x_k)-\E_k[h_{k,m}]\|^2+3\sigma_m^2\beta_k^2. 
\end{align}

The second term in \eqref{eq:yk+1-yk*^2} can be bounded as
\begin{align}\label{eq:yk+1-yk*^2 I2}
     \langle y_{k+1,m}- \nabla f_m (x_k),& \nabla f_m (x_k)-\nabla f_m (x_{k+1})\rangle\nonumber\\ 
     &\leq \|y_{k+1,m}- \nabla f_m (x_k)\| \|\nabla f_m (x_k)-\nabla f_m (x_{k+1})\| \nonumber\\
     &\leq L_{m,1}\|y_{k+1,m}- \nabla f_m (x_k)\| \|x_{k+1}-x_k\| \nonumber\\
     &\leq L_{m,1}C_y \alpha_k \|y_{k+1,m}- \nabla f_m (x_k)\| \nonumber\\
     &\leq \frac{1}{8}\beta_k \|y_{k+1,m}-\nabla f_m (x_k)\|^2 + 2 L_{m,1}^2 C_y^2 \frac{\alpha_k^2}{\beta_k} \nonumber\\
     &\leq \frac{1}{8}\beta_k \|y_{k+1,m}-\nabla f_m (x_k)\|^2 + 2\Bar{L}_1^2 C_y^2 \frac{\alpha_k^2}{\beta_k}
\end{align}
where the second last inequality follows from young's inequality, and the last inequality follows from the definition $\Bar{L}_1=\max_m L_{m,1}$.

The last term in \eqref{eq:yk+1-yk*^2} can be bounded as
\begin{align}\label{eq:yk+1-yk*^2 I3}
    \|\nabla f_m (x_k)-\nabla f_m (x_{k+1})\|^2 \leq \Bar{L}_1^2 \|x_{k+1}-x_k\|^2 \leq \Bar{L}_1^2 C_y^2 \alpha_k^2.
\end{align}
Collecting the upper bounds in \eqref{eq:yk+1-yk*^2 I1}--\eqref{eq:yk+1-yk*^2 I3} and substituting them into \eqref{eq:yk+1-yk*^2} gives
\begin{align}\label{eq:y final}
   \E_k\|y_{k+1,m}-\nabla f_m (x_{k+1})\|^2 &\leq  (1-\frac{1}{4}\beta_k)\|y_{k,m}-\nabla f_m (x_k)\|^2 +(\beta_k+3\beta_k^2)\|\nabla f_m (x_k)-\E_k[h_{k,m}]\|^2\nonumber\\
   &~~+ 3\sigma_m^2\beta_k^2 + \Bar{L}_1^2 C_y^2 \alpha_k^2 + 4\Bar{L}_1^2 C_y^2 \frac{\alpha_k^2}{\beta_k}.
\end{align}
Suppose $\alpha_k=\alpha_K$ and $\beta_k=\beta_K$ are constants given $K$. Taking total expectation and then telescoping both sides of \eqref{eq:y final} gives
\begin{align}\label{eq: y not final}
    \frac{1}{K}\sum_{k=1}^K\E\|y_{k,m}-\nabla f_m (x_k)\|^2 
    &= \mathcal{O}\big(\frac{1}{K\beta_K} \big) + \mathcal{O}\big(\frac{1}{K}\sum_{k=1}^K \E\|\nabla f_m (x_k)-\E_k[h_{k,m}]\|^2 \big) \nonumber\\
    &~~~+\mathcal{O}(\beta_K) + \mathcal{O}\big(\frac{\alpha_K^2}{\beta_K^2}\big)
\end{align}
Along with the choice of step sizes as required by Theorem \ref{theorem:upper}, and due to Assumption \ref{assumption:h}, the last inequality gives 
\begin{equation}
    \frac{1}{K}\sum_{k=1}^K\E\|y_{k,m}-\nabla f_m (x_k)\|^2 = \mathcal{O}\big( K^{-\frac{1}{2}}\big)
\end{equation}
which, based on the definitions of $Y_k$ and $\nabla F(x_k)$, implies that
\begin{align}\label{eq:y final final}
    \frac{1}{K}\sum_{k=1}^K\E\|Y_k-\nabla F(x_k)\|^2 = \mathcal{O}\Big( M K^{-\frac{1}{2}}\Big).
\end{align}

\textbf{Convergence of $\lambda_k$.} We write $\lambda^*_\rho(x_k)$ in short as $\lambda_{\rho, k}^*$ in the following proof. We start by
\begin{align}\label{eq:lambdak+1-lambdak+1*}
    \|\lambda_{k+1}-\lambda_{\rho, k+1}^*\|^2 = \|\lambda_{k+1}-\lambda_{\rho, k}^*\|^2 + 2 \langle \lambda_{k+1}- \lambda_{\rho, k}^*, \lambda_{\rho, k}^*-\lambda_{\rho, k+1}^*\rangle + \|\lambda_{\rho, k}^*-\lambda_{\rho, k+1}^*\|^2.
\end{align}
The first term is bounded as
\begin{align}\label{eq:lambdak+1-lambdak*}
    \|\lambda_{k+1}-\lambda_{\rho, k}^*\|^2
    &= \| \Pi_{\Delta^M}\left(\lambda_k - \gamma_k \left( Y_k^\top Y_k + \rho I\right) \lambda_k \right)-\lambda_{\rho, k}^* \|^2 \nonumber \\
    &\leq \|\lambda_k-\gamma_k \left( Y_k^\top Y_k + \rho I\right) \lambda_k-\lambda_{\rho, k}^*\|^2 \nonumber\\
    &= \|\lambda_k-\lambda_{\rho, k}^*\|^2 - 2\gamma_k \langle\lambda_k-\lambda_{\rho, k}^*, \left( Y_k^\top Y_k + \rho I\right)\lambda_k \rangle + \gamma_k^2 \| \left( Y_k^\top Y_k + \rho I\right)\lambda_k\|^2 \nonumber\\
    &\leq \|\lambda_k-\lambda_{\rho, k}^*\|^2 - 2\gamma_k \langle\lambda_k-\lambda_{\rho, k}^*, \left( Y_k^\top Y_k + \rho I\right) \lambda_k \rangle +(C^2_y + \rho)^2\gamma_k^2 
\end{align}

Consider the second term in the last inequality:
\begin{align}\label{eq:lambd-inner-prod}
    &\langle\lambda_k-\lambda_{\rho, k}^*, \left(Y_k^\top Y_k +\rho I \right) \lambda_k \rangle \nonumber\\
    &=\langle\lambda_k-\lambda_{\rho, k}^*, (Y_k^\top Y_k-\nabla F(x_k)^\top \nabla F(x_k)) \lambda_k \rangle +\langle\lambda_k-\lambda_{\rho, k}^*, \left( \nabla F(x_k)^\top \nabla F(x_k) + \rho I \right) \lambda_k \rangle\nonumber\\
    &\geq -2C_y\|\lambda_k-\lambda_{\rho, k}^*\|\|  Y_k-\nabla F(x_k)\| +\langle\lambda_k-\lambda_{\rho, k}^*, \left(\nabla F(x_k)^\top \nabla F(x_k) +\rho I\right) (\lambda_k-\lambda_{\rho, k}^*)\rangle \nonumber\\
    &~~+ \langle\lambda_k-\lambda_{\rho, k}^*, \left(\nabla F(x_k)^\top \nabla F(x_k) +\rho I\right) \lambda_{\rho, k}^*\rangle\nonumber\\
    &\geq -2C_y\|\lambda_k-\lambda_{\rho, k}^*\|\|Y_k-\nabla F(x_k)\| +\rho\|\lambda_k-\lambda_{\rho, k}^*\|^2 \nonumber\\
    &\geq -2C_y^2 \rho^{-1}\|Y_k-\nabla F(x_k)\|^2 +\frac{\rho}{2}\|\lambda_k-\lambda_{\rho, k}^*\|^2
\end{align}
where the second last inequality follows from the optimality condition that 
$$\langle\lambda_k-\lambda_{\rho, k}^*, \left(\nabla F(x_k)^\top \nabla F(x_k)+\rho I\right) \lambda_{\rho, k}^* \rangle \geq 0,$$ 
and the last inequality in follows from the Young's inequality. 

Plugging in \eqref{eq:lambd-inner-prod} back to \eqref{eq:lambdak+1-lambdak*} gives
\begin{align}\label{eq:lambdak+1-lambdak*_}
     \|\lambda_{k+1}-\lambda_{\rho, k}^*\|^2 = (1-\rho\gamma_k)\|\lambda_k-\lambda_{\rho, k}^*\|^2 + 4 C_y^2 \rho^{-1}\gamma_k \|Y_k-\nabla F(x_k)\|^2 + (C_y^2 + \rho)^2\gamma_k^2.
\end{align}
With Lemma \ref{lemma:lambda* lip}, the second term in \eqref{eq:lambdak+1-lambdak+1*} can be bounded as
\begin{align}\label{eq:lambdak+1-lambdak+1* I2}
    \langle \lambda_{k+1}- \lambda_{\rho, k}^*, \lambda_{\rho, k}^*-\lambda_{\rho, k+1}^*\rangle
    &\leq L_{\lambda,F}\|\lambda_{k+1}- \lambda_{\rho, k}^*\|\|x_k-x_{k+1}\| \nonumber\\
    &\leq L_{\lambda,F} C_y \alpha_k \|\lambda_{k+1}- \lambda_{\rho, k}^*\| \nonumber\\
    &\leq  \frac{\rho}{4}\gamma_k \|\lambda_{k+1}- \lambda_{\rho, k}^*\|^2 +  L_{\lambda,F}^2 C_y^2\rho^{-1} \frac{\alpha_k^2}{\gamma_k},
\end{align}
where the last inequality is due to Young's inequality. The last term in \eqref{eq:lambdak+1-lambdak+1*} is bounded as
\begin{align}\label{eq:lambdak+1-lambdak+1* I3}
    \|\lambda_{\rho, k}^*-\lambda_{\rho, k+1}^*\|^2 \leq L_{\lambda,F}^2 C_y^2\alpha_k^2.
\end{align}
Substituting \eqref{eq:lambdak+1-lambdak*_}--\eqref{eq:lambdak+1-lambdak+1* I3} into \eqref{eq:lambdak+1-lambdak+1*} yields
\begin{align}\label{eq:lambda final}
    \|\lambda_{k+1}-\lambda_{\rho, k+1}^*\|^2 
    &\leq (1-\frac{\rho}{2}\gamma_k)\|\lambda_k-\lambda_{\rho, k}^*\|^2 + 4 C_y^2 \rho^{-1}\gamma_k \|Y_k-\nabla F(x_k)\|^2 + (C_y^2+\rho)^2\gamma_k^2 \nonumber\\
    &~~+ 2 L_{\lambda,F}^2 C_y^2 \rho^{-1} \frac{\alpha_k^2}{\gamma_k} + L_{\lambda,F}^2 C_y^2 \alpha_k^2.
\end{align}
Suppose $\alpha_k=\alpha_K$, $\beta_k=\beta_K$, and $\gamma_k=\gamma_K$ are constants given $K$. Taking total expectation, rearranging and taking telescoping sum on both sides of the last inequality gives
\begin{align}\label{eq:lambda final - 2}
    \frac{1}{K}\sum_{k=1}^K\E\|\lambda_{k}-\lambda_{\rho, k}^*\|^2 
    = \mathcal{O}\left(\frac{1}{K\rho\gamma_K} \right) + \mathcal{O}\left(\frac{1}{\rho^2 K}\sum_{k=1}^K \E\|Y_k - \nabla F(x_k)\|^2 \right)+ \mathcal{O}\left(\frac{\gamma_K}{\rho}\right)\nonumber\\ +\mathcal{O}\left(\frac{\alpha_K^2}{\gamma_K^2\rho^4}\right)+\mathcal{O}\left(\frac{\alpha_K^2}{\gamma_K\rho^3}\right)
\end{align}
where we have used $L_{\lambda, F} = \mathcal{O}(\frac{1}{\rho})$ from Lemma \ref{lemma:lambda* lip}. Then, plugging in the choices $\alpha_K = \Theta(K^{-\frac{3}{4}})$, $\beta_K=\Theta(K^{-\frac{1}{2}})$, $\gamma_K=\Theta(K^{-\frac{1}{3}})$,  $\rho = \Theta(K^{-\frac{1}{6}})$ and substituting from \eqref{eq:y final final} in \eqref{eq:lambda final - 2} gives
\begin{align}\label{eq:lambda final final} 
    \frac{1}{K}\sum_{k=1}^K\E\|\lambda_{k}-\lambda_{\rho, k}^*\|^2 = \mathcal{O}\big( M K^{-\frac{1}{6}}\big).
\end{align}
Thus, from \eqref{eq:multi-grad-dp}, \eqref{eq:rho final}, \eqref{eq:y final final}, and \eqref{eq:lambda final final}, we have
\begin{equation}
    \frac{1}{K}\sum_{k=1}^K\E \|d(x_k) - Y_k \lambda_k\|^2=  \mathcal{O}\big( M K^{-\frac{1}{6}}\big).
\end{equation}
This completes the proof. 
\end{proof}

\section{Proof of Theorem \ref{theorem:upper}} \label{app:pf-thm-2}
Before we go into the main proof, we first show the following key lemma.
\begin{Lemma}\label{lemma:descent direction}
For any $x \in \mathbb{R}^d$ and $\lambda \in \Delta^M$, with $d(x) = \nabla F(x) \lambda^*(x)$, it holds that
\begin{align}
    \langle d(x),\nabla F(x) \lambda \rangle \geq \|d(x)\|^2.
\end{align}

\end{Lemma}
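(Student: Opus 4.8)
The plan is to observe that the claimed inequality is nothing more than the first-order (variational) optimality condition for the MGDA subproblem \eqref{eq:lambd-det-prob}, which defines $\lambda^*(x)$. Recall that $\lambda^*(x) \in \arg\min_{\lambda\in\Delta^M} \|\nabla F(x)\lambda\|^2$, so $\lambda^*(x)$ is a minimizer of the convex quadratic $g(\lambda) \coloneqq \|\nabla F(x)\lambda\|^2 = \lambda^\top Q \lambda$ over the convex set $\Delta^M$, where I abbreviate $Q \coloneqq \nabla F(x)^\top \nabla F(x) \succeq 0$. The whole lemma will fall out of writing this stationarity condition correctly and re-expressing it in terms of $d(x)$.

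The key steps, in order, are as follows. First, fix $x$ and compute the gradient of the objective, $\nabla g(\lambda) = 2 Q \lambda$. Second, since $g$ is convex (because $Q \succeq 0$) and $\Delta^M$ is convex, the necessary optimality condition at the constrained minimizer $\lambda^*(x)$ reads
\begin{align}
    \langle \nabla g(\lambda^*(x)), \lambda - \lambda^*(x) \rangle \geq 0 \quad \text{for all } \lambda \in \Delta^M,
\end{align}
i.e. $\langle Q\lambda^*(x), \lambda \rangle \geq \langle Q\lambda^*(x), \lambda^*(x) \rangle$ after dividing by $2$. Third, I would identify both sides: the right-hand side is $\langle Q\lambda^*(x), \lambda^*(x)\rangle = \|\nabla F(x)\lambda^*(x)\|^2 = \|d(x)\|^2$, while the left-hand side is $\langle Q\lambda^*(x), \lambda\rangle = \langle \nabla F(x)\lambda^*(x), \nabla F(x)\lambda\rangle = \langle d(x), \nabla F(x)\lambda\rangle$. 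Substituting these identifications into the optimality inequality yields exactly $\langle d(x), \nabla F(x)\lambda\rangle \geq \|d(x)\|^2$, which is the claim.

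There is essentially no hard analytic obstacle here; the only point requiring care is that the subproblem \eqref{eq:lambd-det-prob} (unlike its regularized version \eqref{eq:lambd-det-prob-rho}) need not have a unique minimizer, since $Q$ is only positive semidefinite. This is harmless: the variational inequality above holds at \emph{any} minimizer $\lambda^*(x)$, so $d(x) = \nabla F(x)\lambda^*(x)$ is well defined for whichever minimizer is selected and the argument is unaffected by the choice. I would also briefly note that convexity of $g$ is what makes the first-order condition the complete and only ingredient needed, so no second-order or uniqueness reasoning enters the proof.
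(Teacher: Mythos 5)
Your proposal is correct and is essentially the paper's own argument: the paper derives the same first-order variational inequality $\langle \nabla F(x)^\top \nabla F(x)\lambda^*(x),\, \lambda - \lambda^*(x)\rangle \geq 0$ by hand, perturbing $\lambda^*$ along the segment toward $\lambda'$ and letting the step $\alpha \to 0$, and then rearranges exactly as you do. Your observation that uniqueness of the minimizer is irrelevant is also consistent with the paper's treatment.
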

\begin{proof}
We write $d_\lambda (x) = \nabla F(x)\lambda$ in the following proof.
Since $\Delta^M$ is a convex set, for any $\lambda' \in \Delta^M$, we have $\alpha (\lambda'-\lambda^*) + \lambda^* \in \Delta^M$ for any $\alpha \in [0,1]$. Then by $d(x)= \arg\min_{\lambda \in \Delta^M}\|d_\lambda (x)\|^2$, we have
\begin{align}
    \|d(x)\|^2 \leq \|\alpha (d_{\lambda'}(x)-d(x)) + d(x) \|^2.
\end{align}
Expanding the right hand side of the inequality gives
\begin{align}
     \alpha^2 \|d_{\lambda'}(x)-d(x)\|^2 + \alpha \langle d(x), d_{\lambda'}(x)-d(x)\rangle\geq 0.
\end{align}
Since this needs to hold for $\alpha$ arbitrarily close to $0$, we have
\begin{align}
    \langle d(x), d_{\lambda'}(x)-d(x)\rangle \geq 0, ~~~\forall \lambda' \in \Delta^M
\end{align}
which indicates the result inequality by rearranging.
\end{proof}

Now we can prove Theorem \ref{theorem:upper}.
\begin{proof}
By the $L_{m,1}$-smoothness of $f_m$, we have for any $m$,
\begin{align}\label{eq:fixk+1-fixk}
    f_m (x_{k+1}) 
    &\leq f_m (x_k) + \alpha_k \langle \nabla f_m(x_k), -Y_k \lambda_k \rangle + \frac{L_{m,1}}{2} \|x_{k+1}-x_k\|^2 \nonumber\\
    &\leq f_m (x_k) + \alpha_k \langle \nabla f_m(x_k), -Y_k \lambda_k \rangle + \frac{L_{m,1}}{2} C_y^2 \alpha_k^2.
\end{align}
The second term in the last inequality can be bounded as
\begin{align}\label{eq:fixk+1-fixk I2}
    &\langle \nabla f_m(x_k), -Y_k \lambda_k \rangle  \nonumber \\
    &= \langle \nabla f_m(x_k), \nabla F(x_k) \lambda_k^*-Y_k \lambda_k \rangle + \langle \nabla f_m(x_k), -\nabla F(x_k) \lambda_k^* \rangle \nonumber\\
    &\leq L_m\big( \|Y_k-\nabla F(x_k)\| + C_y \|\lambda_k-\lambda^*_{\rho,k}\| + \| \nabla F(x_k) \lambda_k^* - \nabla F(x_k)\lambda^*_{\rho,k}\| \big) + \langle \nabla f_m(x_k), -\nabla F(x_k) \lambda_k^* \rangle \nonumber\\
    &\leq L_m\big( \|Y_k-\nabla F(x_k)\| + C_y \|\lambda_k-\lambda^*_{\rho,k}\| + \| \nabla F(x_k) \lambda_k^* - \nabla F(x_k)\lambda^*_{\rho,k}\| \big) - \|\nabla F(x_k) \lambda_k^*\|^2,
\end{align}
where the last inequality follows from Lemma \ref{lemma:descent direction} by letting $\nabla F(x_k) \lambda=\nabla f_m(x_k)$, and the first inequality follows from
\begin{align}
    &\langle\nabla f_m(x_k), \nabla F(x_k) \lambda_k^*-Y_k \lambda_k \rangle \nonumber \\
    &\leq L_m\|\nabla F(x_k) \lambda_k^*-Y_k \lambda_k \| \nonumber\\
    &\leq L_m\|\nabla F(x_k) \lambda_k- Y_k \lambda_k + \nabla F(x_k)\lambda^*_{\rho,k}-\nabla F(x_k) \lambda_k + \nabla F(x_k) \lambda_k^* - \nabla F(x_k)\lambda^*_{\rho,k}\| \nonumber\\
    &\leq L_m\big( \|Y_k-\nabla F(x_k)\| + C_y \|\lambda_k-\lambda^*_{\rho,k}\| + \| \nabla F(x_k) \lambda_k^* - \nabla F(x_k)\lambda^*_{\rho,k}\| \big).
\end{align}
Plugging \eqref{eq:fixk+1-fixk I2} into \eqref{eq:fixk+1-fixk}, taking expectation on both sides and rearranging yields
\begin{align}
     \alpha_k\E\|\nabla F(x_k) \lambda_k^*\|^2 
     &\leq \E[f_m (x_{k}) -  f_m (x_{k+1})] + L_m\alpha_k\big( \E\|Y_k-\nabla F(x_k)\| + C_y \E\|\lambda_k-\lambda^*_{\rho,k}\| \nonumber \\  
     &~~~+ \| \nabla F(x_k) \lambda^*_k - \nabla F(x_k)\lambda^*_{\rho,k} \|\big) + \frac{L_{m,1}}{2} C_y^2 \alpha_k^2.
\end{align}
For all $k$, let $\alpha_k = \alpha_K$, $\beta_k=\beta_K$, and $\gamma_k=\gamma_K$ be constants given $K$. Then, taking telescope sum on both sides of the last inequality gives
\begin{align}
    \frac{1}{K}\sum_{k=1}^K \E\|\nabla F(x_k) \lambda_k^*\|^2  
    &\leq \frac{1}{\alpha_K K}(f_m(x_1) - \inf f_m(x)) + L_m  \frac{1}{K}\sum_{k=1}^K\big( \E\|Y_k-\nabla F(x_k)\| + C_y \E\|\lambda_k-\lambda^*_{\rho,k}\|\nonumber \\ &~~ + \| \nabla F(x_k) \lambda^*_k - \nabla F(x_k)\lambda^*_{\rho,k} \|\big)+ \frac{L_{m,1}}{2} C_y^2 \alpha_K \nonumber \\
    &= \mathcal{O}\left(\frac{1}{\alpha_K K}\right)  + \mathcal{O}\left(\frac{M^{\frac{1}{2}}}{K^{\frac{1}{2}}\beta_K^{\frac{1}{2}}} \right) +\mathcal{O}\left(M^{\frac{1}{2}}\beta_K^{\frac{1}{2}}\right) + \mathcal{O}\left(\frac{M^{\frac{1}{2}}\alpha_K}{\beta_K}\right) \nonumber\\
    &+ \mathcal{O}\left(\frac{1}{K^{\frac{1}{2}}\rho\gamma_K^{\frac{1}{2}}} \right) + \mathcal{O}\left(\frac{M^{\frac{1}{2}}}{\rho K^{\frac{1}{2}}\beta_K^{\frac{1}{2}}} \right) +\mathcal{O}\left(\frac{M^{\frac{1}{2}}\beta_K^{\frac{1}{2}}}{\rho} \right) + \mathcal{O}\left(\frac{M^{\frac{1}{2}}\alpha_K}{\rho \beta_K}\right) + \mathcal{O}\left(\frac{\gamma_K^{\frac{1}{2}}}{\rho^{\frac{1}{2}}}\right)  \nonumber \\
    &+\mathcal{O}\left(\frac{\alpha_K}{\gamma_K\rho^2}\right)+\mathcal{O}\left(\frac{\alpha_K}{\gamma_K^{\frac{1}{2}}\rho^{\frac{3}{2}}}\right) + \mathcal{O}\left( \rho^{\frac{1}{2}} \right) + \mathcal{O}(\alpha_K)
\end{align}

where the last equality is due to \eqref{eq:flmab*-flambhat}, \eqref{eq: y not final}, \eqref{eq:lambda final - 2}, along with Jensen's inequality for concave functions (i.e., for any concave function $g(x)$, $\E\left[g(x)\right] \leq g\left(\E[x]\right)$) and the subadditivity of square root function. Then, by choosing $\alpha_K = \Theta(K^{-\frac{3}{4}})$, $\beta_K=\Theta(K^{-\frac{1}{2}})$, $\gamma_K=\Theta(K^{-\frac{1}{3}})$,  and $\rho = \Theta(K^{-\frac{1}{6}})$, we arrive at $\frac{1}{K}\sum_{k=1}^K\E\|\nabla F(x_k) \lambda_k^*\|^2=\mathcal{O}(M^{\frac{1}{2}}K^{-\frac{1}{12}})$. 
\end{proof}

\section{Proof of Theorem \ref{theorem:upper-fbound}}
\begin{proof}
Recall from \eqref{eq:fixk+1-fixk}, by the $L_{m,1}$-smoothness of $f_m$, we have for any $m$,
\begin{align}
     f_m (x_{k+1}) 
    &\leq f_m (x_k) + \alpha_k \langle \nabla f_m(x_k), -Y_k \lambda_k \rangle + \frac{L_{m,1}}{2} C_y^2 \alpha_k^2.   
\end{align}
Multiplying both sides by $\lambda^m_k$ and summing over all $m\in[M]$, we obtain
\begin{align}\label{eq:fklambdak-init}
    F(x_{k+1}) \lambda_k \leq F(x_k)\lambda_k + \alpha_k \langle \nabla F(x_{k}) \lambda_k, -Y_k \lambda_k \rangle + \frac{\Bar{L}_1}{2} C_y^2 \alpha_k^2, 
\end{align}
where we have used $\lambda_k:=(\lambda^1_k, \lambda^2_k, \dots, \lambda^m_k, \dots, \lambda^M_k)^\top$ and $\Bar{L}_1=\max_m L_{m,1}$. We can bound the second term of \eqref{eq:fklambdak-init} as
\begin{align}\label{eq:inner-prod-nablafkyk}
    \langle \nabla F(x_k)\lambda_k, -Y_k\lambda_k \rangle 
    &= \langle \nabla F(x_k), -\nabla F(x_k) + \nabla F(x_k) - Y_k \rangle \nonumber \\
    &\leq -\|\nabla F(x_k)\lambda_k\|^2 + \frac{1}{2}\| \nabla F(x_k)\lambda_k\|^2 + \frac{1}{2}\| \nabla F(x_k) - Y_k \|^2 \nonumber \\
    &= -\frac{1}{2}\| \nabla F(x_k) \lambda_k\|^2 + \frac{1}{2}\| \nabla F(x_k) - Y_k \|^2,
\end{align}
where the first inequality is due to Cauchy-Schwartz and Young's inequalities. Substituting \eqref{eq:inner-prod-nablafkyk} in \eqref{eq:fklambdak-init} and rearranging, we have
\begin{align}
    \frac{\alpha_k}{2}\| \nabla F(x_k) \lambda_k \|^2 \leq F(x_k)\lambda_k - F(x_{k+1})\lambda_k + \frac{\alpha_k}{2}\| \nabla F(x_k) - Y_k \|^2 + \frac{\Bar{L}_1}{2} \alpha^2_k C_y^2.
\end{align}
For all $k$, let $\alpha_k = \alpha_K$, $\beta_k=\beta_K$, and $\gamma_k=\gamma_K$ be constants given $K$. We then take total expectation on both sides and sum over iterations to obtain
\begin{align} \label{eq:fklambdak-expect}
    \frac{\alpha_K}{2}\sum\limits_{k=1}^K \E\| \nabla F(x_k) \lambda_k \|^2 \leq \sum\limits_{k=1}^K \E \left[ F(x_k)\lambda_k - F(x_{k+1})\lambda_k \right] + \frac{\alpha_K}{2} \sum\limits_{k=1}^K \E \|\nabla F(x_k) - Y_k \|^2 + \frac{\Bar{L}_1}{2} \alpha_K^2 K C_y^2.
\end{align}
We bound the first term on the right-hand side of the inequality \eqref{eq:fklambdak-expect} as
\begin{align}\label{eq:fk+1lambdak-fklambdak}
    \sum\limits_{k=1}^K \E \left[ F(x_k)\lambda_k - F(x_{k+1})\lambda_k \right]
    &= \E\left[\sum\limits_{k=1}^{K-1}  F(x_{k+1}) (\lambda_{k+1} - \lambda_k) + F(x_1)\lambda_1 - F(x_{K+1})\lambda_k \right]\nonumber\\
    &\leq \E\left[\sum\limits_{k=1}^{K-1} \|F(x_{k+1})\|\| \lambda_{k+1} - \lambda_k \| + \|F(x_1)\|\|\lambda_1\| + \|F(x_{K+1})\|\|\lambda_k\| \right] \nonumber \\
    &\leq F\sum\limits_{k=1}^{K-1} \| \gamma_K Y_k^\top Y_k \lambda_k \| + 2 F \nonumber \\
    &\leq F C^2_y (K-1) \gamma_K + 2F,
\end{align}
where the first inequality is due to Cauchy-Schwartz, the second inequality is due to the bounds on $F(x_k)$, $\lambda_k$ and we have used the update for $\lambda_k$ for all $k\in[K]$ with $\rho=0$, and the last inequality is due to the bound on $Y_k$ and $\lambda_k$ for all $k\in[K]$. Substituting \eqref{eq:fk+1lambdak-fklambdak} in \eqref{eq:fklambdak-expect} and dividing both sides by $\frac{\alpha_K K}{2}$, we have
\begin{align} \label{eq:nablafk}
 \frac{1}{K}\sum\limits_{k=1}^K \E\| \nabla F(x_k) \lambda_k \|^2 
    &\leq 2 F C^2_y \frac{(K-1)}{K} \frac{\gamma_K}{\alpha_K} + 4F\frac{1}{\alpha_K K} + \frac{1}{K} \sum\limits_{k=1}^K \E \|\nabla F(x_k) - Y_k \|^2 + \Bar{L}_1\alpha_K C_y^2
\end{align}
which, along with \eqref{eq: y not final} and choosing $\alpha_K=\Theta(K^{-\frac{3}{5}})$, $\beta_K=\Theta(K^{-\frac{2}{5}})$, and $\gamma_K=\Theta(K^{-1})$, we obtain
\begin{equation}
 \frac{1}{K}\sum\limits_{k=1}^K \E\| \nabla F(x_k) \lambda_k \|^2 = \mathcal{O}(M K^{-\frac{2}{5}}).
\end{equation}
The result then follows by observing that for any $k\in[K]$, we have
\begin{equation}
    \|\nabla F(x_k)\lambda_k\|^2 \geq \min\limits_{\lambda}\|\nabla F(x_k)\lambda\|^2 = \|\nabla F(x_k)\lambda_k^*\|^2.
\end{equation}    
\end{proof}

\section{Proof of Theorem \ref{theorem:upper-fbound-stronger}}

\begin{proof}
\textbf{Convergence of $Y_k$.} We begin the proof by revisiting the convergence analysis on $Y_k$, under the assumptions considered in Theorem \ref{theorem:upper-fbound-stronger}. For convenience, we restate \eqref{eq:yk+1-yk*^2} here as
\begin{align}\label{eq:yk+1-yk*^2-new}
    \E_k\|y_{k+1,m}\!-\!\nabla f_m (x_{k+1})\|^2 
    &= \E_k\|y_{k+1,m}\!-\!\nabla f_m (x_k)\|^2 + 2 \E_k\langle y_{k+1,m}\!-\! \nabla f_m (x_k), \nabla f_m (x_k)\!-\!\nabla f_m (x_{k+1})\rangle \nonumber\\
    &+ \E_k\|\nabla f_m (x_k)-\nabla f_m (x_{k+1})\|^2.
\end{align}
 We bound the first term in \eqref{eq:yk+1-yk*^2-new} similar to  that in \eqref{eq:yk+1-yk*^2 I1}, as
\begin{align}\label{eq:yk+1-yk*^2 I1-new}
    \E_k\|y_{k+1,m}-\nabla f_m (x_k)\|^2 &\leq \Big(1-\frac{1}{2}\beta_k\Big)\|y_{k,m}-\nabla f_m (x_k)\|^2 +\beta_k\|\nabla f_m (x_k)-\E_k[h_{k,m}]\|^2\nonumber\\
&~~~~+3\beta_k^2\|\nabla f_m (x_k)-\E_k[h_{k,m}]\|^2+3\sigma_m^2\beta_k^2. 
\end{align}

The second term in \eqref{eq:yk+1-yk*^2-new} can be bounded as
\begin{align}\label{eq:yk+1-yk*^2 I2-new}
     \langle y_{k+1,m}- \nabla f_m (x_k),& \nabla f_m (x_k)-\nabla f_m (x_{k+1})\rangle\nonumber\\ 
     &\leq \|y_{k+1,m}- \nabla f_m (x_k)\| \|\nabla f_m (x_k)-\nabla f_m (x_{k+1})\| \nonumber\\
     &\leq L_{m,1}\|y_{k+1,m}- \nabla f_m (x_k)\| \|x_{k+1}-x_k\| \nonumber\\
     &\leq L_{m,1} \alpha_k \|y_{k+1,m}- \nabla f_m (x_k)\|\| Y_k\lambda_k \| \nonumber\\
     &\leq \frac{1}{8}\beta_k \|y_{k+1,m}-\nabla f_m (x_k)\|^2 + 2 L_{m,1}^2  \frac{\alpha_k^2}{\beta_k} \| Y_k\lambda_k \|^2 \nonumber\\
     &\leq \frac{1}{8}\beta_k \|y_{k+1,m}-\nabla f_m (x_k)\|^2 + 2\Bar{L}_1^2 \frac{\alpha_k^2}{\beta_k} \| Y_k\lambda_k \|^2
\end{align}
where the third inequality is due to the $x_k$ update, the second last inequality follows from young's inequality, and the last inequality follows from the definition $\Bar{L}_1=\max_m L_{m,1}$.

The last term in \eqref{eq:yk+1-yk*^2-new} can be bounded as
\begin{align}\label{eq:yk+1-yk*^2 I3-new}
    \|\nabla f_m (x_k)-\nabla f_m (x_{k+1})\|^2 \leq \Bar{L}_1^2 \|x_{k+1}-x_k\|^2 \leq \Bar{L}_1^2 \alpha_k^2 \| Y_k\lambda_k \|^2.
\end{align}
Collecting the upper bounds in \eqref{eq:yk+1-yk*^2 I1-new}--\eqref{eq:yk+1-yk*^2 I3-new} and substituting them into \eqref{eq:yk+1-yk*^2-new} gives
\begin{align}\label{eq:y final-new}
   \E_k\|y_{k+1,m}-\nabla f_m (x_{k+1})\|^2 &\leq  (1-\frac{1}{4}\beta_k)\|y_{k,m}-\nabla f_m (x_k)\|^2 +(\beta_k+3\beta_k^2)\|\nabla f_m (x_k)-\E_k[h_{k,m}]\|^2\nonumber\\
   &~~+ 3\sigma_m^2\beta_k^2 + \left( \Bar{L}_1^2 \alpha_k^2 + 4\Bar{L}_1^2  \frac{\alpha_k^2}{\beta_k}\right) \| Y_k\lambda_k \|^2.
\end{align}
For all $k$, let $\alpha_k = \alpha_K$, $\beta_k=\beta_K$, and $\gamma_k=\gamma_K$ be constants given $K$. Then, taking total expectation and then telescoping both sides of \eqref{eq:y final-new} gives
\begin{align}\label{eq: y not final-new-0}
    \frac{1}{K}\sum_{k=1}^K\E\|y_{k,m}-\nabla f_m (x_k)\|^2 
    &= \mathcal{O}\big(\frac{1}{K\beta_K} \big) + \mathcal{O}\big(\frac{1}{K}\sum_{k=1}^K \E\|\nabla f_m (x_k)-\E_k[h_{k,m}]\|^2 \big) \nonumber\\
    &~~~+\mathcal{O}(\beta_K) + \left( \Bar{L}_1^2 \frac{\alpha_K^2}{\beta_K} + 4\Bar{L}_1^2  \frac{\alpha_K^2}{\beta_K^2}\right) \frac{1}{K}\sum_{k=1}^K\E \| Y_k\lambda_k \|^2
\end{align}
Summing the last inequality over all objectives $m\in[M]$, and using Assumption \ref{assumption:h-new}, we obtain
\begin{align}\label{eq: y not final-new}
    \frac{1}{K}\sum_{k=1}^K\E\|Y_k-\nabla F (x_k)\|^2 
    &= \mathcal{O}\big(\frac{M}{K\beta_K} \big) + \mathcal{O}(M \beta_K) + \left( M\Bar{L}_1^2 \alpha_K^2 + 4 M \Bar{L}_1^2  \frac{\alpha_K^2}{\beta_K}\right) \frac{1}{K}\sum_{k=1}^K\E \| Y_k\lambda_k \|^2,
\end{align}
where we have used $\sum\limits_{m=1}^M \|y_{k,m}-\nabla f_m (x_k)\|^2 \geq \|Y_k-\nabla F (x_k)\|^2$. Then with the decomposition
\begin{equation}\label{eq:the-bound}
    \|Y_k\lambda_k\|^2 \leq 2\|Y_k - \nabla F(x_k)\|^2 + 2\| \nabla F(x_k)\lambda_k \|^2,
\end{equation}
we can arrive at 
\begin{align}\label{eq:y not final final-new}
    \frac{1}{K}\sum_{k=1}^K\E\|Y_k-\nabla F (x_k)\|^2 
    &= \mathcal{O}\big(\frac{M}{K\beta_K} \big) + \mathcal{O}(M \beta_K) + \left( 2M\Bar{L}_1^2 \alpha_K^2 + 8 M \Bar{L}_1^2  \frac{\alpha_K^2}{\beta_K}\right) \frac{1}{K}\sum_{k=1}^K\E \| \nabla F(x_k) \lambda_k \|^2 \nonumber\\
    &+\left( 2M\Bar{L}_1^2 \alpha^2 + 8 M \Bar{L}_1^2  \frac{\alpha_K^2}{\beta_K}\right) \frac{1}{K}\sum_{k=1}^K\E \|Y_k - \nabla F(x_k)\|^2.
\end{align}
Now, note that with the choice of stepsize $\alpha_K \leq \frac{1}{4 \bar{L} \sqrt{M}}\beta_K$, $~~0<\beta_K < 1$, and $\alpha_K$ and $\beta_K$ are on the same time scale, there exist some constant $0<C_1<1$ and valid choice of $\beta_K$ such that the following inequality holds
\begin{equation}\label{eq:ineq-c1}
    1 - 2M\Bar{L}_1^2 \alpha_K^2 - 8 M \Bar{L}_1^2  \frac{\alpha_K^2}{\beta_K} \geq C_1. 
\end{equation}
An example for a constant that satisfy the last inequality for aforementioned choice of $\alpha_K$ and $\beta_K$ is $C_1=\frac{1}{4}$. Then, from \eqref{eq:y not final final-new} and \eqref{eq:ineq-c1}, we can arrive at
\begin{equation}\label{eq:y final final-new}
    \frac{1}{K}\sum_{k=1}^K\E\|Y_k-\nabla F (x_k)\|^2 
    = \mathcal{O}\big(\frac{M}{K\beta_K} \big) + \mathcal{O}(M \beta_K) + \left( \frac{2M\Bar{L}_1^2}{C_1} \alpha_K^2 + \frac{8 M \Bar{L}_1^2}{C_1}  \frac{\alpha_K^2}{\beta_K}\right) \frac{1}{K}\sum_{k=1}^K\E \| \nabla F(x_k) \lambda_k \|^2
\end{equation}
Next, we analyse the $x_k$ sequence. For this purpose, we follow along similar lines as in the proof of Theorem \ref{theorem:upper-fbound}. Accordingly, from \eqref{eq:fixk+1-fixk}, we have for any $m$,
\begin{align}
     f_m (x_{k+1}) 
    &\leq f_m (x_k) + \alpha_k \langle \nabla f_m(x_k), -Y_k \lambda_k \rangle + \frac{L_{m,1}}{2} C_y^2 \alpha_k^2.   
\end{align}
Multiplying both sides by $\lambda^m_k$ and summing over all $m\in[M]$, we obtain
\begin{align}\label{eq:fklambdak-init-new}
    F(x_{k+1}) \lambda_k \leq F(x_k)\lambda_k + \alpha_k \langle \nabla F(x_{k}) \lambda_k, -Y_k \lambda_k \rangle + \frac{\Bar{L}_1}{2} C_y^2 \alpha_k^2, 
\end{align}
where we have used $\lambda_k:=(\lambda^1_k, \lambda^2_k, \dots, \lambda^m_k, \dots, \lambda^M_k)^\top$ and $\Bar{L}_1=\max_m L_{m,1}$. We can bound the second term of \eqref{eq:fklambdak-init-new} as
\begin{align}\label{eq:inner-prod-nablafkyk-new}
    \langle \nabla F(x_k)\lambda_k, -Y_k\lambda_k \rangle 
    &= \langle \nabla F(x_k)\lambda_k, -\nabla F(x_k)\lambda_k + \nabla F(x_k)\lambda_k - Y_k\lambda_k \rangle \nonumber \\
    &\leq -\|\nabla F(x_k)\lambda_k\|^2 + \langle \nabla F(x_k)\lambda_k, \nabla F(x_k)\lambda_k - Y_k\lambda_k \rangle \nonumber \\
    &\leq -\|\nabla F(x_k)\lambda_k\|^2 + \frac{1}{2}\| \nabla F(x_k)\lambda_k\|^2 + \frac{1}{2}\| \nabla F(x_k) - Y_k\lambda_k \|^2 \nonumber \\
    &\leq -\frac{1}{2}\| \nabla F(x_k) \lambda_k\|^2 + \frac{1}{2}\| \nabla F(x_k) - Y_k \|^2,
\end{align}
where the second inequality is due to Cauchy-Schwartz and Young's inequalities, and the last inequality is due to bound on $\lambda_k$.  Substituting \eqref{eq:inner-prod-nablafkyk-new} in \eqref{eq:fklambdak-init-new} and rearranging, we have
\begin{align}
    \frac{\alpha_k}{2}\| \nabla F(x_k) \lambda_k \|^2 \leq F(x_k)\lambda_k - F(x_{k+1})\lambda_k + \frac{\alpha_k}{2}\| \nabla F(x_k) - Y_k \|^2 + \frac{\Bar{L}_1}{2} \alpha^2 C_y^2.
\end{align}
For all $k$, let $\alpha_k = \alpha_K$, $\beta_k=\beta_K$, and $\gamma_k=\gamma_K$ be constants given $K$.  We then take total expectation on both sides and sum over iterations to obtain 
\begin{align} \label{eq:fklambdak-expect-new}
    \frac{\alpha_K}{2}\sum\limits_{k=1}^K \E\| \nabla F(x_k) \lambda_k \|^2 \leq \sum\limits_{k=1}^K \E \left[ F(x_k)\lambda_k - F(x_{k+1})\lambda_k \right] + \frac{\alpha_K}{2} \sum\limits_{k=1}^K \E \|\nabla F(x_k) - Y_k \|^2 + \frac{\Bar{L}_1}{2} \alpha_K^2 K C_y^2.
\end{align}
We bound the first term on the right-hand side of the inequality \eqref{eq:fklambdak-expect-new} as
\begin{align}\label{eq:fk+1lambdak-fklambdak-new}
    \sum\limits_{k=1}^K \E \left[ F(x_k)\lambda_k - F(x_{k+1})\lambda_k \right]
    &= \E\left[\sum\limits_{k=1}^{K-1}  F(x_{k+1}) (\lambda_{k+1} - \lambda_k) + F(x_1)\lambda_1 - F(x_{K+1})\lambda_k \right]\nonumber\\
    &\leq \E\left[\sum\limits_{k=1}^{K-1} \|F(x_{k+1})\|\| \lambda_{k+1} - \lambda_k \| + \|F(x_1)\|\|\lambda_1\| + \|F(x_{K+1})\|\|\lambda_k\| \right] \nonumber \\
    &\leq F\sum\limits_{k=1}^{K-1} \| \gamma_K Y_k^\top Y_k \lambda_k \| + 2 F \nonumber \\
    &\leq F C_y\sum\limits_{k=1}^{K} \gamma_K \|Y_k\lambda_k\|+ 2F,
\end{align}
where the first inequality is due to Cauchy-Schwartz, the second inequality is due to the bounds on $F(x_k)$, $\lambda_k$ and we have used the update for $\lambda_k$ for all $k\in[K]$ with $\rho=0$, and third inequality is due to the bound on $Y_k$ for all $k\in[K]$. Substituting \eqref{eq:fk+1lambdak-fklambdak-new} in \eqref{eq:fklambdak-expect-new} and dividing both sides by $\frac{\alpha_K K}{2}$, we have
\begin{align} \label{eq:nablafk-new}
 &\frac{1}{K}\sum\limits_{k=1}^K \E\| \nabla F(x_k) \lambda_k \|^2  \nonumber\\
    &\leq 2 F C_y \frac{1}{K} \frac{\gamma_K}{\alpha_K}\sum\limits_{k=1}^K \E \|Y_k\lambda_k\| + 4F\frac{1}{\alpha_K K} + \frac{1}{K} \sum\limits_{k=1}^K \E \|\nabla F(x_k) - Y_k \|^2 + \Bar{L}_1\alpha_K C_y^2 \nonumber\\
    &\leq  2F^2 C^2_y \frac{\gamma_K^2}{\alpha_K^2} + \frac{1}{4K} \sum\limits_{k=1}^K \E \|Y_k\lambda_k\|^2 + 4F\frac{1}{\alpha_K K} + \frac{1}{K} \sum\limits_{k=1}^K \E \|\nabla F(x_k) - Y_k \|^2 + \Bar{L}_1\alpha_K C_y^2 \nonumber\\
    &\leq 2F^2 C^2_y \frac{\gamma_K^2}{\alpha_K^2} + \frac{3}{2 K} \sum\limits_{k=1}^K \E \|\nabla F(x_k) - Y_k \|^2 + \frac{1}{2K}\sum\limits_{k=1}^K \E\| \nabla F(x_k) \lambda_k \|^2 + 4F\frac{1}{\alpha_K K} + \Bar{L}_1\alpha_K C_y^2 \nonumber\\
    &= \mathcal{O}\big(\frac{\gamma_K^2}{\alpha_K^2} \big) + \mathcal{O}\big(\frac{M}{K\beta_K} \big) + \mathcal{O}(M \beta_K) + \mathcal{O}(\frac{1}{\alpha_K K}) + \mathcal{O}(\alpha_K) \nonumber\\
    &+ \left( \frac{1}{2} + \frac{3M\Bar{L}_1^2}{C_1} \alpha_K^2 + \frac{12 M \Bar{L}_1^2}{C_1}  \frac{\alpha_K^2}{\beta_K}\right) \frac{1}{K}\sum_{k=1}^K\E \| \nabla F(x_k) \lambda_k \|^2
\end{align}
where the last equality is by substituting from \eqref{eq:y final final-new}. Now, with a similar argument that we made in \eqref{eq:ineq-c1}, given some $C_1$, there exist some constant $0<C_2<1$ such that 
\begin{equation}
    \frac{1}{2} - \frac{3M\Bar{L}_1^2}{C_1} \alpha_K^2 - \frac{12 M \Bar{L}_1^2}{C_1}  \frac{\alpha_K^2}{\beta_K} \geq C_2.
\end{equation}
Feasible choices of $C_1, C_2$ would be $C_1=C_2=\frac{1}{4}$. Then, we can have
\begin{align}
    \frac{1}{K}\sum\limits_{k=1}^K \E\| \nabla F(x_k) \lambda_k \|^2 = \mathcal{O}\big(\frac{\gamma_K^2}{\alpha_K^2} \big) + \mathcal{O}\big(\frac{M}{K\beta_K} \big) + \mathcal{O}(M \beta_K) + \mathcal{O}(\frac{1}{\alpha_K K}) + \mathcal{O}(\alpha_K),
\end{align}
which along with the choice of step sizes $\alpha_K=\Theta(K^{-\frac{1}{2}})$, $\beta_K=\Theta(K^{-\frac{1}{2}})$, $\gamma_K=\Theta(K^{-\frac{3}{4}})$, we arrive at
\begin{equation}
 \frac{1}{K}\sum\limits_{k=1}^K \E\| \nabla F(x_k) \lambda_k \|^2 = \mathcal{O}(M K^{-\frac{1}{2}}).
\end{equation}
The result then follows by observing that for any $k\in[K]$, we have
\begin{equation}
    \|\nabla F(x_k)\lambda_k\|^2 \geq \min\limits_{\lambda}\|\nabla F(x_k)\lambda\|^2 = \|\nabla F(x_k)\lambda_k^*\|^2.
\end{equation}    
\end{proof}

\begin{figure}[t]
     \centering
     \begin{subfigure}[b]{0.32\textwidth}
         \centering
         \includegraphics[trim={1cm 0 2cm 0},clip, width=\textwidth]{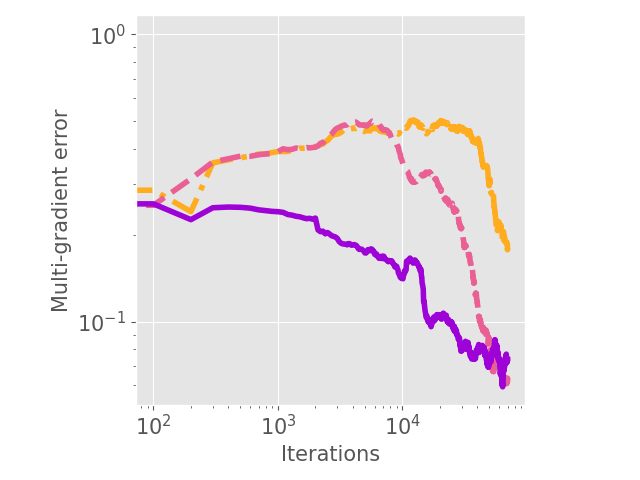}
         \label{fig:err-norm-traj1}
     \end{subfigure}
     \begin{subfigure}[b]{0.32\textwidth}
         \centering
         \includegraphics[trim={1cm 0 2cm 0},clip, width=\textwidth]{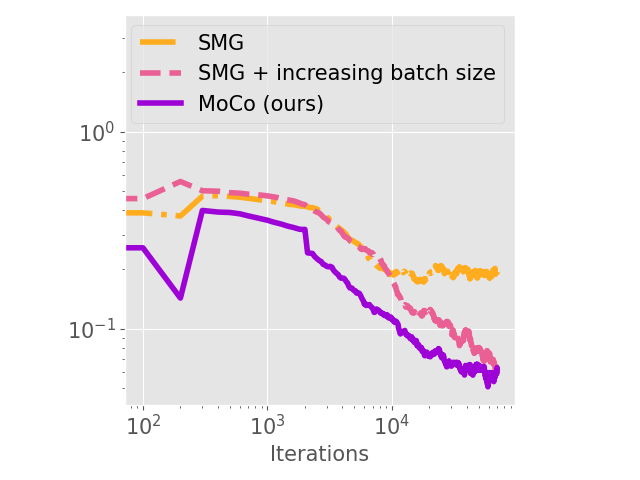}
         \label{fig:err-norm-traj2}
     \end{subfigure}
     \begin{subfigure}[b]{0.32\textwidth}
         \centering
         \includegraphics[trim={1cm 0 2cm 0},clip, width=\textwidth]{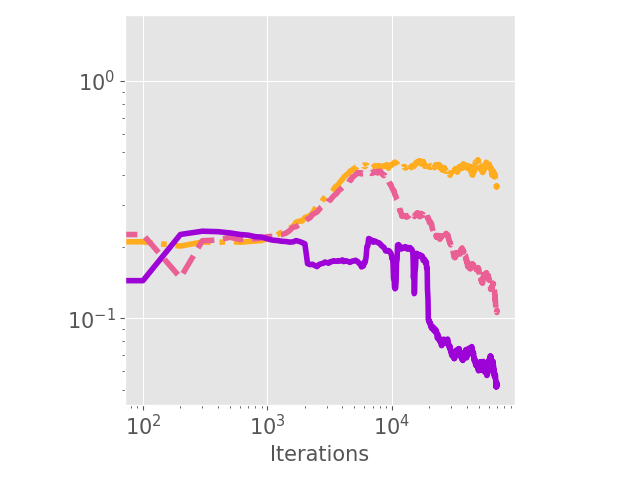}
         \label{fig:err-norm-traj3}
     \end{subfigure}\\
     \begin{subfigure}[b]{0.32\textwidth}
         \centering
         \includegraphics[trim={1cm 0 2cm 0},clip, width=\textwidth]{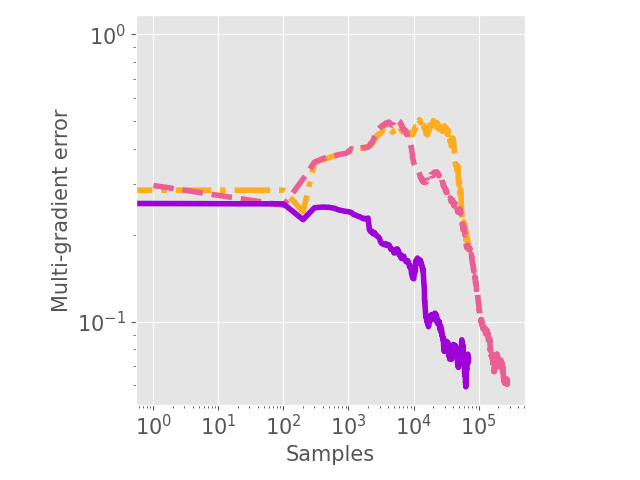}
         \caption{Trajectory 1}
         \label{fig:err-norm-traj1-sam}
     \end{subfigure}
     \begin{subfigure}[b]{0.32\textwidth}
         \centering
         \includegraphics[trim={1cm 0 2cm 0},clip, width=\textwidth]{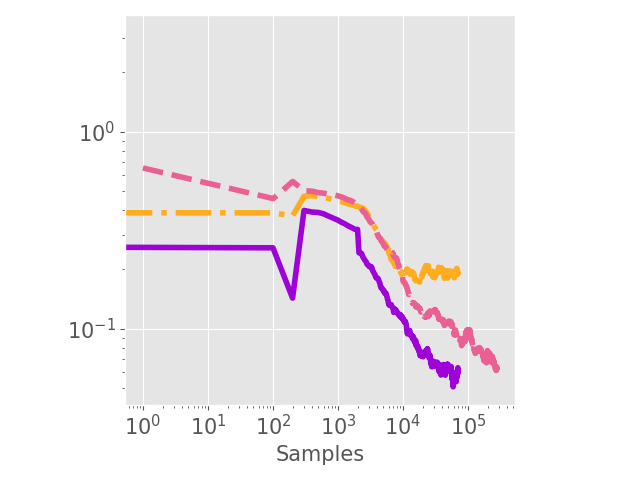}
         \caption{Trajectory 2}
         \label{fig:err-norm-traj2-sam}
     \end{subfigure}
     \begin{subfigure}[b]{0.32\textwidth}
         \centering
         \includegraphics[trim={1cm 0 2cm 0},clip, width=\textwidth]{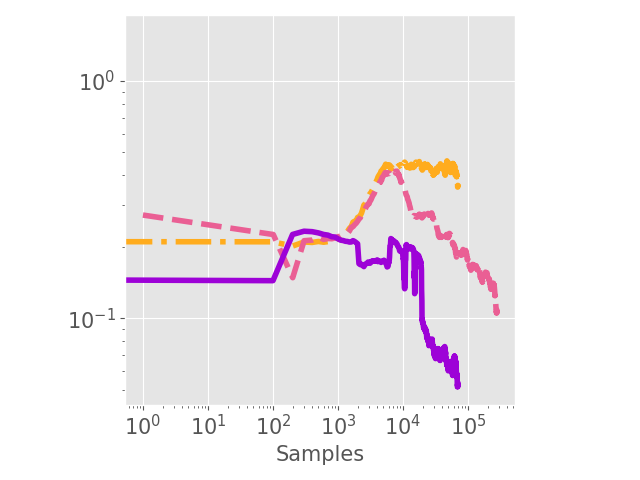}
         \caption{Trajectory 3}
         \label{fig:err-norm-traj3-sam}
     \end{subfigure}
        \caption{Comparison of multi-gradient error}
        \label{fig:mg-error-comp}
        \vspace{-0.2cm}
\end{figure}

\section{Details of experiments} \label{app:experiments}
In this section, we describe the omitted details of experiments in the main paper.

\subsection{Toy example}\label{app:toy}
In order to illustrate the advantages of our algorithm, we use a toy example introduced by \citep{liu2021conflict}. The example consists of optimizing two objectives $f_1(x)$ and $f_2(x)$ with $x=(x_1, x_2)^\top \in \mathbb{R}^2$, given by
\begin{align*}
    f_1(x) = p_1(x) q_1(x) + p_2(x) r_1(x)\\
    f_2(x) = p_1(x) q_2(x) + p_2(x) r_2(x)
\end{align*}
where we define
\begin{align*}
    q_1(x) &= \log \left( | 0.5 \max\left(-x_1 -7) - \tanh (-x_2)|, 0.000005\right)\right) + 6 ,\\
    q_2(x) &= \log \left( | 0.5 \max\left(-x_1 -7) - \tanh (-x_2) + 2|, 0.000005\right)\right) + 6 ,\\
    r_1(x) &= \left(\left( -x_1 + 7 \right)^2 + 0.1 \left(-x_1 - 8\right)^2\right)/10 - 20,\\
    r_2(x) &= \left(\left( -x_1 - 7 \right)^2 + 0.1 \left(-x_1 - 8\right)^2\right)/10 - 20,\\
    p_1(x) &= \max\left(\tanh\left( 0.5 x_2 \right), 0 \right),\\
    p_2(x) &= \max\left( \tanh \left(-0.5 x_2\right), 0\right).
\end{align*}

\textbf{Generation of Figure \ref{fig:toy-comp}.}
For generating the trajectories in Figure \ref{fig:toy-comp} we use 3 initializations $$x_0\in\{(-8.5, 7.5), (-8.5, 5), (10, -8)\},$$ and run each algorithm for 70000 iterations. For all the algorithms, we use the initial learning rate of $0.001$, exponentially decaying at a rate of $0.05$. In this example for  MoCo, we use $\beta_k=5/k^{0.5}$, where $k$ is the number of iterations.

\textbf{Generation of Figure \ref{fig:front-comp}.} For the comparison of MOO algorithms in objective space depicted in Figure \ref{fig:front-comp}, we use 5 initializations $$x_0\in\{(-8.5, 7.5), (-8.5, 5), (10, -8), (0, 0), (9, 9)\}.$$ The optimization configurations for each algorithm is similar to that of the aforementioned trajectory example, except with initial learning rate of $0.0025$. 

\textbf{Comparison with SMG with growing batch size.} 
For the comparison of the multi-gradient bias among SMG, SMG with increasingly large batch size, and MoCo, we use the norm of the error of the stochastic multi-gradient calculated using the three trajectories randomly initialized from $x_0\in\{(-8.5, 7.5), (-8.5, 5), (10, -8)\}$. For calculating the bias of the multi-gradient, we compute the multi-gradient using 10 sets of gradient samples at each point of the trajectory, take the average and record the norm of the difference between the computed average and true multi-gradient. All three methods are run for 70000 iterations, and follow the same optimization configuration used for Figure \ref{fig:front-comp}. For SMG with increasing batch size, we increase the number of samples in the minibatch used for estimating the gradient by one every 10000 iterations. We report the bias of the multi-gradient with respect to the number of iterations and also number of samples in Figure \ref{fig:mg-error-comp}. In the figure, Trajectories 1, 2, and 3 correspond to initializations $(-8.5, 7.5)$, $(-8.5, 5)$, and $(10, -8)$, respectively. It can be seen that our method performs comparable to SMG with increasing batch size, but with fewer samples. Furthermore, SMG  has non-decaying bias in some trajectories.



\subsection{Supervised learning}\label{app:sl}

\begin{figure}[t]
     \centering
     \begin{subfigure}[b]{0.32\textwidth}
         \centering
         \includegraphics[width=\textwidth]{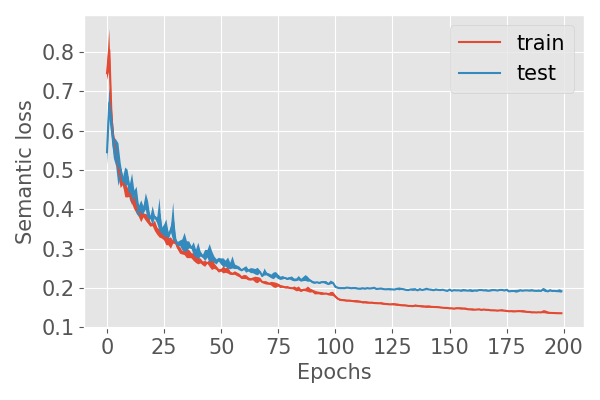}
         \caption{Semantic loss}
         \label{fig:semantic-loss-cityscapes}
     \end{subfigure}
     \begin{subfigure}[b]{0.32\textwidth}
         \centering
         \includegraphics[width=\textwidth]{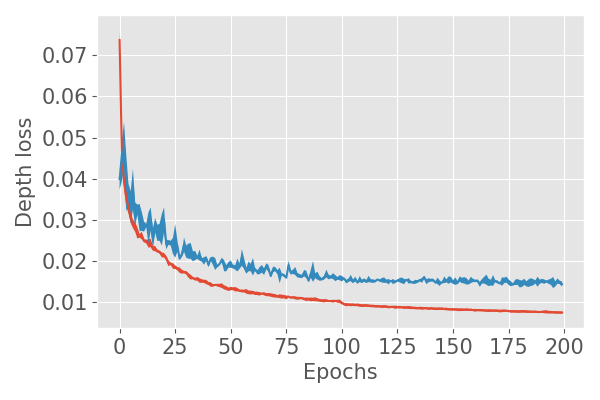}
         \caption{Depth loss}
         \label{fig:depth-loss-cityscapes}
     \end{subfigure}
        \caption{Training and test loss for the Cityscapes tasks}
        \label{fig:loss-cityscapes}
        \vspace{-0.2cm}
\end{figure}

\begin{figure}[t]
     \centering
     \begin{subfigure}[b]{0.32\textwidth}
         \centering
         \includegraphics[width=\textwidth]{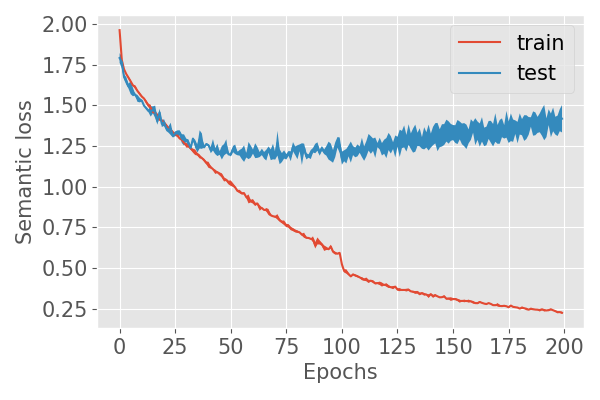}
         \caption{Semantic loss}
         \label{fig:semantic-loss-nyuv2}
     \end{subfigure}
     \begin{subfigure}[b]{0.32\textwidth}
         \centering
         \includegraphics[width=\textwidth]{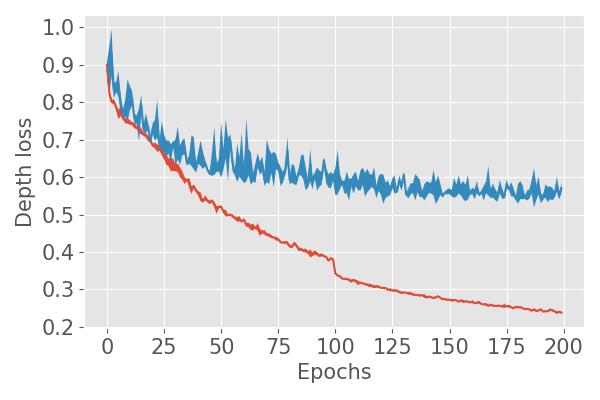}
         \caption{Depth loss}
         \label{fig:depth-loss-nyuv2}
     \end{subfigure}
     \begin{subfigure}[b]{0.32\textwidth}
         \centering
         \includegraphics[width=\textwidth]{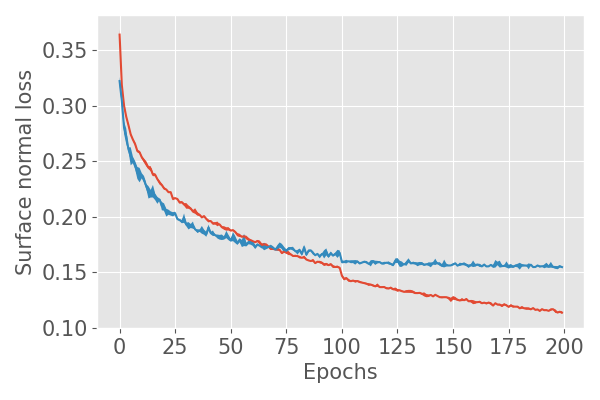}
         \caption{Surface normal loss}
         \label{fig:surf-norm-loss-nyuv2}
     \end{subfigure}
        \caption{Training and test loss for the NYU-v2 tasks}
        \label{fig:loss-nyu-v2}
        \vspace{-0.2cm}
\end{figure}

In this section we provide additional details and experiments on Cityscapes and NYU-v2 datasets, and also provide experiment results on two additional datasets Office-31 and Office-home.  Each experiment consists of solving multiple supervised learning problems related to each dataset. We consider each such task as an objective, which then can be simultaneously optimized by a gradient based MOO algorithm. We describe details on formulating each task as an objective next.

\textbf{Problem formulation.} 
First we look into the problem formulation of NYU-v2 and Cityscapes experiments.
For NYU-v2 dataset, 3 tasks are involved: pixel-wise 13-class classification, pixel-wise depth estimation, and pixel-wise surface normal estimation. In Cityscapes experiments, there are 2 tasks involved: pixel-wise 7-class classification and pixel-wise depth estimation. The following problem formulation applies for both NYU-v2 and Cityscapes tasks, excpet for the surface normal estimation task which only relates to NYU-v2 dataset. Let the set of images be $\{U\}_{i=1}^N$, pixel-wise class labels be $\{T_1\}_{i=1}^N$, pixel-wise depth values be $\{T_2\}_{i=1}^N$, and pixel-wise surface normal values be $\{T_3\}_{i=1}^N$, where $N$ is the number of training data samples. Let $x$ be the model that we train to perform all the tasks simultaneously. Let the image dimension be $P\times Q$. We will use $T_m$ for ground truth and $\hat{T}_m$ for the corresponding prediction by the model $x$, where $m\in[M]$ and $M$ is the number of tasks. We can now formulate the corresponding objective for each task, as given in \citep{liu2019end}. The objective for pixel-wise classification is pixel-wise cross-entropy, which is given as
\begin{align*}
    f_1(x) = -\frac{1}{NPQ}\sum_{i,p,q}T_{1,i}(p, q)\log \hat{T}_{1,i}(p, q) 
\end{align*}
where $i\in [N]$, $p\in [P]$, and $q\in [Q]$. Similarly, we can have the objectives for pixel-wise depth estimation and surface normal estimation, respectively, as
\begin{align*}
    f_2(x)  = \frac{1}{NPQ}\sum_{i,p,q} | T_{1,i}(p, q) - \hat{T}_{1,i}(p, q) | \quad \text{and}~~~
    f_3(x)  = \frac{1}{NPQ}\sum_{i,p,q} T_{1,i}(p, q) \cdot \hat{T}_{1,i}(p, q),
\end{align*}
where $\cdot$ is the elementwise dot product. With these objectives, we can formulate the problem \eqref{eq:prob-form} for Cityscapes and NYU-v2 tasks, with $f_3$ only used in the latter.

Similar to NYU-v2 and Cityscapes experiments, we can also formulate the supervised learning tasks on Office-31 and Office-home MTL as an instance of problem \eqref{eq:prob-form}. 

 \begin{table}[tb]
\small
    \centering
    \setlength{\tabcolsep}{1.0em} 
{\renewcommand{\arraystretch}{1.3}
    \begin{tabular}{c c c c c c}
    \hline
         \multirow{3}{*}{Method} & \multicolumn{2}{c}{Segmentation} & \multicolumn{2}{c}{Depth} & \multirow{3}{*}{$\Delta m\% \downarrow$}  \\ \cline{2-3}\cline{4-5}
         & \multicolumn{2}{c}{(Higher Better)} & \multicolumn{2}{c}{(Lower Better)} &\\
         & mIoU & Pix Acc & Abs Err & Rel Err & \\ \hline\hline
         Independent &            74.01       & 93.16             & 0.0125 & 27.77 & -  \\ \hline
         PCGrad \citep{yu2020gradient} &              75.13       & 93.48          & 0.0154 & 42.07 &  18.29\\
         CAGrad \citep{liu2021conflict} &             75.16       & 93.48          & 0.0141 & 37.60 &     11.64\\ \hline
         \textbf{MoCo (ours)} & 75.42 & 93.55 & 0.0149 & 34.19 &  9.90\\ 
         \textbf{PCGrad + MoCo} &              75.49       & 93.62          & 0.0146 & 46.07 &  20.07\\
         \textbf{CAGrad + MoCo} &             75.07       & 93.39          & 0.0137 & 36.78 &     10.12\\ \hline
    \end{tabular}}
    \vspace{5 pt}
    \caption{MoCo with existing gradient manipulation MTL algorithms for Cityscapes dataset tasks. Results are averaged over 3 independent runs.}
    \label{tab:cityscape-moco-plus-results}
            \vspace{-0.2cm}
\end{table}

\textbf{Cityscapes dataset.}
For implementing evaluation Cityscapes dataset, we follow the experiment set up used in \citep{liu2021conflict}. All the MTL algorithms considered are trained using a SegNet \citep{badrinarayanan2017segnet} model with attention mechanism MTAN \citep{liu2019end} applied on top of it for different tasks. All the MTL methods in comparison are trained for 200 epochs, using a batch size of  8. We use Adam as the optimizer with a learning rate of 0.0001 for the first 100 epochs, and with a learning rate of 0.00005 for the rest of the epochs. Following \citep{liu2021conflict} for each method in comparison we report the average test performance of the model over last 10 epochs, averaged over 3 seeds. For implementing MoCo in Cityscapes dataset we use $\beta_k=0.05/k^{0.5}$, $\gamma_k=0.1/k^{0.5}$, where $k$ is the iteration number. For the projection to simplex in the $\lambda_k$ update, we use a softmax function. The training and test loss curves for semantic segmentation and depth estimation are shown in Figures \ref{fig:semantic-loss-cityscapes} and \ref{fig:depth-loss-cityscapes} respectively.

\begin{table}[tb]
\vspace{-0.2cm}
\small
    \centering
    \setlength{\tabcolsep}{0.3em} 
{\renewcommand{\arraystretch}{1.4}
    \begin{tabular}{c c c c c c c c c c c}
    \hline
         \multirow{3}{*}{Method} & \multicolumn{2}{c}{Segmetation} & \multicolumn{2}{c}{Depth} & \multicolumn{5}{c}{Surface Normal} & \multirow{3}{*}{$\Delta m \% \downarrow$} \\ \cline{2-3}\cline{4-5}\cline{6-10}
         & \multicolumn{2}{c}{(Higher Better)} & \multicolumn{2}{c}{(Lower Better)} & \multicolumn{2}{c}{\makecell{Angle Distance \\ (Lower Better)}} & \multicolumn{3}{c}{\makecell{Within $t^\circ$ \\ (Higher better)}}\\
         & mIoU & Pix Acc & Abs Err & Rel Err & Mean & Median & 11.25 & 22.5 & 30\\ \hline\hline
         Independent            & 38.30 & 63.76 & 0.6754 & 0.2780 & 25.01 & 19.21 & 30.14 & 57.20 & 69.15 & - \\ \hline
         PCGrad \citep{yu2020gradient}           & 38.06 & 64.64 & 0.5550 & 0.2325 & 27.41 & 22.80 & 23.86 & 49.83 & 63.14 & 3.97 \\
         CAGrad \citep{liu2021conflict}     & 39.79 & 65.49 & 0.5486 & 0.2250 & 26.31 & 21.58 & 25.61 & 52.36 & 65.58 & 0.20\\\hline
         \textbf{MoCo (ours)}     & 40.30  & 66.07  & 0.5575  & 0.2135 & 26.67  & 21.83  & 25.61  & 51.78 & 64.85 &  0.16\\
         \textbf{PCGrad + MoCo}  & 38.80 & 65.02 & 0.5492 & 0.2326 & 27.39 & 22.75 & 23.64 & 49.89 & 63.21 & 3.62 \\
         \textbf{CAGrad + MoCo}  & 39.58 & 65.49 & 0.5535 & 0.2292 & 25.97 & 20.86 & 26.84 & 53.79 & 66.65 & -0.97\\\hline
    \end{tabular}}
    \vspace{5 pt}
    \caption{MoCo with existing gradient manipulation MTL algorithms for NYU-v2 dataset tasks. Results are averaged over 3 independent runs. }
    \label{tab:nyu-v2-moco-plus-results}
            \vspace{-0.4cm}
\end{table}

\textbf{NYU-v2 dataset.}
For NYU-v2 dataset, we follow the same setup as Cityscapes dataset, except with a batch size of 2. For implementing MoCo in NYU-v2 experiments, we use $\beta_k=0.99$, $\gamma_k=0.1$ with gradient normalization followed by weighting each gradient with corresponding task loss. This normalization was applied to avoid biasing towards one task, as can be seen is the case for MGDA. 
For the projection to simplex in the $\lambda_k$ update in MoCo, we apply softmax function to the update, to improve computational efficiency. The training and test loss curves for semantic segmentation, depth estimation, and surface normal estimation for NYU-v2 dataset are shown in Figures \ref{fig:semantic-loss-nyuv2},  \blue{\ref{fig:depth-loss-nyuv2}}, and \blue{\ref{fig:surf-norm-loss-nyuv2}} respectively. It can be seen that the model start to slightly overfit to the training data set with respect to the semantic loss after the 100th epoch. However this did not significantly harm the test performance in terms of accuracy compared to the other algorithms.

\begin{table}[th]
\small
    \centering
    \setlength{\tabcolsep}{1.0em} 
{\renewcommand{\arraystretch}{1.3}
    \begin{tabular}{c c c c c}
    \hline
         \multirow{2}{*}{Method} & \multicolumn{3}{c}{Domain} & \multirow{2}{*}{$\Delta m\%$}\\ \cline{2-4}
         & Amazon & DSLR & Webcam &\\
         \hline
         Mean & 84.22 & 94.81 & 97.04 & -\\
         MGDA & 79.60 & 96.45 & 97.96 & 0.94\\
         PCGrad & 84.10 & 95.08 & 96.30 & 0.21\\
         GradDrop  & \textbf{84.73} & 96.17 & 96.85 & -0.61\\
         CAGrad  & 84.22 & 94.26 & 97.41 & 0.07\\
         \textbf{MoCo (ours)} & 84.33 & \textbf{97.54} & \textbf{98.33} & \textbf{-1.45}\\ \hline
    \end{tabular}}
    \vspace{5 pt}
    \caption{Results on Office-31 dataset. We show the 31-class classification results over 3 domains on Office-31 data set. The results are obtained from the epoch with the best validation performance.}
    \label{tab:office-31-results}
                \vspace{-0.4cm}
\end{table}

\begin{table}[t]
\small
    \centering
    \setlength{\tabcolsep}{1.0em} 
{\renewcommand{\arraystretch}{1.3}
    \begin{tabular}{c c c c c c}
    \hline
         \multirow{2}{*}{Method} & \multicolumn{4}{c}{Domain} & \multirow{2}{*}{$\Delta m\%$}\\ \cline{2-5}
         & Art & Clipart & Product & Real-World\\
         \hline
         Mean & \textbf{63.88} & 77.90 & 89.55 &  79.39 & -\\
         MGDA & 63.63 & 73.78 & 90.18 & \textbf{79.82} & 1.11\\
         PCGrad & 63.06 & 77.46 & 89.09 & 78.70 & 0.81\\
         GradDrop & 63.82 & 78.22 & 89.19 & 78.88 & 0.18\\
         CAGrad & 63.06 & 77.03 & 89.62 & 79.53 & 0.54\\
         \textbf{MoCo (ours)} & 63.38 & \textbf{79.41} & \textbf{90.25} & 78.70 & \textbf{-0.27}\\ \hline
    \end{tabular}}
    \vspace{5 pt}
    \caption{Multi-task learning results on Office-Home dataset. We show the 65-class classification results over 4 domains on Office-Home data set. For each method the results are obtained from the epoch with the best validation performance.}
    \label{tab:office-home-results}
                \vspace{-0.8cm}
\end{table}

\textbf{MoCo with existing MTL algorithms} We apply the gradient correction introduced in MoCo on top of existing MTL algorithms to further improve the performance. Specifically, we apply the gradient correction of MoCo for PCGrad and CAGrad on Cityscapes and NYU-v2. 
For the gradient correction (update step \eqref{eq:y update}) in PCGrad we use $\beta_k=0.99$ for both Cityscapes and NYU-v2 datasets, and for that in CAGrad we use $\beta_k=0.99$ for Cityscapes dataset and $\beta_k=0.99/k^{0.5}$ for NYU-v2 dataset. The results are shown in Tables \ref{tab:cityscape-moco-plus-results} and \ref{tab:nyu-v2-moco-plus-results} for Cityscapes and NYU-v2 datasets, respectively. We restate the results for independent task performance and original MTL algorithm performance for reference. It can be seen that the gradient correction improves the performance of the algorithm which only use stochastic gradients. For PCGrad where no explicit convex combination coefficient computation for gradients is involved, there is an improvement of $\Delta m\%$ for NYU-v2 by $0.35\%$. For Cityscapes, it can be seen a slight degradation in terms of $\Delta m\%$, in exchange for improvement in 3 out of 4 performance metrics. This can be expected as PCGrad does not explicitly control the converging point to be closer to the average loss. For CAGrad which explicitly computes dynamic convex combination coefficients for gradients using stochastic gradients such that it converges closer to a point that perform well in terms of average task loss, there is an improvement of $\Delta m\%$ for Cityscapes by $1.52\%$ and that for NYU-v2 by $1.17\%$. This suggests that incorporating the gradient correction of MoCo in existing gradient based MTL algorithms also boosts their performance. 

In addition to the experiments described above, we demonstrate the performance of MoCo in comparison with other MTL algorithms using Office-31 \citep{saenko2010adapting} and Office-home\citep{venkateswara2017deep} datasets. Both of these datasets consist of images of several classes belonging to different domains. We use the method "Mean" as the baseline for $\Delta m\%$, instead of independent task performance. The Mean baseline is the method where the average of task losses on each domain is used as the single objective optimization problem. For reporting per domain performance for all the methods compared in Office-13 and Office-home experiments, the test performance at the epoch with highest average validation accuracy (across domains) is used for each independent run. This performance measure is then averaged over three independent runs.

\textbf{Office-31 dataset.}
 The dataset  consists of three classification tasks on 4,110 images collected from three domains: Amazon, DSLR, and Webcam, where each domain has 31 object categories. 
 For implementing experiments using Office-home and Office-31 datasets,  we use the experiment setup and implementation given by LibMTL framework \citep{lin2022libmtl}. The MTL algorithms are implemented using hard parameter sharing architecture, with ResNet18 backbone. As per the implementation in \citep{lin2022libmtl}, 60\% of the total dataset is used for training, 20\% for validation, and the rest 20\% for testing. All methods in comparison are run for 100 epochs. For MoCo implementation, we use $\beta_k=0.5/k^{0.5}$ and $\gamma_k=0.1/k^{0.5}$. We report the test performance of best performing model based on validation accuracy after each epoch, averaged over 3 seeds. The results are given in Table \ref{tab:office-31-results}. It can be seen that MoCo significantly outperforms other methods in most taks, and also in terms of $\Delta m\%$.

\textbf{Office-home dataset.} 
This dataset  consists of four classification tasks over 15,500 labeled images on four domains; Art: paintings, sketches and/or artistic depictions, Clipart: clipart images, Product: images without background and Real-World: regular images captured with a camera. Each domain has 65 object categories. 
We follow the same experiment setup as Office-31, and for MoCo implementation, we use $\beta_k=0.5/k^{0.5}$ and $\gamma_k=0.1/k^{0.5}$. The results are given in Table \ref{tab:office-home-results}. It can be seen that MoCo significantly outperforms other methods in most taks, and also in terms of $\Delta m\%$.






In the Table \ref{tab:hp-summary} we summarize the hyper-parameters choices used for MoCo in each of the experiments we have reported in this paper.

\begin{table}[th]
    \centering
    \setlength{\tabcolsep}{1.0em} 
{\renewcommand{\arraystretch}{1.3}
    \begin{tabular}{|c| c | c | c| c | c|}
    \hline
         & Cityscapes & NYU-v2 & Office-31 & Office-home & MT10\\
         \hline
         optimizer of $x_k$ & Adam & Adam & Adam & Adam & Adam\\
         $x_k$ learning rate ($\alpha_k$) & 
         0.0001&
              0.0001
         &0.0001 & 0.0001 & 0.0003\\
         $Y_k$ learning rate ($\beta_k$) & $0.05/k^{0.5}$ & $0.99$ & $0.5/k^{0.5}$ & $0.5/k^{0.5}$ & 0.99\\
         $\lambda_k$ learning rate ($\gamma_k$) & $0.1/k^{0.5}$ & $0.1$ & $0.1/k^{0.5}$ & $0.1/k^{0.5}$ & 10\\
         batch size  & 8 & 2 & 64 & 64 & 1280\\
         training epochs & 200 & 200 & 100 & 100 & 2 million steps\\ \hline
    \end{tabular}
    \vspace{5 pt}
    \caption{Summary of hyper-parameter choices for MoCo in each experiment}
    \label{tab:hp-summary}
                \vspace{-0.4cm}}
\end{table}

\end{document}